\newtheorem{thmdef}{Definition}
\newtheorem{thmlem}{Lemma}
\newtheorem{thmthm}{Theorem}
\newtheorem{thmappthm}{Theorem}
\newtheorem{thmasmp}{Assumption}
\newtheorem{thmappdef}{Definition}
\newtheorem{thmappasmp}{Assumption}
\newtheorem{thmapplem}{Lemma}
\def\E{\mathbb{E}}
\def\cX{\mathcal X}
\def\cY{\mathcal Y}
\def\cH{\mathcal H}
\def\cS{\mathcal S}
\def\cF{\mathrm{G}}
\def\cR{\mathcal{R}}
\def \R{\mathbb{R}}
\def \epehe{\epsilon_{\text{PEHE}}}
\def \eate{\epsilon_{\text{ATE}}}
\def \eatt{\epsilon_{\text{ATT}}}
\def \epehenn{{\epehe}_{nn}}
\def\tarnet{{TARNet}}
\newcommand\indep{\protect\mathpalette{\protect\independenT}{\perp}}
\def\independenT#1#2{\mathrel{\rlap{$#1#2$}\mkern2mu{#1#2}}}
\newcommand{\pc}{p^{t=0}}
\newcommand{\pt}{p^{t=1}}
\newcommand{\dPhi}{\frac{\partial \Phi(x)}{\partial x}}
\newcommand{\dPsi}{\frac{\partial \Psi(r)}{\partial r}}
\newcommand{\lyth}{\ell_{h,\Phi}(x,t)}
\newcommand{\lythr}{\ell_{h,\Phi}(\Psi(r),t)}
\newcommand{\lyxzeroh}{\ell_{h,\Phi}(x,0)}
\newcommand{\lyxoneh}{\ell_{h,\Phi}(x,1)}
\newcommand{\lyzerohpsi}{\ell_{h,\Phi}(\Psi(r),0)}
\newcommand{\lyonehpsi}{\ell_{h,\Phi}(\Psi(r),1)}
\newcommand{\Jphix}{\frac{\partial \Phi (x)}{\partial x}}
\newcommand{\GP}{\Gamma_\Phi}
\icmltitlerunning{Estimating individual treatment effect: generalization bounds and algorithms}
\begin{document}

\twocolumn[
\icmltitle{Estimating individual treatment effect: generalization bounds and algorithms}

\icmlauthor{Uri Shalit*}{shalit@cs.nyu.edu}
\icmladdress{CIMS,
            New York University, New York, NY 10003}
\icmlauthor{Fredrik D. Johansson*}{fredrikj@mit.edu}
\icmladdress{IMES, MIT, Cambridge, MA 02142}
\icmlauthor{David Sontag}{dsontag@csail.mit.edu}
\icmladdress{CSAIL \& IMES, MIT, Cambridge, MA 02139}

\icmlkeywords{counterfactual inference, causal effects}

]

\begin{abstract}
There is intense interest in applying machine learning to problems of causal inference in fields such as healthcare, economics and education. In particular, individual-level causal inference has important applications such as precision medicine. We give a new theoretical analysis and family of algorithms for predicting individual treatment effect (ITE) from observational data, under the assumption known as strong ignorability. The algorithms learn a ``balanced'' representation such that the induced treated and control distributions look similar. We give a novel, simple and intuitive generalization-error bound showing that the expected ITE estimation error of a representation is bounded by a sum of the standard generalization-error of that representation and the distance between the treated and control distributions induced by the representation. We use Integral Probability Metrics to measure distances between distributions, deriving explicit bounds for the Wasserstein and Maximum Mean Discrepancy (MMD) distances. Experiments on real and simulated data show the new algorithms match or outperform the state-of-the-art.
\end{abstract}

\section{Introduction}\label{sec:intro}

Making predictions about causal effects of actions is a central problem in many domains. For example, a doctor deciding which medication will cause better outcomes for a patient; a government deciding who would benefit most from subsidized job training; or a teacher deciding which study program would most benefit a specific student. In this paper we focus on the problem of making these predictions based on \emph{observational data}. Observational data is data which contains past actions, their outcomes, and possibly more context, but without direct access to the mechanism which gave rise to the action. For example we might have access to records of patients (context), their medications (actions), and outcomes, but we do not have complete knowledge of why a specific action was applied to a patient.

The hallmark of learning from observational data is that the actions observed in the data depend on variables which might also affect the outcome, resulting in \emph{confounding}: For example, richer patients might better afford certain medications, and job training might only be given to those motivated enough to seek it. The challenge is how to untangle these confounding factors and make valid predictions. Specifically, we work under the common simplifying assumption of ``no-hidden confounding'', assuming that all the factors determining which actions were taken are observed. In the examples above, it would mean that we have measured a patient's wealth or an employee's motivation.

As a learning problem, estimating causal effects from observational data is different from classic learning in that in our training data we never see the individual-level effect. For each unit, we only see their response to one of the possible actions - the one they had actually received. This is close to what is known in the machine learning literature as ``learning from logged bandit feedback'' \citep{strehl2010learning,swaminathan2015batch}, with the distinction that we do not have access to the model generating the action.

Our work differs from much work in causal inference in that we focus on the individual-level causal effect (also known as ``c-specific treatment effects'' \citet{shpitser2006ident,pearl2015detecting}), rather that the average or population level. Our main contribution is to give what is, to the best of our knowledge, the first generalization-error\footnote{Our use of the term generalization is different from its use in the study of \emph{transportability}, where the goal is to generalize causal conclusion across distributions \citep{bareinboim2016causal}.} bound for estimating individual-level causal effect, where each individual is identified by its features $x$. The bound leads naturally to a new family of representation-learning based algorithms \cite{bengio2013representation}, which we show to match or outperform state-of-the-art methods on several causal effect inference tasks.

We frame our results using the Rubin-Neyman potential outcomes framework \cite{rubin2011causal}, as follows. We assume that for a unit with features $x \in \cX$, and an action (also known as treatment or intervention) $t \in \{0,1\}$, there are two potential outcomes: $Y_0$ and $Y_1$. In our data, for each unit we only see one of the potential outcomes, depending on the treatment assignment: if $t=0$ we observe $y=Y_0$, if $t=1$, we observe $y=Y_1$; this is known as the \emph{Consistency} assumption. For example, $x$ can denote the set of lab tests and demographic factors of a diabetic patient, $t=0$ denote the standard medication for controlling blood sugar, $t=1$ denotes a new medication, and $Y_0$ and $Y_1$ indicate the patient's blood sugar level if they were to be given medications $t=0$ and $t=1$, respectively.

We will denote $m_1(x) = \E\left[Y_1|x\right]$, $m_0(x) = \E\left[Y_0|x\right]$.
We are interested in learning the function $\tau(x) := \E \left[Y_1 -Y_0|x\right] = m_1(x) - m_0(x)$. $\tau(x)$ is the expected \emph{treatment effect} of $t=1$ relative to $t=0$ on an individual unit with characteristics $x$, or the Individual Treatment Effect (ITE) \footnote{Sometimes known as the Conditional Average Treatment Effect, CATE.}. For example, for a patient with features $x$, we can use this to predict which of two treatments will have a better outcome. The fundamental problem of causal inference is that for any $x$ in our data we only observe $Y_1$ or $Y_0$, but never both. 

As mentioned above, we make an important ``no-hidden confounders'' assumption, in order to make the conditional causal effect identifiable. We formalize this assumption by using the standard \emph{strong ignorability} condition: $(Y_1, Y_0)\indep t | x $, and $0<p(t=1|x)<1$ for all $x$. Strong ignorability is a sufficient condition for the ITE function $\tau(x)$ to be identifiable \cite{imbens2009recent,pearl2015detecting,rolling2014estimation}: see proof in the supplement. The validity of strong ignorability cannot be assessed from data, and must be determined by domain knowledge and understanding of the causal relationships between the variables.

One approach to the problem of estimating the function $\tau(x)$ is by learning the two functions $m_0(x)$ and $m_1(x)$ using samples from $p(Y_t|x,t)$. This is similar to a standard machine learning problem of learning from finite samples. However, there is an additional source of variance at work here: For example, if mostly rich patients received treatment $t=1$, and mostly poor patients received treatment $t=0$, we might have an unreliable estimation of $m_1(x)$ for poor patients. 
In this paper we upper bound this additional source of variance using an Integral Probability Metric (IPM) measure of distance between two distributions $p(x|t=0)$, and $p(x|t=1)$, also known as the \emph{control} and \emph{treated} distributions. 
In practice we use two specific IPMs: the Maximum Mean Discrepancy \cite{gretton2012mmd}, and the Wasserstein distance \cite{villani2008optimal,cuturi2014fast}. We show that the expected error in learning the individual treatment effect function $\tau(x)$ is upper bounded by the error of learning $Y_1$ and $Y_0$, plus the IPM term. In the randomized controlled trial setting, where $t \indep x$, the IPM term is $0$, and our bound naturally reduces to a standard learning problem of learning two functions.

The bound we derive points the way to a family of algorithms based on the idea of representation learning \cite{bengio2013representation}: Jointly learn  hypotheses for both treated and control on top of a representation which minimizes a weighted sum of the factual loss (the standard supervised machine learning objective), and the IPM distance between the control and treated distributions induced by the representation. This can be viewed as learning the functions $m_0$ and $m_1$ under a constraint that encourages better generalization across the treated and control populations.
In the Experiments section we apply algorithms based on multi-layer neural nets as representations and hypotheses, along with MMD or Wasserstein distributional distances over the representation layer; see Figure \ref{fig:neuralnet} for the basic architecture.



In his foundational text about causality, \citet{pearl2009causality} writes: ``Whereas in traditional learning tasks we attempt to generalize from one set of instances to another, the causal modeling task is to generalize from behavior under one set of conditions to behavior under another set. \emph{Causal models should therefore be chosen by a criterion that challenges their stability against changing conditions...}'' [emphasis ours]. We believe our work points the way to one such stability criterion, for causal inference in the strongly ignorable case.


\begin{figure}[t!]
  \centering
  \includegraphics[width=1.0\columnwidth]{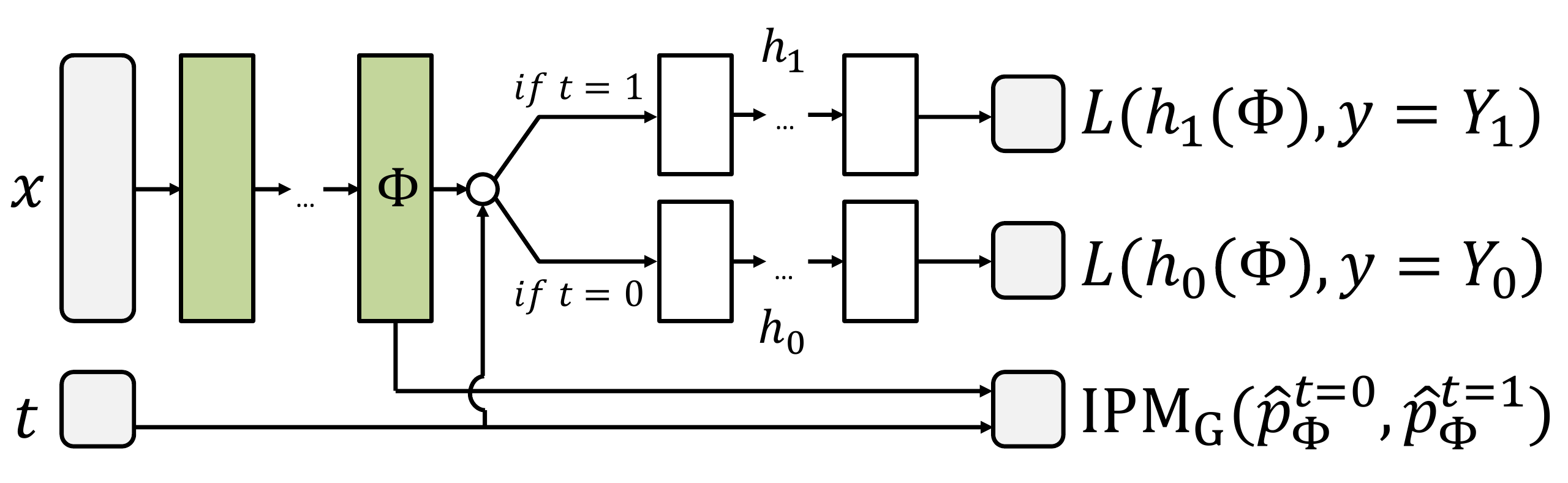}
  \caption{\label{fig:neuralnet}Neural network architecture for ITE estimation. $L$ is a loss function, $\text{IPM}_\cF$ is an integral probability metric. Note that only one of $h_0$ and $h_1$ is updated for each sample during training. }
\end{figure}

\section{Related work}

Much recent work in machine learning for causal inference focuses on \emph{causal discovery}, with the goal of discovering the underlying causal graph or causal direction from data \citep{hoyer2009nonlinear,maathuis2010predicting,triantafillou2015constraint,mooij2016distinguishing}. We focus on the case when the causal graph is simple and known to be of the form $(Y_1,Y_0) \leftarrow x \rightarrow t$, with no hidden confounders.

Under the causal model we assume, the most common goal of causal effect inference as used in the applied sciences is to obtain the average treatment effect: $ATE=\E_{x \sim p(x)}\left[\tau(x)\right]$. We will briefly discuss how some standard statistical causal effect inference methods relate to our proposed method. Note that most of these approaches assume some form of ignorability.

One of the most widely used approaches to estimating ATE is covariate adjustment, also known as back-door adjustment or the G-computation formula \citep{pearl2009causality,rubin2011causal}. In its basic version, covariate adjustment amounts to estimating the functions $m_1(x)$, $m_0(x)$. Therefore, covariate adjustment methods are the most natural candidates for estimating ITE as well as ATE, using the estimates of $m_t(x)$. However, most previous work on this subject focused on asymptotic consistency \citep{belloni2014inference,athey2016efficient,chernozhukov2016double}, and so far there has not been much work on the generalization-error of such a procedure. One way to view our results is that we point out a previously unaccounted for source of variance when using covariate adjustment to estimate ITE. We suggest a new type of regularization, by learning representations with reduced IPM distance between treated and control, enabling a new type of bias-variance trade-off.

Another widely used family of statistical methods used in causal effect inference are weighting methods. Methods such as propensity score weighting \citep{austin2011introduction} re-weight the units in the observational data so as to make the treated and control populations more comparable. These methods do not yield themselves immediately to estimating an individual level effect, and adapting them for that purpose is an interesting research question.
Doubly robust methods combine re-weighting the samples and covariate adjustment in clever ways to reduce model bias \citep{funk2011doubly}. Again, we believe that finding how to adapt the concept of double robustness to the problem of effectively estimating ITE is an interesting open question.

Adapting machine learning methods for causal effect inference, and in particular for individual level treatment effect, has gained much interest recently. For example \citet{wager2015estimation,athey2016recursive} discuss how tree-based methods can be adapted to obtain a consistent estimator with semi-parametric asymptotic convergence rate. 
Recent work has also looked into how machine learning method can help detect heterogeneous treatment effects when some data from randomized experiments is available \citep{taddy2016nonparametric,peysakhovich2016combining}.
Neural nets have also been used for this purpose, exemplified in early work by \citet{beck2000improving}, and more recently by \citet{hartford2016counterfactual}'s work on deep instrumental variables.
Our work differs from all the above by focusing on the generalization-error aspects of estimating individual treatment effect, as opposed to asymptotic consistency, and by focusing solely on the observational study case, with no randomized components or instrumental variables.

Another line of work in the causal inference community relates to bounding the estimate of the average treatment effect given an instrumental variable \citep{balke1997bounds,bareinboim2012controlling}, or under hidden confounding, for example when the ignorability assumption does not hold \citep{pearl2009causality,cai2008bounds}. Our work differs, in that we only deal with the ignorable case, and in that we bound a very different quantity: the generalization-error of estimating individual level treatment effect.

Our work has strong connections with work on domain adaptation. In particular, estimating ITE requires prediction of outcomes over a different distribution from the observed one. Our ITE error upper bound has similarities with generalization bounds in domain adaptation given by \citet{ben2007analysis,mansour2009bdomain,bendavid2010theory,cortes2014domain}.
These bounds employ distribution distance metrics such as the A-distance or the discrepancy metric, which are related to the IPM distance we use. Our algorithm is similar to a recent algorithm for domain adaptation by \citet{JMLR:v17:15-239}, and in principle other domain adaptation methods (e.g. \citet{daume2009frustratingly,pan2011domain,sun2016return}) could be adapted for use in ITE estimation as presented here.

Finally, our paper builds on work by \citet{johansson2016counterfactual}, where the authors show a connection between covariate shift and the task of estimating the counterfactual outcome in a causal inference scenario. They proposed learning a representation of the data that makes the treated and control distributions more similar, and fitting a linear ridge-regression model on top of it. They then bounded the relative error of fitting a ridge-regression using the distribution with reverse treatment  assignment versus fitting a ridge-regression using the factual distribution. Unfortunately, the relative error bound is not at all informative regarding the absolute quality of the representation. In this paper we focus on a related but more substantive task: estimating the individual treatment effect, building on top of the counterfactual error term. We further provide an informative bound on the absolute quality of the representation. We also derive a much more flexible family of algorithms, including non-linear hypotheses and much more powerful distribution metrics in the form of IPMs such as the Wasserstein and MMD distances. Finally, we conduct significantly more thorough experiments including a real-world dataset and out-of-sample performance, and show our methods outperform previously proposed ones.

\section{Estimating ITE: Error bounds}\label{sec:theory}

In this section we prove a bound on the expected error in estimating the individual treatment effect for a given representation, and a hypothesis defined over that representation.
The bound is expressed in terms of (1) the expected loss of the model when learning the observed outcomes $y$ as a function of $x$ and $t$, denoted $\epsilon_F$, $F$ standing for ``Factual''; (2) an Integral Probability Metric (IPM) distance between the distribution of treated and control units. The term $\epsilon_F$ is the classic machine learning generalization-error, and in turn can be upper bounded using the empirical error and model complexity terms, applying standard machine learning theory \citep{shalev2014understanding}.

\subsection{Problem setup}
We will employ the following assumptions and notations. The most important notations are in the Notation box in the supplement.
The space of covariates is a bounded subset $\cX \subset \R^d$. The outcome space is $\cY\subset \R$. Treatment $t$ is a binary variable. We assume there exists a joint distribution $p(x,t,Y_0,Y_1)$, such that $(Y_1,Y_0) \indep t |x$ and $0<p(t=1|x)<1$ for all $x\in \cX$ (strong ignorability).
The treated and control distributions are the distribution of the features $x$ conditioned on treatment: $\pt(x) := p(x|t=1)$, and $\pc(x) := p(x|t=0)$, respectively.



Throughout this paper we will discuss \emph{representation functions} of the form
 $\Phi : \cX \rightarrow \cR$, where $\cR$ is the representation space. We make the following assumption about $\Phi$:
\begin{thmasmp}\label{asmp:inv}
The representation $\Phi$ is a twice-differentiable, one-to-one function. Without loss of generality we will assume that $\cR$ is the image of $\cX$ under $\Phi$. We then have $\Psi :\cR \rightarrow \cX$ as the inverse of $\Phi$, such that $\Psi(\Phi(x)) = x$ for all $x \in \cX$.
\end{thmasmp}
The representation $\Phi$ pushes forward the treated and control distributions into the new space $\cR$; we denote the induced distribution by $p_\Phi$.
\begin{thmdef}\label{def:phit}
Define $\pt_\Phi(r) := p_\Phi(r|t=1)$, $\pc_\Phi(r) := p_\Phi(r|t=0)$, to be the treated and control distributions induced over $\cR$.
For a one-to-one $\Phi$, the distributions $\pt_\Phi(r)$ and $\pc_\Phi(r)$ can be obtained by the standard change of variables formula, using the determinant of the Jacobian of $\Psi(r)$.
\end{thmdef} 

Let $\Phi : \cX \rightarrow \cR$ be a representation function, and $h :\cR \times \{0,1\} \rightarrow \cY$ be an hypothesis defined over the representation space $\cR$. Let $L: \cY \times \cY \rightarrow \R_+$ be a loss function. We define two complimentary loss functions: one is the standard machine learning loss, which we will call the factual loss and denote $\epsilon_F$. The other is the expected loss with respect to the distribution where the treatment assignment is flipped, which we call the counterfactual loss, $\epsilon_{CF}$.

\begin{thmdef}\label{def:perunitloss}
The expected loss for the unit and treatment pair $(x,t)$ is:
$\ell_{h,\Phi}(x,t) = \int_\cY L(Y_t,h(\Phi(x),t)) p(Y_t|x) dY_t.$
The expected factual and counterfactual losses of $h$ and $\Phi$ are:
\begin{align*}
&\epsilon_F(h,\Phi) = \int_{\cX \times \{0,1\}} \!\!\!\!\!\!\! \lyth\, p(x,t)\, dxdt, \\
&\epsilon_{CF}(h,\Phi) = \int_{\cX \times \{0,1\}} \!\!\!\!\!\!\! \lyth\, p(x,1-t)\, dxdt .
\end{align*}
\end{thmdef}

If $x$ denotes patients' features, $t$ a treatment, and $Y_t$ a potential outcome such as mortality, we think of $\epsilon_F$ as measuring how well do $h$ and $\Phi$ predict mortality for the patients and doctors' actions sampled from the same distribution as our data sample. $\epsilon_{CF}$ measures how well our prediction with $h$ and $\Phi$ would do in a ``topsy-turvy'' world where the patients are the same but the doctors are inclined to prescribe exactly the opposite treatment than the one the real-world doctors would prescribe.

\begin{thmdef}\label{def:decompef}
The expected factual \emph{treated} and \emph{control} losses are:
\begin{align*}
&\epsilon^{t=1}_F(h,\Phi) = \int_{\cX } \!\!\! \lyxoneh\, \pt(x)\, dx ,\\
&\epsilon^{t=0}_F(h,\Phi) = \int_{\cX } \!\!\! \lyxzeroh\, \pc(x)\, dx .
\end{align*}
\end{thmdef}
For $u:=p(t=1)$, it is immediate to show that $\epsilon_F(h,\Phi) = u \epsilon^{t=1}(h,\Phi) + (1-u) \epsilon^{t=0}(h,\Phi)$.

\begin{thmdef}\label{def:te}
The treatment effect (ITE) for unit $x$ is:
$$\tau(x) := \E\left[Y_1 - Y_0 | x \right].$$
\end{thmdef}

Let $f: \cX \times \{0,1\} \rightarrow \cY$ by an hypothesis. For example, we could have that $f(x,t) = h(\Phi(x),t)$.
\begin{thmdef}\label{def:inditeerr}
The treatment effect estimate of the hypothesis $f$ for unit $x$ is:
\begin{align*}
\hat{\tau}_f  (x) = f(x,1) - f(x,0).
\end{align*}
\end{thmdef}

\begin{thmdef}
The expected Precision in Estimation of Heterogeneous Effect (PEHE,  \citet{hill2011bayesian}) loss of $f$ is:
\begin{equation}
\label{eq:pehe}
\epehe(f) = \int_{\cX } \left( \hat{\tau}_f(x) - \tau(x) \right)^2 \, p(x) \, dx ,
\end{equation}
When $f(x,t) = h(\Phi(x),t)$, we will also use the notation $\epehe(h,\Phi) = \epehe(f)$.
\end{thmdef}

Our proof relies on the notion of an \emph{Integral Probability Metric} (IPM), which is a class of metrics between probability distributions \citep{sriperumbudur2012empirical,muller1997integral}.
For two probability density functions $p$, $q$ defined over $\cS \subseteq \R^d$, and for a function family $\cF$ of functions $g : \cS \rightarrow \R$, we have that $$\text{IPM}_\cF(p,q) :=\sup_{g\in \cF} \left| \int_\cS g(s) (p(s)-q(s))\, ds \right|.$$
Integral probability metrics are always symmetric and obey the triangle inequality, and trivially satisfy $\text{IPM}_\cF(p,p) = 0$. For rich enough function families $\cF$, we also have that $\text{IPM}_\cF(p,q) =0 \implies p=q,$ and then $\text{IPM}_\cF$ is a true metric over the corresponding set of probabilities. Examples of function families $\cF$ for which $\text{IPM}_\cF$ is a true metric are the family of bounded continuous functions, the family of $1$-Lipschitz functions \citep{sriperumbudur2012empirical}, and the unit-ball of functions in a universal reproducing Hilbert kernel space \citep{gretton2012mmd}.

\begin{thmdef}
Recall that $m_t(x) = \E\left[Y_t|x\right]$.
The expected variance of $Y_t$ with respect to a distribution $p(x,t)$:
$$\sigma^2_{Y_t}(p(x,t)) = \int_{\cX \times \cY} \left(Y_t - m_t(x)\right)^2 p(Y_t|x)p(x,t) \, dY_t dx .$$
We define:
\begin{align*}
&\sigma^2_{Y_t} = \min\{\sigma^2_{Y_t}(p(x,t)), \sigma^2_{Y_t}(p(x,1-t))\} , \\
&\sigma^2_{Y} = \min\{\sigma^2_{Y_0},\sigma^2_{Y_1}\}.
\end{align*}
\end{thmdef}
\subsection{Bounds}

We first state a Lemma bounding the counterfactual loss, a key step in obtaining the bound on the error in estimating individual treatment effect. We then give the main Thoerem. The proofs and details are in the supplement.

Let $u := p(t=1)$ be the marginal probability of treatment. By the strong ignorability assumption, $0<u<1$.

\begin{thmlem}\label{lem:gen}
Let $\Phi : \cX \rightarrow \cR$ be a one-to-one representation function, with inverse $\Psi$.
Let $h : \cR \times \{0,1\} \rightarrow \cY$ be an hypothesis. Let $\cF$ be a family of functions $g: \cR \rightarrow \cY$. Assume there exists a constant $B_\Phi>0$, such that for fixed $t \in \{0,1\}$, the per-unit expected loss functions $\lythr$ (Definition \ref{def:perunitloss}) obey $ \frac{1}{B_\Phi} \cdot  \lythr \in \cF$. We have:
\begin{align*}
&\epsilon_{CF}(h,\Phi) \leq \nonumber \\
&\quad (1-u)  \epsilon^{t=1}_F(h,\Phi) + u  \epsilon^{t=0}_F(h,\Phi) \nonumber \\
&  \quad + B_\Phi \cdot \text{IPM}_\cF\left( \pt_\Phi, \pc_\Phi \right),
\end{align*}
where $\epsilon_{CF}$, $ \epsilon^{t=0}_F$ and $ \epsilon^{t=1}_F$ are as in Definitions \ref{def:perunitloss} and \ref{def:decompef}.
\end{thmlem}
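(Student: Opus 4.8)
The plan is to express the counterfactual loss $\epsilon_{CF}(h,\Phi)$ as a weighted combination of integrals of the per-unit loss $\lyth$ against the ``wrong'' treatment marginals, change variables into representation space $\cR$, and then compare each such integral against the corresponding factual integral using the IPM. Concretely, by Definition \ref{def:perunitloss} and splitting on $t$,
\begin{align*}
\epsilon_{CF}(h,\Phi) &= u\!\int_\cX \lyxzeroh\, \pt(x)\,dx \\
&\quad + (1-u)\!\int_\cX \lyxoneh\, \pc(x)\,dx,
\end{align*}
since $p(x,1-t)$ contributes the control-outcome loss weighted by the \emph{treated} feature marginal (times $u=p(t=1)$) and vice versa. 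The key observation is that $\epsilon^{t=1}_F(h,\Phi)=\int_\cX \lyxoneh\,\pt(x)\,dx$ and $\epsilon^{t=0}_F(h,\Phi)=\int_\cX \lyxzeroh\,\pc(x)\,dx$, so what we must bound is the difference between, e.g., $\int_\cX \lyxzeroh\,\pt(x)\,dx$ and $\int_\cX \lyxzeroh\,\pc(x)\,dx$ — i.e.\ the same integrand against the two different marginals.

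First I would push everything forward through $\Phi$. Using Assumption \ref{asmp:inv} ($\Phi$ one-to-one, twice differentiable) and the change-of-variables formula of Definition \ref{def:phit}, $\int_\cX \lyxzeroh\,\pt(x)\,dx = \int_\cR \lythr|_{t=0}\,\pt_\Phi(r)\,dr$, and similarly with $\pc_\Phi$. Then
\begin{align*}
&\int_\cR \ell_{h,\Phi}(\Psi(r),0)\,\pt_\Phi(r)\,dr - \int_\cR \ell_{h,\Phi}(\Psi(r),0)\,\pc_\Phi(r)\,dr \\
&= \int_\cR \ell_{h,\Phi}(\Psi(r),0)\,\big(\pt_\Phi(r)-\pc_\Phi(r)\big)\,dr.
\end{align*}
By the hypothesis of the lemma, $\frac{1}{B_\Phi}\lythr \in \cF$ for fixed $t$, so by the definition of $\text{IPM}_\cF$ this integral is at most $B_\Phi\cdot\text{IPM}_\cF(\pt_\Phi,\pc_\Phi)$ in absolute value; the same bound holds for the $t=1$ integrand with the roles of $\pt_\Phi$ and $\pc_\Phi$ swapped, which costs nothing since IPMs are symmetric. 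Combining: $\int_\cX \lyxzeroh\,\pt(x)\,dx \le \epsilon^{t=0}_F(h,\Phi) + B_\Phi\,\text{IPM}_\cF(\pt_\Phi,\pc_\Phi)$ and $\int_\cX \lyxoneh\,\pc(x)\,dx \le \epsilon^{t=1}_F(h,\Phi) + B_\Phi\,\text{IPM}_\cF(\pt_\Phi,\pc_\Phi)$. Plugging these into the decomposition of $\epsilon_{CF}$ above and using $u + (1-u) = 1$ to collapse the IPM coefficient to $B_\Phi$ yields exactly the claimed inequality, with the weights $(1-u)$ and $u$ landing on $\epsilon^{t=1}_F$ and $\epsilon^{t=0}_F$ respectively.

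The main obstacle I anticipate is purely bookkeeping rather than conceptual: getting the treatment/marginal pairing right in the counterfactual decomposition (the $u$ multiplies the $t=0$ \emph{loss} against the $t=1$ \emph{marginal}, which is easy to transpose), and making sure the change-of-variables step is applied consistently so that the integrand $\ell_{h,\Phi}(\Psi(r),t)$ as a function on $\cR$ is exactly the object assumed to lie (up to scaling) in $\cF$ — this is where Assumption \ref{asmp:inv} and the specific form of the constant $B_\Phi$ are used. One should also note explicitly that the sign/absolute-value inside the IPM definition means we only get an upper bound (not equality), which is all that is needed. No measure-theoretic subtleties arise beyond $\Phi$ being a diffeomorphism onto its image, which is granted.
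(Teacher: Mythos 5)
Your proposal is correct and follows essentially the same route as the paper's proof: the same decomposition of $\epsilon_{CF}$ into treated/control counterfactual terms weighted by $u$ and $1-u$, the same change of variables into $\cR$, and the same use of the membership assumption $\frac{1}{B_\Phi}\lythr \in \cF$ to bound each cross-distribution integral by $B_\Phi \cdot \text{IPM}_\cF(\pt_\Phi,\pc_\Phi)$. The only cosmetic difference is that the paper bounds the combined difference $\epsilon_{CF} - \left[(1-u)\epsilon^{t=1}_F + u\,\epsilon^{t=0}_F\right]$ in one chain, whereas you bound the two cross-terms separately and recombine, which collapses the IPM coefficients via $u + (1-u) = 1$ exactly as the paper does.
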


\begin{thmthm}\label{thm:gen}
Under the conditions of Lemma \ref{lem:gen}, and assuming the loss $L$ used to define $\ell_{h,\Phi}$ in Definitions \ref{def:perunitloss} and \ref{def:decompef} is the squared loss, we have:
\begin{align}
&\epehe(h,\Phi) \leq  \nonumber \\
& 2\!\left(\epsilon_{CF}(h,\Phi) + \epsilon_F(h,\Phi) - 2\sigma^2_Y \right) \leq  \label{eq:thm_gen} \\
&2\!\left(\epsilon_F^{t=0}(h,\Phi)\! +\!\epsilon_F^{t=1}(h,\Phi)\!+ \! B_\Phi  \text{IPM}_\cF \left( \pt_\Phi, \pc_\Phi \right)\! -\! 2\sigma^2_Y\right)\!, \nonumber
\end{align}
where $\epsilon_F$ and $\epsilon_{CF}$ are defined w.r.t. the squared loss.
\end{thmthm}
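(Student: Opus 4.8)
The plan is to prove the two inequalities in \eqref{eq:thm_gen} separately. The second one, relating the middle and right-hand expressions, is essentially a restatement of Lemma \ref{lem:gen}: adding the identity $\epsilon_F(h,\Phi) = u\,\epsilon^{t=1}_F(h,\Phi) + (1-u)\,\epsilon^{t=0}_F(h,\Phi)$ (noted right after Definition \ref{def:decompef}) to the bound of Lemma \ref{lem:gen}, the coefficients of $\epsilon^{t=1}_F$ and of $\epsilon^{t=0}_F$ each sum to $1$, giving $\epsilon_{CF}(h,\Phi) + \epsilon_F(h,\Phi) \le \epsilon^{t=1}_F(h,\Phi) + \epsilon^{t=0}_F(h,\Phi) + B_\Phi\,\text{IPM}_\cF(\pt_\Phi,\pc_\Phi)$; multiplying by $2$ and subtracting $4\sigma^2_Y$ then yields the second inequality of \eqref{eq:thm_gen}. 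So all the real content is in the first inequality, $\epehe(h,\Phi) \le 2\big(\epsilon_{CF}(h,\Phi) + \epsilon_F(h,\Phi) - 2\sigma^2_Y\big)$, which holds for any $h$ and any $\Phi$ (it uses neither invertibility of $\Phi$ nor the constant $B_\Phi$) and is a bias--variance computation for the squared loss.

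Write $f(x,t) := h(\Phi(x),t)$. The first step is the pointwise bias--variance decomposition of the squared loss: for each $(x,t)$, $\lyth = \mathrm{Var}[Y_t\mid x] + (m_t(x) - f(x,t))^2$. Next I would combine the factual and counterfactual losses: since $p(x,t{=}0) + p(x,t{=}1) = p(x)$, summing $\epsilon_F$ and $\epsilon_{CF}$ replaces each treatment-conditional density by the marginal, so $\epsilon_F(h,\Phi) + \epsilon_{CF}(h,\Phi) = \int_\cX \lyxoneh\,p(x)\,dx + \int_\cX \lyxzeroh\,p(x)\,dx$, which by the decomposition equals $\bar\sigma^2_{Y_1} + \bar\sigma^2_{Y_0} + \int_\cX (m_1(x) - f(x,1))^2 p(x)\,dx + \int_\cX (m_0(x) - f(x,0))^2 p(x)\,dx$, where I write $\bar\sigma^2_{Y_t} := \int_\cX \mathrm{Var}[Y_t\mid x]\,p(x)\,dx$.

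On the PEHE side, since $\hat{\tau}_f(x) - \tau(x) = \big(f(x,1) - m_1(x)\big) - \big(f(x,0) - m_0(x)\big)$, the elementary inequality $(a-b)^2 \le 2a^2 + 2b^2$ gives $\big(\hat{\tau}_f(x) - \tau(x)\big)^2 \le 2(f(x,1) - m_1(x))^2 + 2(f(x,0) - m_0(x))^2$ pointwise; integrating against $p(x)$ and substituting the previous identity yields $\epehe(h,\Phi) \le 2\big(\epsilon_F(h,\Phi) + \epsilon_{CF}(h,\Phi) - \bar\sigma^2_{Y_1} - \bar\sigma^2_{Y_0}\big)$. The last step is to bound $\bar\sigma^2_{Y_1} + \bar\sigma^2_{Y_0}$ from below by $2\sigma^2_Y$: splitting $p(x) = p(x,t{=}1) + p(x,t{=}0)$ inside the definition of $\bar\sigma^2_{Y_t}$ shows $\bar\sigma^2_{Y_t} = \sigma^2_{Y_t}(p(x,t{=}1)) + \sigma^2_{Y_t}(p(x,t{=}0)) \ge 2\min\{\sigma^2_{Y_t}(p(x,t{=}1)),\sigma^2_{Y_t}(p(x,t{=}0))\} = 2\sigma^2_{Y_t} \ge 2\sigma^2_Y$, using $a + b \ge 2\min\{a,b\}$ for $a,b\ge 0$; hence $\bar\sigma^2_{Y_1} + \bar\sigma^2_{Y_0} \ge 4\sigma^2_Y \ge 2\sigma^2_Y$, and the first inequality follows (the stated $2\sigma^2_Y$ is in fact a bit loose).

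There is no genuine analytic obstacle here; the step I would be most careful about is the variance bookkeeping --- keeping straight the four objects $\mathrm{Var}[Y_t\mid x]$, $\sigma^2_{Y_t}(p(x,t))$, $\sigma^2_{Y_t}$, and $\sigma^2_Y$, and in particular verifying that the factors $p(t{=}1)$ and $p(t{=}0)$ hidden inside the joint densities $p(x,t{=}\cdot)$ are exactly what makes $\bar\sigma^2_{Y_t}$ split as the \emph{sum} of the two conditional-on-treatment variances rather than a convex combination of them, which is what lets the $\min$ in the definition of $\sigma^2_{Y_t}$ be absorbed cleanly.
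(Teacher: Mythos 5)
Your proposal is correct and follows essentially the same route as the paper's own proof: the elementary inequality $(a+b)^2\le 2(a^2+b^2)$ applied to $\hat\tau_f-\tau$, the split $p(x)=p(x,t{=}0)+p(x,t{=}1)$, the bias--variance decomposition of the squared loss to convert the mean-fitting errors into $\epsilon_F+\epsilon_{CF}$ minus variance terms, and Lemma \ref{lem:gen} combined with $\epsilon_F=u\,\epsilon_F^{t=1}+(1-u)\,\epsilon_F^{t=0}$ for the second inequality. The only (cosmetic) difference is that you sum $\epsilon_F+\epsilon_{CF}$ into marginal integrals first rather than expanding the PEHE integral into the two joint densities, and your observation that $\bar\sigma^2_{Y_0}+\bar\sigma^2_{Y_1}\ge 4\sigma^2_Y$ makes the stated constant slightly loose matches the slack already present in the paper's argument.
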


The main idea of the proof is showing that $\epehe$ is upper bounded by the sum of the expected factual loss $\epsilon_F$ and expected counterfactual loss $\epsilon_{CF}$.  However, we cannot estimate $\epsilon_{CF}$, since we only have samples relevant to $\epsilon_F$.  We therefore bound the difference $\epsilon_{CF} - \epsilon_F$ using an IPM.

Choosing a small function family $\cF$ will make the bound tighter. However, choosing too small a family could result in an incomputable bound. For example, for the minimal choice $\cF = \{\lyxzeroh, \lyxoneh  \}$, we will have to evaluate an expectation term of $Y_1$ over $\pc_\Phi$, and of $Y_0$ over $\pt_\Phi$. We cannot in general evaluate these expectations, since by assumption when $t=0$ we only observe $Y_0$, and the same for $t=1$ and $Y_1$. In addition, for some function families there is no known way to efficiently compute the IPM distance or its gradients. In this paper we use two function families for which there are available optimization tools. The first is the family of $1$-Lipschitz functions, which leads to IPM being the Wasserstein distance \citep{villani2008optimal,sriperumbudur2012empirical}, denoted $\text{Wass}(p,q)$. The second is the family of norm-$1$ reproducing kernel Hilbert space (RKHS) functions, leading to the MMD metric \citep{gretton2012mmd,sriperumbudur2012empirical}, denoted $\text{MMD}(p,q)$. Both the Wasserstein and MMD metrics have consistent estimators which can be efficiently computed in the finite sample case \citep{sriperumbudur2012empirical}. Both have been used for various machine learning tasks in recent years \citep{gretton2009covariate,gretton2012mmd,cuturi2014fast}.

In order to explicitly evaluate the constant $B_\Phi$ in Theorem \ref{thm:gen}, we have to make some assumptions about the elements of the problem. For the Wasserstein case these are the loss $L$, the Lipschitz constants of $p(Y_t|x)$ and $h$, and the condition number of the Jacobian of $\Phi$. For the MMD case, we make assumptions about the RKHS representability and RKHS norms of $h$ , $\Phi$, and the standard deviation of $Y_t|x$. The full details are given in the supplement, with the major results stated in Theorems 2 and 3. In all cases we obtain that making $\Phi$ smaller increases the constant $B_\Phi$ precluding trivial solutions such as making $\Phi$ arbitrarily small.

For an empirical sample, and a \emph{family} of representations and hypotheses, we can further upper bound $\epsilon_F^{t=0}$ and $\epsilon_F^{t=1}$ by their respective empirical losses and a model complexity term using standard arguments \citep{shalev2014understanding}. The IPMs we use can be consistently estimated from finite samples \citep{sriperumbudur2012empirical}. The negative variance term $\sigma^2_Y$ arises from the fact that, following \citet{hill2011bayesian,athey2016recursive}, we define the error $\epehe$ in terms of the conditional mean functions $m_t(x)$, as opposed to fitting the random variables $Y_t$.

Our results hold for any given $h$ and $\Phi$ obeying the Theorem conditions. This immediately suggest an algorithm in which we minimize the upper bound in Eq. \eqref{eq:thm_gen} with respect to $\Phi$ and $h$ and either the Wasserstein or MMD IPM, in order to minimize the error in estimating the individual treatment effect. This leads us to Algorithm \ref{alg:model} below.

\section{Algorithm for estimating ITE}\label{sec:model}

We propose a general framework called CFR (for Counterfactual Regression) for ITE estimation based on the theoretical results above. Our algorithm is an end-to-end, regularized minimization procedure which simultaneously fits both a balanced representation of the data and a hypothesis for the outcome.
CFR draws on the same intuition as the approach proposed by  \citet{johansson2016counterfactual}, but \emph{overcomes} the following limitations of their method: a) Their theory requires a two-step optimization procedure and is specific to \emph{linear} hypotheses of the learned representation (and does not support e.g. deep neural networks), b) The treatment indicator might get lost if the learned representation is high-dimensional (see discussion below).


We assume there exists a distribution $p(x,t,Y_0,Y_1)$ over $\cX \times \{0,1\} \times \cY \times \cY$, such that strong ignorability holds. We further assume we have a sample from that distribution $(x_1,t_1,y_1), \dots (x_n,t_n,y_n)$, where $y_i \sim p(Y_1|x_i)$ if $t_i=1$, $y_i \sim p(Y_0|x_i)$ if $t_i=0$. This standard assumption means that the treatment assignment determines which potential outcome we see. Our goal is to find a representation $\Phi : \cX \rightarrow \cR$ and hypothesis $h:  \cX \times \{0,1\} \rightarrow \cY$ that will minimize $\epehe(f)$ for $f(x,t):=h(\Phi(x),t)$.

In this work, we let $\Phi(x)$ and $h(\Phi,t)$ be parameterized by deep neural networks trained jointly in an end-to-end fashion, see Figure~\ref{fig:neuralnet}. This model allows for learning complex non-linear representations and hypotheses with large flexibility. \citet{johansson2016counterfactual} parameterized $h(\Phi,t)$ with a single network using the concatenation of $\Phi$ and $t$ as input. When the dimension of $\Phi$ is high, this risks losing the influence of $t$ on $h$ during training. To combat this, our first contribution is to parameterize $h_1(\Phi)$ and $h_0(\Phi)$ as two separate ``heads'' of the joint network, the former used to estimate the outcome under treatment, and the latter under control. This means that statistical power is shared in the representation layers of the network, while the effect of treatment is retained in the separate heads. Note that each sample is used to update only the head corresponding to the observed treatment; for example, an observation $(x_i, t_i=1, y_i)$ is only used to update $h_1$.

Our second contribution is to excplicitly account and adjust for the bias induced by treatment group imbalance. To this end, we seek a representation $\Phi$ and hypothesis $h$ that minimizes a trade-off between predictive accuracy and imbalance in the representation space, using the following objective:
\begin{equation}
\begin{aligned}
\min_{\substack{h, \Phi\\ \|\Phi\|=1}} & \;\; \frac 1 n \sum_{i=1}^n w_i \cdot L\left(h(\Phi(x_i),t_i)\, ,y_i\right) + \lambda \cdot \mathfrak{R}(h)  \\
& \;\; + \alpha  \cdot \text{IPM}_\cF\left(\{\Phi(x_i)\}_{i:t_i=0},\{\Phi(x_i)\}_{i:t_i=1}\right), \\
\mbox{with} & \;\; w_i = \frac{t_i}{2u} + \frac{1-t_i}{2(1-u)},\;\;\mbox{where}\;\;u = \frac{1}{n}\sum_{i=1}^n t_i, \\
\mbox{and} & \;\;\mbox{$\mathfrak{R}$ is a model complexity term.}
\label{eq:overallalg}
\end{aligned}
\end{equation}
Note that $u=p(t=1)$ in the definition of $w_i$ is simply the proportion of treated units in the population. The weights $w_i$ compensate for the difference in treatment group size in our sample, see Theorem~\ref{thm:gen}. $\text{IPM}_{\cF}(\cdot,\cdot)$ is the (empirical) integral probability metric defined by the function family $\cF$. For most IPMs, we cannot compute the factor $B_\phi$ in Equation~\ref{eq:thm_gen}, but treat it as part of the hyperparameter $\alpha$. This makes our objective sensitive to the scaling of $\Phi$, even for a constant $\alpha$. We therefore normalize $\Phi$ through either projection or batch-normalization with fixed scale. We refer to the model minimizing \eqref{eq:overallalg} with $\alpha>0$ as Counterfactual Regression (CFR) and the variant without balance regularization ($\alpha=0$) as Treatment-Agnostic Representation Network (\tarnet{}).

We train our models by minimizing \eqref{eq:overallalg} using stochastic gradient descent, where we backpropagate the error through both the hypothesis and representation networks, as described in Algorithm~\ref{alg:model}. Both the prediction loss and the penalty term $\text{IPM}_{\cF}(\cdot,\cdot)$ are computed for one mini-batch at a time. Details of how to obtain the gradient $g_1$ with respect to the empirical IPMs are in the supplement.

\begin{algorithm}[tbp]
\caption{CFR: Counterfactual regression with integral probability metrics}
\label{alg:model}
\begin{algorithmic}[1]
  \STATE \textbf{Input:} Factual sample $(x_1,t_1,y_1), \ldots , (x_n,t_n,y_n)$, scaling parameter $\alpha>0$, loss function $L\left(\cdot,\cdot\right)$, representation network $\Phi_{\bf{W}}$ with initial weights $\bf{W}$, outcome network $h_{\bf{V}}$ with initial weights $\bf{V}$, function family $\cF$ for IPM.
\STATE Compute $u=\frac{1}{n}\sum_{i=1}^n t_i$
 \STATE Compute $w_i = \frac{t_i}{2u} + \frac{1-t_i}{2(1-u)}$ for $i=1\ldots n$
  \WHILE{not converged}
       \STATE Sample mini-batch  $\{i_1,i_2,\ldots, i_m\} \subset  \{1,2,\ldots, n\}$
    \STATE Calculate the gradient of the IPM term:\\
    $g_1 = ${\small $\nabla_{\bf{W}} \; \text{IPM}_{\cF}(\{\Phi_{\bf{W}}(x_{i_j})\}_{t_{i_j} = 0}, \{\Phi_{\bf{W}}(x_{i_{k}})\}_{t_{i_j} = 1} )$}
    \STATE Calculate the gradients of the empirical loss: \\
    $g_2 = \nabla_{\bf{V}}\frac{1}{m} \sum_j w_{i_j} \cdot L\left(h_{\bf{V}}(\Phi_{\bf{W}}(x_{i_j}),t_{i_j}), y_{i_j}\right)$ \\
    $g_3 = \nabla_{\bf{W}}\frac{1}{m}\sum_j w_{i_j} \cdot L\left(h_{\bf{V}}(\Phi_{\bf{W}}(x_{i_j}),t_{i_j}), y_{i_j}\right)$
    \STATE Obtain step size scalar or matrix $\eta$ with standard neural net methods e.g. Adam~\citep{kingma2014adam}
    \STATE  $\left[\mathbf{W},\mathbf{V}\right] \leftarrow \left[\mathbf{W} - \eta(\alpha g_1 +  g_3 ),  \mathbf{V} - \eta(g_2 + 2 \lambda \mathbf{V}) \right]$
    \STATE Check convergence criterion
  \ENDWHILE
\end{algorithmic}
\end{algorithm}

\section{Experiments}
\label{sec:experiments}

Evaluating causal inference algorithms is more difficult than many machine learning tasks, since for real-world data we rarely have access to the ground truth treatment effect. Existing literature mostly deals with this in two ways. One is by using synthetic or semi-synthetic datasets, where the outcome or treatment assignment are fully known; we use the semi-synthetic IHDP dataset from \citet{hill2011bayesian}. The other is using real-world data from randomized controlled trials (RCT). The problem in using data from RCTs is that there is no imbalance between the treated and control distributions, making our method redundant. We partially overcome this problem by using the Jobs dataset from \citet{lalonde1986evaluating}, which includes both a randomized and a non-randomized component. We use both for training, but can only use the randomized component for evaluation. This alleviates, but does not solve, the issue of a completely balanced dataset being unsuited for our method.

We evaluate our framework CFR, and its variant without balancing regularization (\tarnet{}), in the task of estimating ITE and ATE. CFR is implemented as a feed-forward neural network with 3 fully-connected exponential-linear layers for the representation and 3 for the hypothesis. Layer sizes were 200 for all layers used for Jobs and 200 and 100 for the representation and hypothesis used for IHDP. The model is trained using Adam~\citep{kingma2014adam}. For an overview, see Figure~\ref{fig:neuralnet}. Layers corresponding to the hypothesis are regularized with a small $\ell_2$ weight decay.  For continuous data we use mean squared loss and for binary data, we use log-loss. While our theory does not immediately apply to log-loss, we were curious to see how our model performs with it. 

We compare our method to Ordinary Least Squares with treatment as a feature (OLS-1), OLS with separate regressors for each treatment (OLS-2), $k$-nearest neighbor ($k$-NN), Targeted Maximum Likelihood, which is a doubly robust method (TMLE)~\citep{gruber2011tmle}, Bayesian Additive Regression Trees (BART)~\citep{chipman2010bart,bayestree}, Random Forests (Rand. For.)~\citep{breiman2001random}, Causal Forests  (Caus. For.)~\citep{wager2015estimation} as well as the Balancing Linear Regression (BLR) and Balancing Neural Network (BNN) by \citet{johansson2016counterfactual}.
For classification tasks we substitute Logistic Regression (LR) for OLS.  
Choosing hyperparameters for estimating PEHE is non-trivial; we detail our selection procedure, applied to all methods, in subsection C.1 of the supplement.


We evaluate our model in two different settings. One is \emph{within-sample}, where the task is to estimate ITE for all units in a sample for which the (factual) outcome of \emph{one} treatment is observed. This corresponds to the common scenario in which a cohort is selected once and not changed. This task is non-trivial, as we never observe the ITE for any unit. The other is the \emph{out-of-sample} setting, where the goal is to estimate ITE for units with no observed outcomes. This corresponds to the case where a new patient arrives and the goal is to select the best possible treatment. Within-sample error is computed over both the training and validation sets, and out-of-sample error over the test set.

\vskip -7pt
\subsection{Simulated outcome: IHDP}\vskip -5pt
\citet{hill2011bayesian} compiled a dataset for causal effect estimation based on the Infant Health and Development Program (IHDP), in which the covariates come from a randomized experiment studying the effects of specialist home visits on future cognitive test scores. The treatment groups have been made imbalanced by removing a biased subset of the treated population. The dataset comprises 747 units (139 treated, 608 control) and 25 covariates measuring aspects of children and their mothers. We use the simulated outcome implemented as setting ``A'' in the NPCI package~\citep{npci}. Following \citet{hill2011bayesian}, we use the \emph{noiseless} outcome to compute the true effect. We report the estimated (finite-sample) PEHE loss $\epehe$ (Eq. ~\ref{eq:pehe}), and the absolute error in average treatment effect $\eate = |\frac{1}{n}\sum_{i=1}^n (f(x_i, 1) - f(x_i, 0)) - \frac{1}{n}\sum_{i=1}^n (m_1(x_i) - m_0(x_i))|$.
The results of the experiments on IHDP are presented in Table~\ref{tbl:ihdp_jobs_results} (left). We average over 1000 realizations of the outcomes with 63/27/10 train/validation/test splits.

\begin{table}[t!]
  \caption{\label{tbl:ihdp_jobs_results}Results on IHDP (left) and Jobs (right). MMD is squared linear MMD. Lower is better. \vspace{-1.3em}}
  \begin{center}
    \begin{small}
      \begin{sc}
      \begin{tabular}{l|cc|cc}
        \multicolumn{5}{l}{\bf{Within-sample}} \\
        \hline
        & \multicolumn{2}{c|}{IHDP} & \multicolumn{2}{c}{Jobs} \\
        & $\sqrt{\epehe}$ & $\eate$ & $R_{\text{Pol}}$  & $\eatt$\\
        \hline
        OLS/LR-1 & $5.8 \pm .3$ & $.73 \pm .04$ & $.22 \pm .0$ & $.01 \pm .00$ \\
        OLS/LR-2  & $2.4 \pm .1$ & $.14 \pm .01$ & $.21 \pm .0$ & $.01 \pm .01$ \\
        BLR  & $5.8 \pm .3$ & $.72 \pm .04$ & $.22 \pm .0$ & $.01 \pm .01$ \\
        $k$-NN & $2.1 \pm .1$ & $.14 \pm .01$ & $.02 \pm .0$ & $.21 \pm .01$ \\
		    TMLE  & $5.0 \pm .2$ & $.30 \pm .01$ & $.22 \pm .0$ & $.02 \pm .01$ \\
        BART  & $2.1 \pm .1$ & $.23 \pm .01$ & $.23 \pm .0$ & $.02 \pm .00$ \\
        Rand.For.  & $4.2 \pm .2$ & $.73 \pm .05$ & $.23 \pm .0$ & $.03 \pm .01$ \\
        Caus.For.  & $3.8 \pm .2$ & $.18 \pm .01$ & $.19 \pm .0$ & $.03 \pm .01$ \\
        BNN  & $2.2 \pm .1$ & $.37 \pm .03$ & $.20 \pm .0$ & $.04 \pm .01$ \\
        \tarnet{}  & $.88 \pm .0$ & $.26 \pm .01$ & $.17 \pm .0$ & $.05 \pm .02$ \\
        CFR MMD & $.73 \pm .0$ & $.30 \pm .01$ & $.18 \pm .0$ & $.04 \pm .01$ \\
        CFR Wass & $.71 \pm .0$ & $.25 \pm .01$ & $.17 \pm .0$ & $.04 \pm .01$ \\
        \hline
        \multicolumn{5}{l}{\bf{Out-of-sample}} \\
        \hline
        & \multicolumn{2}{c|}{IHDP} & \multicolumn{2}{c}{Jobs} \\
        & $\sqrt{\epehe}$ & $\eate$ & $R_{\text{Pol}}$  & $\eatt$\\
        \hline
        OLS/LR-1  & $5.8 \pm .3$ & $.94 \pm .06$ & $.23 \pm .0$ & $.08 \pm .04$ \\
        OLS/LR-2  & $2.5 \pm .1$ & $.31 \pm .02$ & $.24 \pm .0$ & $.08 \pm .03$ \\
        BLR  & $5.8 \pm .3$ & $.93 \pm .05$ & $.25 \pm  .0$ & $.08 \pm .03$ \\
        $k$-NN & $4.1 \pm .2$ & $.79 \pm .05$ & $.26 \pm .0$ & $.13 \pm .05$ \\
        BART  & $2.3 \pm .1$ & $.34 \pm .02$ & $.25 \pm .0$ & $.08 \pm .03$ \\
        Rand.For.  & $6.6 \pm .3$ & $.96 \pm .06$ & $.28 \pm .0$ & $.09 \pm .04$ \\
        Caus.For.  & $3.8 \pm .2$ & $.40 \pm .03$ & $.20 \pm .0$ & $.07 \pm .03$ \\
        BNN  & $2.1 \pm .1$ & $.42 \pm .03$ & $.24 \pm .0$ & $.09 \pm .04$ \\
        \tarnet{}  & $.95 \pm .0$ & $.28 \pm .01$ & $.21 \pm .0$ & $.11 \pm .04$ \\
        CFR MMD & $.78 \pm .0$ & $.31 \pm .01$ & $.21 \pm .0$ & $.08 \pm .03$ \\
        CFR Wass & $.76 \pm .0$ & $.27 \pm .01$ & $.21 \pm .0$ &  $.09 \pm .03$ \\
        \hline
      \end{tabular}
    \end{sc}
    \end{small}
  \end{center}
  \vspace{-1em}
\end{table}

\begin{figure}[t]
  \centering
  \includegraphics[width=0.85\columnwidth]{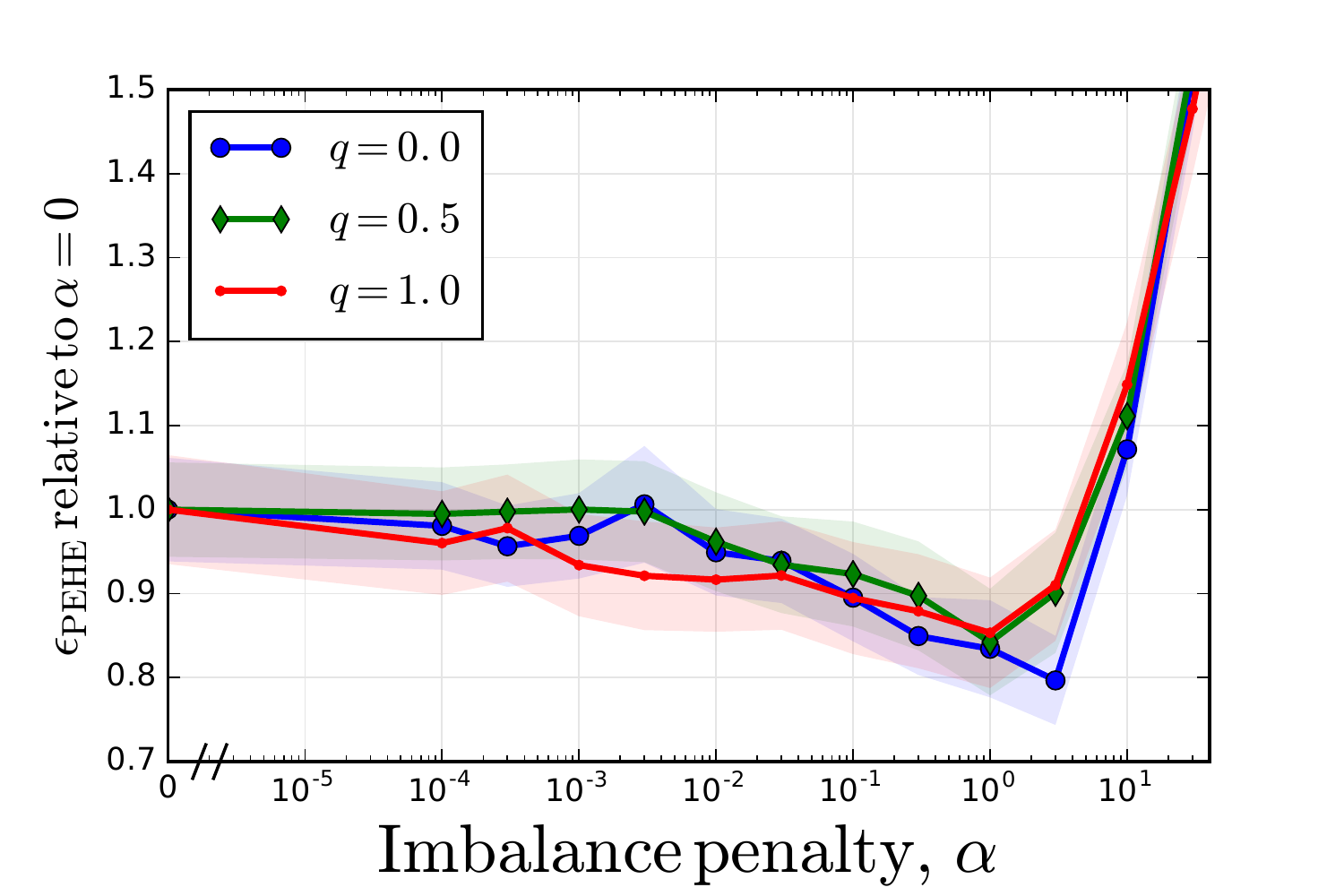}\vspace{-1em}
  \caption{\label{fig:ihdp_rel_imb}Out-of-sample ITE error versus IPM regularization for CFR Wass, relative to the error at $\alpha=0$, on 500 realizations of IHDP, with high ($q=1$), medium and low (artificial) imbalance between control and treated.}
  \vspace{-1.2em}
\end{figure}

\vskip -5pt
We investigate the effects of increasing imbalance between the original treatment groups by constructing biased subsamples of the IHDP dataset. A logistic-regression propensity score model is fit to form estimates $\hat{p}(t=1|x)$ of the conditional treatment probability. Then, repeatedly, with probability $q$ we remove the remaining \emph{control} observation $x$ that has $\hat{p}(t=1|x)$ closest to $1$, and with probability $1-q$, we remove a random control observation. The higher $q$, the more imbalance. For each value of $q$, we remove $347$ observations from each set, leaving $400$.

\begin{figure}[t]
  \centering
  \includegraphics[width=0.85\columnwidth]{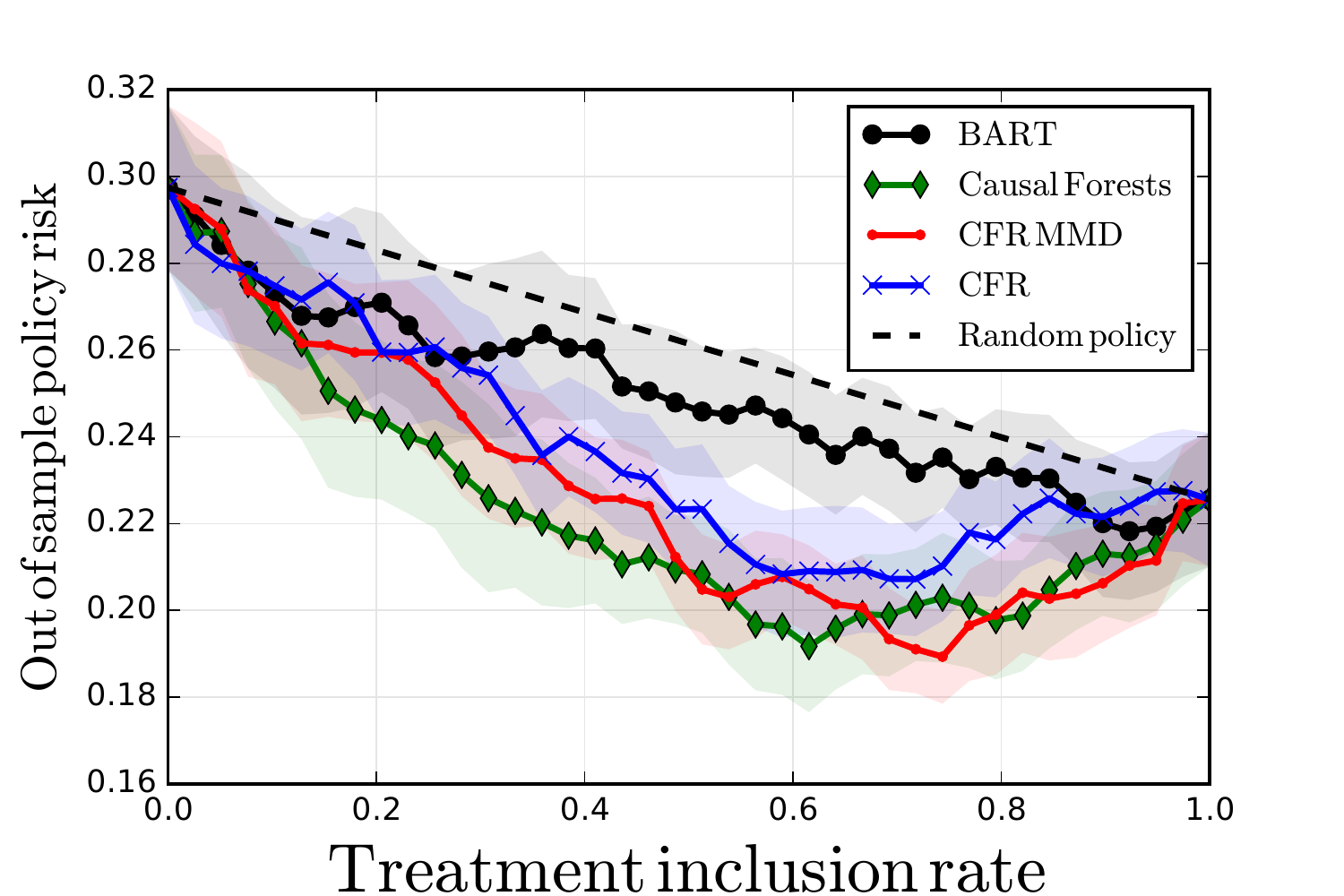}\vspace{-0.5em}
  \caption{\label{fig:policy_curve}Policy risk on Jobs as a function of treatment inclusion rate. Lower is better. Subjects are included in treatment in order of their estimated treatment effect given by the various methods. CFR Wass is similar to CFR and is omitted to avoid clutter. }
  \vspace{-1.2em}
\end{figure}

\subsection{Real-world outcome: Jobs}
\label{sec:nsw}
The study by \citet{lalonde1986evaluating} is a widely used benchmark in the causal inference community, where the treatment is job training and the outcomes are income and employment status after training. This dataset combines a randomized study based on the National Supported Work program with observational data to form a larger dataset~\citep{smith2005does}. The presence of the randomized subgroup gives a way to estimate the ``ground truth'' causal effect. The study includes 8 covariates such as age and education, as well as previous earnings. We construct a \emph{binary} classification task, called \emph{Jobs}, where the goal is to predict unemployment, using the feature set of \citet{dehejia2002propensity}. Following \citet{smith2005does}, we use the LaLonde experimental sample (297 treated, 425 control) and the PSID comparison group (2490 control). There were 482 (15\%) subjects unemployed by the end of the study. We average over 10 train/validation/test splits with ratios 56/24/20.

Because all the treated subjects $T$ were part of the original randomized sample $E$, we can compute the true average treatment effect on the treated by $\text{ATT} = {|T|}^{-1}\sum_{i \in T} y_i - {|C \cap E|}^{-1}\sum_{i \in C \cap E} y_i$, where $C$ is the control group. We report the error $\eatt = | \text{ATT} - \frac{1}{|T|}\sum_{i\in T} (f(x_i, 1) - f(x_i, 0))|$.
We cannot evaluate $\epehe$ on this dataset, since there is no ground truth for the ITE. Instead, in order to evaluate the quality of ITE estimation, we use a measure we call \emph{policy risk}. The policy risk is defined as the average loss in value when treating according to the policy implied by an ITE estimator. In our case, for a model $f$, we let the policy be to treat, $\pi_f(x) = 1$, if $f(x,1) - f(x,0) > \lambda$, and to not treat, $\pi_f(x) = 0$ otherwise. The policy risk is $R_{\text{Pol}}(\pi_f) = 1 - (\mathbb{E}[Y_1|\pi_f(x)=1]\cdot p(\pi_f=1) + \mathbb{E}[Y_0|\pi_f(x)=0]\cdot p(\pi_f=0))$ which we can estimate for the randomized trial subset of Jobs by $\hat{R}_{\text{Pol}}(\pi_f = 1 - (\mathbb{E}[Y_1|\pi_f(x)=1, t=1]\cdot p(\pi_f=1) + \mathbb{E}[Y_0|\pi_f(x)=0, t=0]\cdot p(\pi_f=0))$. See figure~\ref{fig:policy_curve} for risk as a function of treatment threshold $\lambda$, aligned by proportion of treated, and Table \ref{tbl:ihdp_jobs_results} for the risk when $\lambda=0$.
\vskip -5pt
\subsection{Results}
We begin by noting that indeed imbalance confers an advantage to using the IPM regularization term, as our theoretical results indicate, see e.g. the results for CFR Wass ($\alpha > 0$) and \tarnet{} ($\alpha = 0$) on IHDP in Table~\ref{tbl:ihdp_jobs_results}. We also see in Figure~\ref{fig:ihdp_rel_imb} that even for the harder case of increased imbalance ($q>0$) between treated and control, the relative gain from using our method remains significant. On Jobs, we see a smaller gain from using IPM penalties than on IHDP. We believe this is the case because, while we are minimizing our bound over observational data and accounting for this bias, we are evaluating the predictions only on a randomized subset, where the treatment groups are distributed identically.
For both IHDP, non-linear estimators do significantly better than linear ones in terms of individual effect ($\epehe$). On the Jobs dataset, straightforward logistic regression does remarkably well in estimating the ATT. However, being a linear model, LR can only ascribe a uniform policy - in this case, ``treat everyone''. The more nuanced policies offered by non-linear methods achieve lower policy risk in the case of Causal Forests and CFR. This emphasizes the fact that estimating average effect and individual effect can require different models. Specifically, while smoothing over many units may yield a good ATE estimate, this might significantly hurt ITE estimation. $k$-nearest neighbors has very good within-sample results on Jobs, because evaluation is performed over the randomized component, but suffers heavily in generalizing out of sample, as expected.

%

\vskip -10pt
\section{Conclusion}
In this paper we give a meaningful and intuitive error bound for the problem of estimating individual treatment effect. Our bound relates ITE estimation to the classic machine learning problem of learning from finite samples, along with methods for measuring distributional distances from finite samples. The bound lends itself naturally to the creation of learning algorithms; we focus on using neural nets as representations and hypotheses. We apply our theory-guided approach to both synthetic and real-world tasks, showing that in every case our method matches or outperforms the state-of-the-art. Important open questions are theoretical considerations in choosing the IPM weight $\alpha$, how to best derive confidence intervals for our model's predictions, and how to integrate our work with more complicated causal models such as those with hidden confounding or instrumental variables.

\subsubsection*{Acknowledgments}
We wish to thank Aahlad Manas for his assistance with the experiments. We also thank Jennifer Hill, Marco Cuturi, Esteban Tabak and Sanjong Misra for fruitful conversations, and Stefan Wager for his help with the code for Causal Forests. DS and US were supported by NSF CAREER award \#1350965.

{\small
\bibliographystyle{icml2016}
\bibliography{cfr}

\begin{thebibliography}{61}
\providecommand{\natexlab}[1]{#1}
\providecommand{\url}[1]{\texttt{#1}}
\expandafter\ifx\csname urlstyle\endcsname\relax
  \providecommand{\doi}[1]{doi: #1}\else
  \providecommand{\doi}{doi: \begingroup \urlstyle{rm}\Url}\fi

\bibitem[mat()]{mathover}
Math{O}verflow: functions with orthogonal {J}acobian.
\newblock
  \url{https://mathoverflow.net/questions/228964/functions-with-orthogonal-jacobian}.
\newblock Accessed: 2016-05-05.

\bibitem[Athey \& Imbens(2016)Athey and Imbens]{athey2016recursive}
Athey, Susan and Imbens, Guido.
\newblock Recursive partitioning for heterogeneous causal effects.
\newblock \emph{Proceedings of the National Academy of Sciences}, 113\penalty0
  (27):\penalty0 7353--7360, 2016.

\bibitem[Athey et~al.(2016)Athey, Imbens, and Wager]{athey2016efficient}
Athey, Susan, Imbens, Guido~W, and Wager, Stefan.
\newblock Efficient inference of average treatment effects in high dimensions
  via approximate residual balancing.
\newblock \emph{arXiv preprint arXiv:1604.07125}, 2016.

\bibitem[Aude et~al.(2016)Aude, Cuturi, Peyr{\'e}, and
  Bach]{aude2016stochastic}
Aude, Genevay, Cuturi, Marco, Peyr{\'e}, Gabriel, and Bach, Francis.
\newblock Stochastic optimization for large-scale optimal transport.
\newblock \emph{arXiv preprint arXiv:1605.08527}, 2016.

\bibitem[Austin(2011)]{austin2011introduction}
Austin, Peter~C.
\newblock An introduction to propensity score methods for reducing the effects
  of confounding in observational studies.
\newblock \emph{Multivariate behavioral research}, 46\penalty0 (3):\penalty0
  399--424, 2011.

\bibitem[Balke \& Pearl(1997)Balke and Pearl]{balke1997bounds}
Balke, Alexander and Pearl, Judea.
\newblock Bounds on treatment effects from studies with imperfect compliance.
\newblock \emph{Journal of the American Statistical Association}, 92\penalty0
  (439):\penalty0 1171--1176, 1997.

\bibitem[Bareinboim \& Pearl(2012)Bareinboim and
  Pearl]{bareinboim2012controlling}
Bareinboim, Elias and Pearl, Judea.
\newblock Controlling selection bias in causal inference.
\newblock In \emph{AISTATS}, pp.\  100--108, 2012.

\bibitem[Bareinboim \& Pearl(2016)Bareinboim and Pearl]{bareinboim2016causal}
Bareinboim, Elias and Pearl, Judea.
\newblock Causal inference and the data-fusion problem.
\newblock \emph{Proceedings of the National Academy of Sciences}, 113\penalty0
  (27):\penalty0 7345--7352, 2016.

\bibitem[Beck et~al.(2000)Beck, King, and Zeng]{beck2000improving}
Beck, Nathaniel, King, Gary, and Zeng, Langche.
\newblock Improving quantitative studies of international conflict: A
  conjecture.
\newblock \emph{American Political Science Review}, 94\penalty0 (01):\penalty0
  21--35, 2000.

\bibitem[Belloni et~al.(2014)Belloni, Chernozhukov, and
  Hansen]{belloni2014inference}
Belloni, Alexandre, Chernozhukov, Victor, and Hansen, Christian.
\newblock Inference on treatment effects after selection among high-dimensional
  controls.
\newblock \emph{The Review of Economic Studies}, 81\penalty0 (2):\penalty0
  608--650, 2014.

\bibitem[Ben-David et~al.(2007)Ben-David, Blitzer, Crammer, Pereira,
  et~al.]{ben2007analysis}
Ben-David, Shai, Blitzer, John, Crammer, Koby, Pereira, Fernando, et~al.
\newblock Analysis of representations for domain adaptation.
\newblock \emph{Advances in neural information processing systems},
  19:\penalty0 137, 2007.

\bibitem[Ben-David et~al.(2010)Ben-David, Blitzer, Crammer, Kulesza, Pereira,
  and Vaughan]{bendavid2010theory}
Ben-David, Shai, Blitzer, John, Crammer, Koby, Kulesza, Alex, Pereira,
  Fernando, and Vaughan, Jennifer~Wortman.
\newblock A theory of learning from different domains.
\newblock \emph{Machine learning}, 79\penalty0 (1-2):\penalty0 151--175, 2010.

\bibitem[Ben-Israel(1999)]{ben1999change}
Ben-Israel, Adi.
\newblock The change-of-variables formula using matrix volume.
\newblock \emph{SIAM Journal on Matrix Analysis and Applications}, 21\penalty0
  (1):\penalty0 300--312, 1999.

\bibitem[Bengio et~al.(2013)Bengio, Courville, and
  Vincent]{bengio2013representation}
Bengio, Yoshua, Courville, Aaron, and Vincent, Pierre.
\newblock Representation learning: A review and new perspectives.
\newblock \emph{Pattern Analysis and Machine Intelligence, IEEE Transactions
  on}, 35\penalty0 (8):\penalty0 1798--1828, 2013.

\bibitem[Breiman(2001)]{breiman2001random}
Breiman, Leo.
\newblock Random forests.
\newblock \emph{Machine learning}, 45\penalty0 (1):\penalty0 5--32, 2001.

\bibitem[Cai et~al.(2008)Cai, Kuroki, Pearl, and Tian]{cai2008bounds}
Cai, Zhihong, Kuroki, Manabu, Pearl, Judea, and Tian, Jin.
\newblock Bounds on direct effects in the presence of confounded intermediate
  variables.
\newblock \emph{Biometrics}, 64\penalty0 (3):\penalty0 695--701, 2008.

\bibitem[Chernozhukov et~al.(2016)Chernozhukov, Chetverikov, Demirer, Duflo,
  Hansen, et~al.]{chernozhukov2016double}
Chernozhukov, Victor, Chetverikov, Denis, Demirer, Mert, Duflo, Esther, Hansen,
  Christian, et~al.
\newblock Double machine learning for treatment and causal parameters.
\newblock \emph{arXiv preprint arXiv:1608.00060}, 2016.

\bibitem[Chipman \& McCulloch(2016)Chipman and McCulloch]{bayestree}
Chipman, Hugh and McCulloch, Robert.
\newblock {BayesTree: Bayesian Additive Regression Trees}.
\newblock {\url{https://cran.r-project.org/web/packages/BayesTree}}, 2016.

\bibitem[Chipman et~al.(2010)Chipman, George, and McCulloch]{chipman2010bart}
Chipman, Hugh~A, George, Edward~I, and McCulloch, Robert~E.
\newblock {BART}: Bayesian additive regression trees.
\newblock \emph{The Annals of Applied Statistics}, pp.\  266--298, 2010.

\bibitem[Cortes \& Mohri(2014)Cortes and Mohri]{cortes2014domain}
Cortes, Corinna and Mohri, Mehryar.
\newblock Domain adaptation and sample bias correction theory and algorithm for
  regression.
\newblock \emph{Theoretical Computer Science}, 519:\penalty0 103--126, 2014.

\bibitem[Cuturi(2013)]{cuturi2013sinkhorn}
Cuturi, Marco.
\newblock Sinkhorn distances: Lightspeed computation of optimal transport.
\newblock In \emph{Advances in Neural Information Processing Systems}, pp.\
  2292--2300, 2013.

\bibitem[Cuturi \& Doucet(2014)Cuturi and Doucet]{cuturi2014fast}
Cuturi, Marco and Doucet, Arnaud.
\newblock Fast computation of {W}asserstein barycenters.
\newblock In \emph{Proceedings of The 31st International Conference on Machine
  Learning}, pp.\  685--693, 2014.

\bibitem[Daum{\'e}~III(2009)]{daume2009frustratingly}
Daum{\'e}~III, Hal.
\newblock Frustratingly easy domain adaptation.
\newblock \emph{arXiv preprint arXiv:0907.1815}, 2009.

\bibitem[Dehejia \& Wahba(2002)Dehejia and Wahba]{dehejia2002propensity}
Dehejia, Rajeev~H and Wahba, Sadek.
\newblock Propensity score-matching methods for nonexperimental causal studies.
\newblock \emph{Review of Economics and statistics}, 84\penalty0 (1):\penalty0
  151--161, 2002.

\bibitem[Dorie(2016)]{npci}
Dorie, Vincent.
\newblock {NPCI: Non-parametrics for Causal Inference}.
\newblock {\url{https://github.com/vdorie/npci}}, 2016.

\bibitem[Funk et~al.(2011)Funk, Westreich, Wiesen, St{\"u}rmer, Brookhart, and
  Davidian]{funk2011doubly}
Funk, Michele~Jonsson, Westreich, Daniel, Wiesen, Chris, St{\"u}rmer, Til,
  Brookhart, M~Alan, and Davidian, Marie.
\newblock Doubly robust estimation of causal effects.
\newblock \emph{American journal of epidemiology}, 173\penalty0 (7):\penalty0
  761--767, 2011.

\bibitem[Ganin et~al.(2016)Ganin, Ustinova, Ajakan, Germain, Larochelle,
  Laviolette, Marchand, and Lempitsky]{JMLR:v17:15-239}
Ganin, Yaroslav, Ustinova, Evgeniya, Ajakan, Hana, Germain, Pascal, Larochelle,
  Hugo, Laviolette, Fran\c{c}ois, Marchand, Mario, and Lempitsky, Victor.
\newblock Domain-adversarial training of neural networks.
\newblock \emph{Journal of Machine Learning Research}, 17\penalty0
  (59):\penalty0 1--35, 2016.
\newblock URL \url{http://jmlr.org/papers/v17/15-239.html}.

\bibitem[Gretton et~al.(2009)Gretton, Smola, Huang, Schmittfull, Borgwardt, and
  Sch{\"o}lkopf]{gretton2009covariate}
Gretton, Arthur, Smola, Alex, Huang, Jiayuan, Schmittfull, Marcel, Borgwardt,
  Karsten, and Sch{\"o}lkopf, Bernhard.
\newblock Covariate shift by kernel mean matching.
\newblock \emph{Dataset shift in machine learning}, 3\penalty0 (4):\penalty0 5,
  2009.

\bibitem[Gretton et~al.(2012)Gretton, Borgwardt, Rasch, Sch\"{o}lkopf, and
  Smola]{gretton2012mmd}
Gretton, Arthur, Borgwardt, Karsten~M., Rasch, Malte~J., Sch\"{o}lkopf,
  Bernhard, and Smola, Alexander.
\newblock A kernel two-sample test.
\newblock \emph{J. Mach. Learn. Res.}, 13:\penalty0 723--773, March 2012.
\newblock ISSN 1532-4435.

\bibitem[Gruber \& van~der Laan(2011)Gruber and van~der Laan]{gruber2011tmle}
Gruber, Susan and van~der Laan, Mark~J.
\newblock tmle: An r package for targeted maximum likelihood estimation.
\newblock 2011.

\bibitem[Grunewalder et~al.(2013)Grunewalder, Arthur, and
  Shawe-Taylor]{grunewalder2013smooth}
Grunewalder, Steffen, Arthur, Gretton, and Shawe-Taylor, John.
\newblock Smooth operators.
\newblock In \emph{Proceedings of the 30th International Conference on Machine
  Learning (ICML-13)}, pp.\  1184--1192, 2013.

\bibitem[Hartford et~al.(2016)Hartford, Lewis, Leyton-Brown, and
  Taddy]{hartford2016counterfactual}
Hartford, Jason, Lewis, Greg, Leyton-Brown, Kevin, and Taddy, Matt.
\newblock Counterfactual prediction with deep instrumental variables networks.
\newblock \emph{arXiv preprint arXiv:1612.09596}, 2016.

\bibitem[Hill(2011)]{hill2011bayesian}
Hill, Jennifer~L.
\newblock Bayesian nonparametric modeling for causal inference.
\newblock \emph{Journal of Computational and Graphical Statistics}, 20\penalty0
  (1), 2011.

\bibitem[Hoyer et~al.(2009)Hoyer, Janzing, Mooij, Peters, and
  Sch{\"o}lkopf]{hoyer2009nonlinear}
Hoyer, Patrik~O, Janzing, Dominik, Mooij, Joris~M, Peters, Jonas, and
  Sch{\"o}lkopf, Bernhard.
\newblock Nonlinear causal discovery with additive noise models.
\newblock In \emph{Advances in neural information processing systems}, pp.\
  689--696, 2009.

\bibitem[Imbens \& Wooldridge(2009)Imbens and Wooldridge]{imbens2009recent}
Imbens, Guido~W and Wooldridge, Jeffrey~M.
\newblock Recent developments in the econometrics of program evaluation.
\newblock \emph{Journal of economic literature}, 47\penalty0 (1):\penalty0
  5--86, 2009.

\bibitem[Johansson et~al.(2016)Johansson, Shalit, and
  Sontag]{johansson2016counterfactual}
Johansson, Fredrik~D., Shalit, Uri, and Sontag, David.
\newblock Learning representations for counterfactual inference.
\newblock In \emph{Proceedings of the 33rd International Conference on Machine
  Learning (ICML)}, 2016.

\bibitem[Kingma \& Ba(2014)Kingma and Ba]{kingma2014adam}
Kingma, Diederik and Ba, Jimmy.
\newblock Adam: A method for stochastic optimization.
\newblock \emph{arXiv preprint arXiv:1412.6980}, 2014.

\bibitem[Kuang \& Tabak(2016)Kuang and Tabak]{tabak2016preconditioning}
Kuang, Max and Tabak, Esteban.
\newblock Preconditioning of optimal transport.
\newblock \emph{Preprint}, 2016.

\bibitem[La{L}onde(1986)]{lalonde1986evaluating}
La{L}onde, Robert~J.
\newblock Evaluating the econometric evaluations of training programs with
  experimental data.
\newblock \emph{The American economic review}, pp.\  604--620, 1986.

\bibitem[Maathuis et~al.(2010)Maathuis, Colombo, Kalisch, and
  B{\"u}hlmann]{maathuis2010predicting}
Maathuis, Marloes~H, Colombo, Diego, Kalisch, Markus, and B{\"u}hlmann, Peter.
\newblock Predicting causal effects in large-scale systems from observational
  data.
\newblock \emph{Nature Methods}, 7\penalty0 (4):\penalty0 247--248, 2010.

\bibitem[Mansour et~al.(2009)Mansour, Mohri, and
  Rostamizadeh]{mansour2009bdomain}
Mansour, Yishay, Mohri, Mehryar, and Rostamizadeh, Afshin.
\newblock \emph{Domain adaptation: Learning bounds and algorithms}.
\newblock 2009.

\bibitem[Mooij et~al.(2016)Mooij, Peters, Janzing, Zscheischler, and
  Sch{\"o}lkopf]{mooij2016distinguishing}
Mooij, Joris~M, Peters, Jonas, Janzing, Dominik, Zscheischler, Jakob, and
  Sch{\"o}lkopf, Bernhard.
\newblock Distinguishing cause from effect using observational data: methods
  and benchmarks.
\newblock \emph{Journal of Machine Learning Research}, 17\penalty0
  (32):\penalty0 1--102, 2016.

\bibitem[M{\"u}ller(1997)]{muller1997integral}
M{\"u}ller, Alfred.
\newblock Integral probability metrics and their generating classes of
  functions.
\newblock \emph{Advances in Applied Probability}, pp.\  429--443, 1997.

\bibitem[Pan et~al.(2011)Pan, Tsang, Kwok, and Yang]{pan2011domain}
Pan, Sinno~Jialin, Tsang, Ivor~W, Kwok, James~T, and Yang, Qiang.
\newblock Domain adaptation via transfer component analysis.
\newblock \emph{Neural Networks, IEEE Transactions on}, 22\penalty0
  (2):\penalty0 199--210, 2011.

\bibitem[Pearl(2009)]{pearl2009causality}
Pearl, Judea.
\newblock \emph{Causality}.
\newblock Cambridge university press, 2009.

\bibitem[Pearl(2015)]{pearl2015detecting}
Pearl, Judea.
\newblock Detecting latent heterogeneity.
\newblock \emph{Sociological Methods \& Research}, pp.\  0049124115600597,
  2015.

\bibitem[Peysakhovich \& Lada(2016)Peysakhovich and
  Lada]{peysakhovich2016combining}
Peysakhovich, Alexander and Lada, Akos.
\newblock Combining observational and experimental data to find heterogeneous
  treatment effects.
\newblock \emph{arXiv preprint arXiv:1611.02385}, 2016.

\bibitem[Rolling(2014)]{rolling2014estimation}
Rolling, Craig~Anthony.
\newblock \emph{Estimation of Conditional Average Treatment Effects}.
\newblock PhD thesis, University of Minnesota, 2014.

\bibitem[Rubin(2011)]{rubin2011causal}
Rubin, Donald~B.
\newblock Causal inference using potential outcomes.
\newblock \emph{Journal of the American Statistical Association}, 2011.

\bibitem[Shalev-Shwartz \& Ben-David(2014)Shalev-Shwartz and
  Ben-David]{shalev2014understanding}
Shalev-Shwartz, Shai and Ben-David, Shai.
\newblock \emph{Understanding machine learning: From theory to algorithms}.
\newblock Cambridge University Press, 2014.

\bibitem[Shpitser \& Pearl(2006)Shpitser and Pearl]{shpitser2006ident}
Shpitser, Ilya and Pearl, Judea.
\newblock Identification of conditional interventional distributions.
\newblock In \emph{Proceedings of the Twenty-second Conference on Uncertainty
  in Artificial Intelligence}, pp.\  437--444. UAI Press, 2006.

\bibitem[Smith \& Todd(2005)Smith and Todd]{smith2005does}
Smith, Jeffrey~A and Todd, Petra~E.
\newblock Does matching overcome {L}a{L}onde's critique of nonexperimental
  estimators?
\newblock \emph{Journal of econometrics}, 125\penalty0 (1):\penalty0 305--353,
  2005.

\bibitem[Sriperumbudur et~al.(2012)Sriperumbudur, Fukumizu, Gretton,
  Sch{\"o}lkopf, Lanckriet, et~al.]{sriperumbudur2012empirical}
Sriperumbudur, Bharath~K, Fukumizu, Kenji, Gretton, Arthur, Sch{\"o}lkopf,
  Bernhard, Lanckriet, Gert~RG, et~al.
\newblock On the empirical estimation of integral probability metrics.
\newblock \emph{Electronic Journal of Statistics}, 6:\penalty0 1550--1599,
  2012.

\bibitem[Steinwart \& Christmann(2008)Steinwart and
  Christmann]{steinwart2008support}
Steinwart, Ingo and Christmann, Andreas.
\newblock \emph{Support vector machines}.
\newblock Springer Science \& Business Media, 2008.

\bibitem[Strehl et~al.(2010)Strehl, Langford, Li, and
  Kakade]{strehl2010learning}
Strehl, Alex, Langford, John, Li, Lihong, and Kakade, Sham~M.
\newblock Learning from logged implicit exploration data.
\newblock In \emph{Advances in Neural Information Processing Systems}, pp.\
  2217--2225, 2010.

\bibitem[Sun et~al.(2016)Sun, Feng, and Saenko]{sun2016return}
Sun, Baochen, Feng, Jiashi, and Saenko, Kate.
\newblock Return of frustratingly easy domain adaptation.
\newblock In \emph{Thirtieth AAAI Conference on Artificial Intelligence}, 2016.

\bibitem[Swaminathan \& Joachims(2015)Swaminathan and
  Joachims]{swaminathan2015batch}
Swaminathan, Adith and Joachims, Thorsten.
\newblock Batch learning from logged bandit feedback through counterfactual
  risk minimization.
\newblock \emph{Journal of Machine Learning Research}, 16:\penalty0 1731--1755,
  2015.

\bibitem[Taddy et~al.(2016)Taddy, Gardner, Chen, and
  Draper]{taddy2016nonparametric}
Taddy, Matt, Gardner, Matt, Chen, Liyun, and Draper, David.
\newblock A nonparametric bayesian analysis of heterogenous treatment effects
  in digital experimentation.
\newblock \emph{Journal of Business \& Economic Statistics}, 34\penalty0
  (4):\penalty0 661--672, 2016.

\bibitem[Triantafillou \& Tsamardinos(2015)Triantafillou and
  Tsamardinos]{triantafillou2015constraint}
Triantafillou, Sofia and Tsamardinos, Ioannis.
\newblock Constraint-based causal discovery from multiple interventions over
  overlapping variable sets.
\newblock \emph{Journal of Machine Learning Research}, 16:\penalty0 2147--2205,
  2015.

\bibitem[Villani(2008)]{villani2008optimal}
Villani, C{\'e}dric.
\newblock \emph{Optimal transport: old and new}, volume 338.
\newblock Springer Science \& Business Media, 2008.

\bibitem[Wager \& Athey(2015)Wager and Athey]{wager2015estimation}
Wager, Stefan and Athey, Susan.
\newblock Estimation and inference of heterogeneous treatment effects using
  random forests.
\newblock \emph{arXiv preprint arXiv:1510.04342.
  \url{https://github.com/susanathey/causalTree}}, 2015.

\end{thebibliography}
}
\newpage

\appendix

\section{Proofs}\label{sec:proof}

\subsection{Definitions, assumptions, and auxiliary lemmas}
\begin{figure*}
\begin{minipage}{\textwidth}%
\noindent\fbox{%
    \parbox{\textwidth}
    {{\bf{Notation:}} \\
     $p(x,t)$: distribution on $\cX \times \{0,1\}$\\
     $u = p(t=1)$: the marginal probability of treatment.\\
     $\pt(x) = p(x|t=1)$: treated distribution. $\pc(x) = p(x|t=0)$: control distribution.\\
     $\Phi$: representation function mapping from $\cX$ to  $\cR$.\\
     $\Psi$: the inverse function of $\Phi$, mapping from $\cR$ to $\cX$.\\
     $p_\Phi(r,t)$: the distribution induced by $\Phi$ on $\cR \times \{0,1\}$.\\
       $\pt_\Phi(r)$, $\pc_\Phi(r)$: treated and control distributions induced by $\Phi$ on $\cR$.\\
     $L(\cdot,\cdot)$: loss function, from $\cY \times \cY$ to $\R_+$.\\
     $\lyth$: the expected loss of $h(\Phi(x),t)$ for the unit $x$ and treatment $t$.\\
     $\epsilon_F(h,\Phi)$, $\epsilon_{CF}(h,\Phi)$: expected factual and counterfactual loss of $h(\Phi(x),t)$.\\
     $\tau(x) := \E\left[Y_1 - Y_0 | x \right]$, the expected treatment effect for unit $x$.\\
     $\epehe(f)$: expected error in estimating the individual treatment effect of a function $f(x,t)$. \\
     $\text{IPM}_\cF(p,q)$: the integral probability metric distance induced by function family $\cF$ between distributions $p$ and $q$.
   \vskip 0pt
    }}
\end{minipage}
\end{figure*}

We first define the necessary distributions and prove some simple results about them. We assume a joint distribution function $p(x,t,Y_0,Y_1)$, such that $(Y_1, Y_0)\indep t | x $, and $0<p(t=1|x)<1$ for all $x$. 
Recall that we assume \emph{Consistency}, that is we assume that we observe $y=Y_1|(t=1)$ and $y=Y_0|(t=0)$.

\begin{thmappdef}\label{def:teA}
The treatment effect for unit $x$ is:
$$\tau(x) := \E\left[Y_1 - Y_0 | x \right].$$
\end{thmappdef}

We first show that under consistency and strong ignorability, the ITE function $\tau(x)$ is identifiable:
\begin{thmapplem}
We have:
\begin{align}
&\E\left[Y_1-Y_0|x\right] =\nonumber \\
&\E\left[Y_1 |x\right]-\E\left[Y_0|x\right] =  \label{eq:useig}\\
&\E\left[Y_1|x,t=1\right]-\E\left[Y_0|x, t=0\right] = \label{eq:useconsistency}\\
&\E\left[y|x,t=1\right]-\E\left[y|x, t=0\right] .\nonumber 
\end{align}
Equality \eqref{eq:useig} is because we assume that $Y_t$ and $t$ are independent conditioned on $x$.
Equality \eqref{eq:useconsistency} follows from the consistency assumption.
Finally, the last equation is composed entirely of observable quantities and can be estimated from data since we assume $0<p(t=1|x)<1$ for all $x$.
\end{thmapplem}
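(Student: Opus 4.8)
The plan is to derive the displayed chain of equalities by invoking the three modeling assumptions one at a time, each responsible for exactly one equality. First I would use linearity of conditional expectation: since $Y_1$ and $Y_0$ have finite conditional means $m_1(x)$ and $m_0(x)$, we have $\E[Y_1 - Y_0 \mid x] = \E[Y_1\mid x] - \E[Y_0\mid x]$. This quantity does not condition on $t$ and hence is not directly estimable from data, which motivates the next two steps.

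Second I would invoke strong ignorability, $(Y_1,Y_0)\indep t\mid x$, which in particular gives $Y_1 \indep t\mid x$ and $Y_0 \indep t\mid x$. Conditional independence means the conditional law of $Y_1$ given $(x,t)$ does not depend on the value of $t$, so $\E[Y_1\mid x] = \E[Y_1\mid x,t=1]$, and symmetrically $\E[Y_0\mid x] = \E[Y_0\mid x,t=0]$. The positivity half of strong ignorability, $0<p(t=1\mid x)<1$ for all $x\in\cX$, is what guarantees the events $\{t=1\}$ and $\{t=0\}$ each have positive probability after conditioning on $x$, so these conditional expectations are well defined everywhere.

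Third I would apply the Consistency assumption, which says the observed outcome satisfies $y=Y_1$ on $\{t=1\}$ and $y=Y_0$ on $\{t=0\}$. Restricting to the event $\{t=1\}$ lets me substitute $y$ for $Y_1$ inside $\E[\cdot\mid x,t=1]$, and likewise on $\{t=0\}$, giving $\E[Y_1\mid x,t=1] - \E[Y_0\mid x,t=0] = \E[y\mid x,t=1] - \E[y\mid x,t=0]$. The right-hand side is a function of the observed triple $(x,t,y)$ only, so $\tau(x)$ is identified.

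I do not expect a real obstacle: the argument is elementary. The one point requiring care is bookkeeping about which conditioning event each expectation lives on, together with the explicit use of positivity, since without overlap the conditioning on $\{t=0\}$ or $\{t=1\}$ could be undefined on a non-null set of covariates and the identification would break down there.
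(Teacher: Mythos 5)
Your proposal is correct and follows essentially the same route as the paper: linearity of conditional expectation, then strong ignorability to introduce the conditioning on $t$, then consistency to replace the potential outcomes by the observed $y$, with positivity guaranteeing the conditional expectations are well defined. Your explicit remark on where positivity enters is a slightly more careful bookkeeping of the same argument, not a different approach.
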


\begin{thmappdef}
Let $\pt(x) := p(x|t=1)$, and $\pc(x) := p(x|t=0)$ denote respectively the treatment and control distributions.
\end{thmappdef}

Let $\Phi : \cX \rightarrow \cR$ be a \emph{representation function}. We will assume that $\Phi$ is differentiable.

\begin{thmappasmp}\label{asmp:invA}
The representation function $\Phi $ is one-to-one. Without loss of generality we will assume that $\cR$ is the image of $\cX$ under $\Phi$, and define $\Psi :\cR \rightarrow \cX$ to be the inverse of $\Phi$, such that $\Psi(\Phi(x)) = x$ for all $x \in \cX$.
\end{thmappasmp}

\begin{thmappdef}\label{def:pphiA}
For a representation function $\Phi :\cX \rightarrow \cR$, and for a distribution $p$ defined over $\cX$, let $p_\Phi$ be the distribution induced by $\Phi$ over $\cR$. Define $\pt_\Phi(r) := p_\Phi(r|t=1)$, $\pc_\Phi(r) := p_\Phi(r|t=0)$, to be the treatment and control distributions induced over $\cR$.
\end{thmappdef}

For a one-to-one $\Phi$, the distribution $p_\Phi$ over $\cR \times \{0,1\}$ can be obtained by the standard change of variables formula, using the determinant of the Jacobian of $\Psi(r)$. See \cite{ben1999change} for the case of a mapping $\Phi$ between spaces of different dimensions.


\begin{thmapplem}\label{lemma:smpl_alg3}
For all $r \in \cR$, $t \in \{0,1\}$:
\begin{align*}
&p_\Phi(t|r) = p(t|\Psi(r))\\
& p(Y_t|r) = p(Y_t|\Psi(r)).
\end{align*}
\end{thmapplem}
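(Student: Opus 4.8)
The plan is to reduce both identities to the change-of-variables formula for the bijection $\Phi$ and then observe that the Jacobian factor relating $p_\Phi$ to $p$ is common to every numerator and denominator that appears, hence cancels. By Assumption~\ref{asmp:invA}, $\Phi$ is one-to-one and (being twice differentiable) has a differentiable inverse $\Psi$, so Definition~\ref{def:pphiA} together with the change-of-variables formula gives, for the induced joint distribution on $\cR \times \{0,1\}$, $p_\Phi(r,t) = p(\Psi(r),t)\,|J_\Psi(r)|$, and, applying the same transformation to the full joint $p(x,t,Y_0,Y_1)$ --- in which $t,Y_0,Y_1$ pass through unchanged and so contribute identity Jacobian blocks --- $p_\Phi(r,t,Y_0,Y_1) = p(\Psi(r),t,Y_0,Y_1)\,|J_\Psi(r)|$, where $J_\Psi(r)$ is the Jacobian of $\Psi$ at $r$ and $|J_\Psi(r)|>0$ for (almost) every $r$ by injectivity.

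First I would prove $p_\Phi(t\mid r) = p(t\mid\Psi(r))$. Summing the first identity over $t\in\{0,1\}$ gives $p_\Phi(r) = |J_\Psi(r)|\sum_{t}p(\Psi(r),t) = |J_\Psi(r)|\,p(\Psi(r))$, so $p_\Phi(t\mid r) = p_\Phi(r,t)/p_\Phi(r) = p(\Psi(r),t)/p(\Psi(r)) = p(t\mid\Psi(r))$, the factor $|J_\Psi(r)|$ cancelling; the ratio is well defined since strong ignorability gives $0<p(t=1\mid\Psi(r))<1$. For $p(Y_t\mid r) = p(Y_t\mid\Psi(r))$ I would integrate the second identity over the variables other than $Y_t$ and $r$ to get $p_\Phi(Y_t,r) = |J_\Psi(r)|\,p(Y_t,\Psi(r))$, then divide by $p_\Phi(r) = |J_\Psi(r)|\,p(\Psi(r))$; again $|J_\Psi(r)|$ cancels and we are left with $p(Y_t\mid\Psi(r))$.

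I expect the only delicate point to be bookkeeping in the change-of-variables step when $\dim\cX\neq\dim\cR$; that case is exactly the one covered by the reference already cited in the paper~\cite{ben1999change}, where the scalar factor becomes $\sqrt{\det\!\big(J_\Psi(r)^\top J_\Psi(r)\big)}$ instead of $|\det J_\Psi(r)|$ --- still a positive function of $r$ alone, so the cancellation is unaffected. A conceptually cleaner route, which I would add as a remark, is that a deterministic bijection loses no information: the $\sigma$-algebras generated by $X$ and by $\Phi(X)$ coincide, so conditioning on the event $\{\Phi(X)=r\}$ is the same as conditioning on $\{X=\Psi(r)\}$, whence the conditional laws of $t$ and of $Y_t$ are literally equal; the density manipulation above is just the explicit verification of this.
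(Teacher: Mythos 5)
Your proposal is correct and follows essentially the same route as the paper's proof: write the conditional as a ratio, apply the change-of-variables formula so that the Jacobian factor $J_\Psi(r)$ appears in both numerator and denominator, and cancel. Your additional remarks (deriving $p_\Phi(r)=J_\Psi(r)\,p(\Psi(r))$ by marginalizing over $t$, the treatment of $\dim\cX\neq\dim\cR$ via \cite{ben1999change}, and the $\sigma$-algebra viewpoint) are elaborations of details the paper leaves implicit, not a different argument.
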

\begin{proof}
Let $J_\Psi(r)$ be the absolute of the determinant of the Jacobian of $\Psi(r)$.
\begin{align*}
&p_\Phi(t|r) = \frac{p_\Phi(t,r)}{p_\Phi(r)}  \stackrel{(a)}{=}
\frac{p(t,\Psi(r))J_\Psi(r)}{p(\Psi(r)) J_\Psi(r)} = \\
&\frac{p(t,\Psi(r))}{p(\Psi(r))} = p(t|\Psi(r)),
\end{align*}
where equality (a) is by the change of variable formula. The proof is identical for $p(Y_t|r)$.
\end{proof}



Let $L : \cY \times \cY \rightarrow \R_+$ be a loss function, e.g. the absolute loss or squared loss.
\begin{thmappdef}\label{def:perunitlossA}
Let $\Phi : \cX \rightarrow \cR$ be a representation function.
Let $h :\cR \times \{0,1\} \rightarrow \cY$ be an hypothesis defined over the representation space $\cR$.
The expected loss for the unit and treatment pair $(x,t)$ is:
$$\lyth = \int_\cY L(Y_t,h(\Phi(x),t)) p(Y_t|x) dY_t$$
\end{thmappdef}

\begin{thmappdef}\label{def:epfepcf}
The expected factual loss and counterfactual losses of $h$ and $\Phi$ are, respectively:
$$\epsilon_F(h,\Phi) = \int_{\cX \times \{0,1\}} \!\!\!\!\!\!\! \lyth\, p(x,t)\, dxdt $$
$$\epsilon_{CF}(h,\Phi) = \int_{\cX \times \{0,1\}} \!\!\!\!\!\!\! \lyth\, p(x,1-t)\, dxdt .$$
\end{thmappdef}
When it is clear from the context, we will sometimes use $\epsilon_F(f)$ and $\epsilon_{CF}(f)$ for the expected factual and counterfactual losses of an arbitrary function $f: \cX \times \{0,1\} \rightarrow \cY$.

\begin{thmappdef}\label{def:decomploss}
The expected \emph{treated} and \emph{control} losses are:
$$\epsilon^{t=1}_F(h,\Phi) = \int_{\cX } \!\!\! \lyxoneh\, \pt(x)\, dx $$
$$\epsilon^{t=0}_F(h,\Phi) = \int_{\cX } \!\!\! \lyxzeroh\, \pc(x)\, dx $$
$$\epsilon^{t=1}_{CF}(h,\Phi) = \int_{\cX} \!\!\! \lyxoneh\, \pc(x)\, dx $$
$$\epsilon^{t=0}_{CF}(h,\Phi) = \int_{\cX } \!\!\! \lyxzeroh\, \pt(x)\, dx .$$
\end{thmappdef}
The four losses above are simply the loss conditioned on either the control or treated set.  Let $u := p(t=1)$ be the proportion of treated in the population. We then have the immediate result:
\begin{thmapplem}\label{lem:epsdecopm}
$$\epsilon_F(h,\Phi) = u\cdot \epsilon^{t=1}_F(h,\Phi) + (1-u)\cdot \epsilon^{t=0}_F(h,\Phi)$$
$$\epsilon_{CF}(h,\Phi) = (1-u)\cdot \epsilon^{t=1}_{CF}(h,\Phi) + u\cdot  \epsilon^{t=0}_{CF}(h,\Phi).$$
\end{thmapplem}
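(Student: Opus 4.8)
\textbf{Proof proposal for Lemma \ref{lem:epsdecopm}.} The plan is to prove each identity by splitting the integral over the binary variable $t$ and then applying the definition of the conditional densities $\pt$ and $\pc$. This is an elementary bookkeeping argument with no real obstacle; I spell out the two cases below.

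For the factual loss, I would start from Definition \ref{def:epfepcf}, write the integral over $\cX \times \{0,1\}$ as the sum of the $t=1$ and $t=0$ contributions,
\begin{align*}
\epsilon_F(h,\Phi) = \int_\cX \lyxoneh\, p(x,t{=}1)\, dx + \int_\cX \lyxzeroh\, p(x,t{=}0)\, dx,
\end{align*}
and then substitute $p(x,t{=}1) = p(x\mid t{=}1)\,p(t{=}1) = u\,\pt(x)$ and $p(x,t{=}0) = (1-u)\,\pc(x)$. Pulling the constants $u$ and $1-u$ out of the respective integrals and recognizing the two integrals as $\epsilon^{t=1}_F(h,\Phi)$ and $\epsilon^{t=0}_F(h,\Phi)$ (Definition \ref{def:decomploss}) yields the first claim.

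For the counterfactual loss the argument is the same, except one must track that the density paired with a given value of $t$ is $p(x,1-t)$. Splitting the integral, the $t=1$ term carries the weight $p(x,t{=}0) = (1-u)\,\pc(x)$ and the $t=0$ term carries $p(x,t{=}1) = u\,\pt(x)$, so
\begin{align*}
\epsilon_{CF}(h,\Phi) = (1-u)\int_\cX \lyxoneh\, \pc(x)\, dx + u\int_\cX \lyxzeroh\, \pt(x)\, dx,
\end{align*}
which by Definition \ref{def:decomploss} equals $(1-u)\,\epsilon^{t=1}_{CF}(h,\Phi) + u\,\epsilon^{t=0}_{CF}(h,\Phi)$, the second claim. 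The only point requiring the slightest care — and hence the closest thing to an ``obstacle'' — is not swapping the roles of $u$ and $1-u$ in the counterfactual case, since the treated per-unit loss $\lyxoneh$ is integrated against the \emph{control} marginal and vice versa; everything else is immediate from the definitions.
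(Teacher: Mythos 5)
Your proposal is correct and is exactly the argument the paper intends: the paper's proof simply declares the result immediate from $p(x,t)=u\,\pt(x)+(1-u)\,\pc(x)$ and the definitions, and your splitting of the integral over $t$ with the careful swap of $u$ and $1-u$ in the counterfactual case is the spelled-out version of that same bookkeeping.
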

The proof is immediate, noting that $p(x,t) = u\cdot \pt(x) + (1-u)\cdot \c(x)$, and from the Definitions \ref{def:perunitlossA} and \ref{def:decomploss} of the losses.

\begin{thmappdef}\label{def:ipm}
Let $\cF$ be a function family consisting of functions $g: \cS\rightarrow \R$. For a pair of distributions $p_1$, $p_2$ over $\cS$, define the \emph{Integral Probability Metric}:
$$\text{IPM}_\cF(p_1,p_2) = \sup_{g\in \cF} \left|\int_\cS g(s) \left(p_1(s) - p_2(s)\right) \, ds \right|$$
\end{thmappdef}
$\text{IPM}_\cF(\cdot,\cdot)$ defines a pseudo-metric on the space of probability functions over $\cS$, and for sufficiently large function families, $\text{IPM}_\cF(\cdot,\cdot)$ is a proper metric \cite{muller1997integral}. Examples of sufficiently large functions families includes the set of bounded continuous functions, the set of $1$-Lipschitz functions, and the set of unit norm functions in a universal Reproducing Norm Hilbert Space. The latter two give rise to the Wasserstein and Maximum Mean Discrepancy metrics, respectively \cite{gretton2012mmd,sriperumbudur2012empirical}. We note that for function families $\cF$ such as the three mentioned above, for which $g \in \cF \implies -g \in \cF$, the absolute value can be omitted from definition \ref{def:ipm}.

\subsection{General IPM bound}

We now state and prove the most important technical lemma of this section.
\begin{thmapplem}[Lemma 1, main text]\label{lem:genA}
Let $\Phi: \cX \rightarrow \cR$ be an invertible representation with $\Psi$ its inverse. Let $\pt_\Phi, \pc_\Phi$ be defined as in Definition \ref{def:pphiA}. Let $u = p(t=1)$. Let $\cF$ be a family of functions $g:\cR \rightarrow \R$, and denote by $\text{IPM}_\cF(\cdot, \cdot)$ the integral probability metric induced by $\cF$.  Let $h : \cR \times \{0,1\} \rightarrow \cY$ be an hypothesis. Assume there exists a constant $B_\Phi>0$, such that for $t=0,1$, the function $g_{\Phi,h}(r,t) := \frac{1}{B_\Phi} \cdot  \lythr \in \cF$. Then we have:
\begin{align}
&\epsilon_{CF}(h,\Phi) \leq \nonumber \\
&(1-u)  \epsilon^{t=1}_F(h,\Phi) + u  \epsilon^{t=0}_F(h,\Phi) + \nonumber \\ 
&  B_\Phi \cdot \text{IPM}_\cF\left( \pt_\Phi, \pc_\Phi \right).
\end{align}
\end{thmapplem}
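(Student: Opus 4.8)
The plan is to reduce everything to the two ``cross'' counterfactual losses $\epsilon^{t=1}_{CF}$ and $\epsilon^{t=0}_{CF}$, and then control each of them by its factual counterpart plus an IPM term. First I would apply Lemma \ref{lem:epsdecopm} to write
\begin{equation*}
\epsilon_{CF}(h,\Phi) = (1-u)\,\epsilon^{t=1}_{CF}(h,\Phi) + u\,\epsilon^{t=0}_{CF}(h,\Phi),
\end{equation*}
so it suffices to bound $\epsilon^{t=1}_{CF}(h,\Phi)$ by $\epsilon^{t=1}_F(h,\Phi) + B_\Phi\cdot\text{IPM}_\cF(\pt_\Phi,\pc_\Phi)$ and symmetrically $\epsilon^{t=0}_{CF}(h,\Phi)$ by $\epsilon^{t=0}_F(h,\Phi) + B_\Phi\cdot\text{IPM}_\cF(\pt_\Phi,\pc_\Phi)$; combining these two with the weights $(1-u)$ and $u$ gives exactly the claimed inequality, since $(1-u)B_\Phi + uB_\Phi = B_\Phi$.

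Next I would establish the single-term bound, say for $t=1$. The key is to move from integrals over $\cX$ against $\pt(x)$, $\pc(x)$ to integrals over $\cR$ against $\pt_\Phi(r)$, $\pc_\Phi(r)$. Using the change-of-variables formula (Definition \ref{def:pphiA} and the Jacobian identity behind Lemma \ref{lemma:smpl_alg3}), together with the fact that $\lyxoneh = \ell_{h,\Phi}(\Psi(r),1)$ when $x=\Psi(r)$, I can rewrite
\begin{equation*}
\epsilon^{t=1}_{CF}(h,\Phi) - \epsilon^{t=1}_F(h,\Phi) = \int_\cR \ell_{h,\Phi}(\Psi(r),1)\,\bigl(\pc_\Phi(r) - \pt_\Phi(r)\bigr)\,dr.
\end{equation*}
Here I would be careful to track which distribution plays which role: $\epsilon^{t=1}_{CF}$ integrates $\lyxoneh$ against the control distribution $\pc(x)$, while $\epsilon^{t=1}_F$ integrates it against the treated distribution $\pt(x)$, and the change of variables turns these into $\pc_\Phi$ and $\pt_\Phi$ respectively. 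Now by hypothesis $\frac{1}{B_\Phi}\ell_{h,\Phi}(\Psi(r),1) \in \cF$, so by the definition of the IPM (Definition \ref{def:ipm}),
\begin{equation*}
\left|\int_\cR \tfrac{1}{B_\Phi}\ell_{h,\Phi}(\Psi(r),1)\,\bigl(\pc_\Phi(r) - \pt_\Phi(r)\bigr)\,dr\right| \leq \text{IPM}_\cF(\pt_\Phi,\pc_\Phi),
\end{equation*}
using the symmetry of the IPM in its two arguments. Multiplying through by $B_\Phi$ and dropping the absolute value on the left yields $\epsilon^{t=1}_{CF}(h,\Phi) \leq \epsilon^{t=1}_F(h,\Phi) + B_\Phi\cdot\text{IPM}_\cF(\pt_\Phi,\pc_\Phi)$. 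The same argument with the roles of $t=0$ and $t=1$ interchanged gives the analogous bound for $\epsilon^{t=0}_{CF}$.

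The step I expect to be the main obstacle — or at least the one requiring the most care — is the change of variables that expresses the difference $\epsilon^{t=1}_{CF} - \epsilon^{t=1}_F$ as an integral over $\cR$ against $\pc_\Phi - \pt_\Phi$. One must verify that the Jacobian determinant factors cancel correctly so that $\int_\cX \ell_{h,\Phi}(x,1)\,p(x|t)\,dx = \int_\cR \ell_{h,\Phi}(\Psi(r),1)\,p_\Phi(r|t)\,dr$ for each $t$, which relies on Assumption \ref{asmp:invA} (one-to-one $\Phi$) and on $p_\Phi(r|t)$ being the pushforward of $p(x|t)$; the invertibility is what lets us invoke $\ell_{h,\Phi}(\Psi(r),1)$ as a well-defined function on $\cR$ whose membership in $\cF$ we have assumed. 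Everything else is bookkeeping: applying Lemma \ref{lem:epsdecopm}, recombining the weighted terms, and observing that the IPM is symmetric so the orientation $\pc_\Phi - \pt_\Phi$ versus $\pt_\Phi - \pc_\Phi$ is immaterial.
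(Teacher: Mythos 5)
Your proposal is correct and follows essentially the same route as the paper's proof: decompose $\epsilon_{CF}$ via Lemma \ref{lem:epsdecopm}, express each difference $\epsilon^{t}_{CF}-\epsilon^{t}_F$ as an integral of $\lythr$ against $\pc_\Phi-\pt_\Phi$ (or its negation) by change of variables, and bound it by $B_\Phi\cdot\text{IPM}_\cF(\pt_\Phi,\pc_\Phi)$ using the membership assumption. The only difference is cosmetic — you bound the two cross terms separately and then recombine with weights $(1-u)$ and $u$, whereas the paper bounds the combined expression in one chain of (in)equalities.
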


\begin{proof}
\allowdisplaybreaks
\begin{align}
&\epsilon_{CF}(h,\Phi) - \left[ (1-u)\cdot \epsilon^{t=1}_{F}(h,\Phi) + u\cdot  \epsilon^{t=0}_{F}(h,\Phi) \right]= \nonumber \\
&\left[ (1-u)\cdot \epsilon^{t=1}_{CF}(h,\Phi) + u\cdot  \epsilon^{t=0}_{CF}(h,\Phi)\right] - \nonumber \\ 
&\left[ (1-u)\cdot \epsilon^{t=1}_{F}(h,\Phi) + u\cdot  \epsilon^{t=0}_{F}(h,\Phi)\right] = \nonumber  \\
&(1-u)\cdot \left[\epsilon^{t=1}_{CF}(h,\Phi)  - \epsilon^{t=1}_{F}(h,\Phi)\right]  + \nonumber \\
&u\cdot \left[\epsilon^{t=0}_{CF}(h,\Phi)  - \epsilon^{t=0}_{F}(h,\Phi)\right]  = \label{eq:usedef6}\\
&(1-u)\int_{\cX} \!\!\! \lyxoneh\, \left(\pc(x) - \pt(x)\right) \, dx  + \nonumber \\
& u  \int_{\cX} \!\!\!  \lyxzeroh\, \left(\pt(x) - \pc(x) \right)\, dx = \label{eq:usecov} \\
&(1-u)\int_{\cR} \!\!\! \lyonehpsi\, \left(\pc_\Phi(r) - \pt_\Phi(r)\right) \, dr  + \nonumber \\
& u  \int_{\cR} \!\!\!  \lyzerohpsi\, \left(\pt_\Phi(r) - \pc_\Phi(r) \right)\, dr = \nonumber \\
&B_\Phi \cdot (1-u)\int_{\cR} \!\! \frac{1}{B_\Phi}\lyonehpsi\, \left(\pc_\Phi(r) - \pt_\Phi(r)\right) \, dr  + \nonumber \\
& B_\Phi \cdot  u  \int_{\cR} \!\! \frac{1}{B_\Phi} \lyzerohpsi\, \left(\pt_\Phi(r) - \pc_\Phi(r) \right)\, dr \leq \label{proof:ineq} \\
&B_\Phi \cdot (1-u)\sup_{g\in \cF} \left| \int_{\cR} \!\! g(r)\, \left(\pc_\Phi(r) - \pt_\Phi(r)\right) \, dr  \right| + \nonumber \\
& B_\Phi \cdot u \sup_{g\in \cF} \left| \int_{\cR} \!\! g(r)\, \left(\pt_\Phi(r) - \pc_\Phi(r) \right)\, dr \right| = \label{eq:useipmdef}\\
&B_\Phi \cdot \text{IPM}_\cF(\pc_\Phi,\pt_\Phi) .
\end{align}
Equality \eqref{eq:usedef6} is by Definition \ref{def:decomploss} of the treated and control loss, equality \eqref{eq:usecov} is by the change of variables formula and Definition \ref{def:pphiA} of $\pt_\Phi$ and $\pc_\Phi$, inequality \eqref{proof:ineq} is by the premise that $\frac{1}{B_\Phi} \cdot  \lythr \in \cF$ for $t=0,1$, and \eqref{eq:useipmdef} is by Definition \ref{def:ipm} of an IPM.
\end{proof}

The essential point in the proof of Lemma \ref{lem:genA} is inequality \ref{proof:ineq}. Note that on the l.h.s. of the inequality, we need to evaluate the expectations of $ \lyzerohpsi$ over $\pt_\Phi$ and $\lyonehpsi$ over $\pc_\Phi$. Both of these expectations are in general unavailable, since they require us to evaluate treatment outcomes on the control, and control outcomes on the treated. We therefore upper bound these unknowable quantities by taking a supremum over a function family which includes $\lyzerohpsi$ and $\lyonehpsi$. The upper bound ignores most of the details of the outcome, and amounts to measuring a distance between two distributions we have samples from: the control and treated distribution. Note that for a randomized trial (i.e. when $t \indep x$) with we have that $\text{IPM}(\pt_\Phi,\pc_\Phi) = 0$. Indeed, it is straightforward to show that in that case we actually have an equality: $\epsilon_{CF}(h,\Phi) = (1-u)\cdot \epsilon^{t=1}_{F}(h,\Phi) + u\cdot  \epsilon^{t=0}_{F}(h,\Phi)$.

The crucial condition in Lemma \ref{lem:genA} is that the function $g_{\Phi,h}(r) := \frac{1}{B_\Phi} \lythr$ is in $\cF$. In subsections \ref{subsec:wass} and \ref{subsec:mmd} below we look into two specific function families $\cF$, and evaluate what does this inclusion condition entail, and in particular we will derive specific bounds for $B_\Phi$.

\begin{thmappdef}\label{def:m}
For $t=0,1$ define:
$$m_t(x) := \E\left[Y_t|x\right].$$
\end{thmappdef}
Obviously for the treatment effect $\tau(x)$ we have $\tau(x) = m_1(x)-m_0(x)$.

Let $f: \cX \times \{0,1\} \rightarrow \cY$ by an hypothesis, such that $f(x,t) = h(\Phi(x),t)$ for a representation $\Phi$ and hypothesis $h$ defined over the output of $\Phi$.
\begin{thmappdef}\label{def:inditeerrA}
The treatment effect estimate for unit $x$ is:
\begin{align*}
\hat{\tau}_f  (x) = f(x,1) - f(x,0).
\end{align*}
\end{thmappdef}

\begin{thmappdef}
 The expected Precision in Estimation of Heterogeneous Effect (PEHE) loss of $g$ is:
$$\epehe(f) = \int_{\cX } \left( \hat{\tau}_f(x) - \tau(x) \right)^2 \, p(x) \, dx .$$
\end{thmappdef}

\begin{thmappdef}\label{def:sigma2}
The expected variance of $Y_t$ with respect to a distribution $p(x,t)$:
$$\sigma^2_{Y_t}(p(x,t)) = \int_{\cX \times \cY} \left(Y_t - m_t(x)\right)^2 p(Y_t|x)p(x,t) \, dY_t dx .$$
We define:
\begin{align*}
&\sigma^2_{Y_t} = \min\{\sigma^2_{Y_t}(p(x,t)), \sigma^2_{Y_t}(p(x,1-t))\} , \\
&\sigma^2_{Y} = \min\{\sigma^2_{Y_0},\sigma^2_{Y_1}\}.
\end{align*}
\end{thmappdef}
If $Y_t$ are deterministic functions of $x$, then $\sigma^2_{Y} = 0$.

We now show that $\epehe(f)$ is upper bounded by $2 \epsilon_F + 2 \epsilon_{CF} - 2\sigma^2_Y$ where $\epsilon_F$ and $\epsilon_{CF}$ are w.r.t. to the squared loss. An analogous result can be obtained for the absolute loss, using mean absolute deviation. 

\begin{thmapplem}\label{lem:epsmloss}
For any function $f : \cX \times \{0,1\} \rightarrow \cY$, and distribution $p(x,t)$ over $\cX \times \{0,1\}$:
\begin{align*}
\int_\cX& \left(f(x,t) - m_t(x) \right)^2 \, p(x,t) \, dx dt = \\
&\epsilon_F(f) - \sigma^2_{Y_t}(p(x,t)), \\
 \int_\cX &\left(f(x,t) - m_t(x) \right)^2 \, p(x,1-t) \, dx dt = \\
& \epsilon_{CF}(f) - \sigma^2_{Y_t}(p(x,1-t)),
\end{align*}
where $\epsilon_F(f)$ and $\epsilon_{CF}(f)$ are w.r.t. to the squared loss.
\end{thmapplem}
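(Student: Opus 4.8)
The plan is to reduce both identities to the elementary bias--variance decomposition of the squared loss, performed pointwise for each fixed pair $(x,t)$ and then integrated against the appropriate marginal over $\cX \times \{0,1\}$. Fixing $x$ and $t$, and using the squared loss in the per-unit loss of Definition \ref{def:perunitlossA}, I would add and subtract $m_t(x)$ inside the square and expand:
\begin{align*}
\int_\cY \left(Y_t - f(x,t)\right)^2 p(Y_t|x)\, dY_t
&= \int_\cY \left(Y_t - m_t(x)\right)^2 p(Y_t|x)\, dY_t \\
&\quad {}+ \left(m_t(x) - f(x,t)\right)^2 ,
\end{align*}
where the cross term $2\left(m_t(x) - f(x,t)\right)\int_\cY \left(Y_t - m_t(x)\right) p(Y_t|x)\, dY_t$ vanishes because $\int_\cY Y_t\, p(Y_t|x)\, dY_t = m_t(x)$ by Definition \ref{def:m}, and neither $f(x,t)$ nor $m_t(x)$ depends on $Y_t$.

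For the first identity I would integrate this pointwise equality against the factual marginal $p(x,t)$ over $\cX \times \{0,1\}$: the left-hand side becomes $\epsilon_F(f)$ by Definition \ref{def:epfepcf}, the first term on the right becomes the expected conditional variance of $Y_t$, i.e.\ the quantity $\sigma^2_{Y_t}(p(x,t))$ of Definition \ref{def:sigma2}, and the remaining term is the integral $\int \left(f(x,t) - m_t(x)\right)^2 p(x,t)\, dx\, dt$ appearing in the lemma; rearranging yields the claim. For the second identity I would integrate the same pointwise equality against the flipped marginal $p(x,1-t)$ instead: the left-hand side is then $\epsilon_{CF}(f)$ by Definition \ref{def:epfepcf}, the first term is $\sigma^2_{Y_t}(p(x,1-t))$ by Definition \ref{def:sigma2}, and the remaining term is $\int \left(f(x,t) - m_t(x)\right)^2 p(x,1-t)\, dx\, dt$; rearranging again finishes the proof.

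There is no substantive obstacle beyond bookkeeping; the one point worth care is that the inner integral defining the per-unit loss is always taken against the conditional law $p(Y_t|x)$ of the potential outcome $Y_t$, regardless of whether $x$ is subsequently weighted by $p(x,t)$ or by $p(x,1-t)$. Consequently the variance term produced in the counterfactual case is $\sigma^2_{Y_t}$ evaluated at $p(x,1-t)$ rather than at $p(x,t)$ --- exactly the second argument appearing in Definition \ref{def:sigma2} --- which is what later allows one to pass to the $\min$ in the definition of $\sigma^2_{Y_t}$ when the two identities are combined with the inequality $(a-b)^2 \le 2a^2 + 2b^2$ in the proof of Theorem \ref{thm:gen}.
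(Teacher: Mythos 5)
Your proposal is correct and follows essentially the same route as the paper's proof: add and subtract $m_t(x)$ inside the squared loss, observe that the cross term vanishes because $m_t(x)$ is the conditional mean of $Y_t$ given $x$, and integrate the resulting decomposition against $p(x,t)$ (resp.\ $p(x,1-t)$) to identify $\epsilon_F(f)$ (resp.\ $\epsilon_{CF}(f)$) minus the corresponding variance term. The only cosmetic difference is that you perform the decomposition pointwise before integrating, whereas the paper expands directly under the triple integral and then proves only the factual case, noting the counterfactual case is identical.
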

\begin{proof}
For simplicity we will prove for $p(x,t)$ and $\epsilon_F(f)$. The proof for $p(x,1-t)$ and $\epsilon_{CF}$ is identical.
\begin{align}
&\epsilon_F(f) = \nonumber \\
&\int_{\cX  \times \{0,1\} \times \cY}   \!\!\!\!\!\!\!\!\!\!\!\!\!\!\!\!\!\!\!\!\!\! \left(f(x,t) - Y_t\right)^2 p(Y_t|x) p(x,t) \, dY_t dx dt = \nonumber \\
&\int_{\cX  \times \{0,1\} \times \cY}   \!\!\!\!\!\!\!\!\!\!\!\!\!\!\!\!\!\!\!\!\!\! \left(f(x,t) - m_t(x)\right)^2 p(Y_t|x) p(x,t) \, dY_t dx dt + \nonumber \\
&\int_{\cX  \times \{0,1\} \times \cY}   \!\!\!\!\!\!\!\!\!\!\!\!\!\!\!\!\!\!\!\!\!\!  \left(m_t(x) - Y_t\right)^2 p(Y_t|x) p(x,t) \, dY_t dx dt + \label{eq:expctzero} \\
&\int_{\cX  \times \{0,1\} \times \cY}  \!\!\!\!\!\!\!\!\!\!\!\!\!\!\!\!\!\!\!\!\!\! \left(f(x,t) - m_t(x)\right)\left(m_t(x) - Y_t\right) p(Y_t|x) p(x,t) \, dY_t dx dt = \label{eq:epssigm} \\
&\int_{\cX  \times \{0,1\} }   \!\!\!\!\!\!\!\!\!\!\!\!\!\!\!\!\!\! \left(f(x,t) - m_t(x)\right)^2  p(x,t) \,  dx dt + \nonumber \\
&\quad \quad \sigma^2_{Y_0}(p(x,t)) + \sigma^2_{Y_1}(p(x,t)) +0 , \nonumber
\end{align}
where the equality \eqref{eq:epssigm} is by the Definition \ref{def:sigma2} of $\sigma^2_{Y_t}(p)$, and because the integral in \eqref{eq:expctzero} evaluates to zero, since $m_t(x) = \int_\cX Y_t p(Y_t|x) \, dx$.
\end{proof}

\begin{thmappthm}\label{thm:indtausqloss}
Let $\Phi : \cX \rightarrow \cR$ be a one-to-one representation function, with inverse $\Psi$. Let $\pt_\Phi, \pc_\Phi$ be defined as in Definition \ref{def:pphiA}. Let $u = p(t=1)$. Let $\cF$ be a family of functions $g:\cR \rightarrow \R$, and denote by $\text{IPM}_\cF(\cdot, \cdot)$ the integral probability metric induced by $\cF$.  Let $h : \cR \times \{0,1\} \rightarrow \cY$ be an hypothesis. Let the loss $L(y_1,y_2) = (y_1 - y_2)^2$.  Assume there exists a constant $B_\Phi>0$, such that for $t \in \{0,1\}$, the functions $g_{\Phi,h}(r,t) := \frac{1}{B_\Phi} \cdot  \lythr \in \cF$.
We then have:
\begin{align*}
&\epehe(h,\Phi) \leq  \nonumber \\
& 2\!\left(\epsilon_{CF}(h,\Phi) + \epsilon_F(h,\Phi) - 2\sigma^2_Y \right) \leq  \nonumber \\
&2\!\left(\epsilon_F^{t=0}(h,\Phi)\! +\!\epsilon_F^{t=1}(h,\Phi)\!+ \! B_\Phi  \text{IPM}_\cF \left( \pt_\Phi, \pc_\Phi \right)\! -\! 2\sigma^2_Y\right)\!,
\end{align*}
where $\epsilon_F$ and $\epsilon_{CF}$ are with respect to the squared loss.
\end{thmappthm}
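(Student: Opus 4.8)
The plan is to establish the two inequalities in the statement separately, and \textbf{the second inequality} is immediate from the results already in hand. By Lemma~\ref{lem:genA}, $\epsilon_{CF}(h,\Phi)\le (1-u)\,\epsilon_F^{t=1}(h,\Phi)+u\,\epsilon_F^{t=0}(h,\Phi)+B_\Phi\,\text{IPM}_\cF(\pt_\Phi,\pc_\Phi)$, and by Lemma~\ref{lem:epsdecopm}, $\epsilon_F(h,\Phi)=u\,\epsilon_F^{t=1}(h,\Phi)+(1-u)\,\epsilon_F^{t=0}(h,\Phi)$. Adding these two and noting that the coefficients of $\epsilon_F^{t=1}$ and of $\epsilon_F^{t=0}$ each sum to $1$ gives $\epsilon_F(h,\Phi)+\epsilon_{CF}(h,\Phi)\le \epsilon_F^{t=1}(h,\Phi)+\epsilon_F^{t=0}(h,\Phi)+B_\Phi\,\text{IPM}_\cF(\pt_\Phi,\pc_\Phi)$; multiplying by $2$ and subtracting $4\sigma^2_Y$ from both sides yields the claimed bound.

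\textbf{For the first inequality}, write $f(x,t)=h(\Phi(x),t)$, so that $\hat\tau_f(x)-\tau(x)=\big(f(x,1)-m_1(x)\big)-\big(f(x,0)-m_0(x)\big)$. The elementary inequality $(a-b)^2\le 2a^2+2b^2$, applied pointwise and integrated against $p(x)$, gives
\[
\epehe(f)\;\le\;2\!\int_\cX\!\big(f(x,1)-m_1(x)\big)^2 p(x)\,dx\;+\;2\!\int_\cX\!\big(f(x,0)-m_0(x)\big)^2 p(x)\,dx .
\]
To match the right-hand side to $\epsilon_F+\epsilon_{CF}$, observe that $p(x,t)+p(x,1-t)=p(x)$ for each $t\in\{0,1\}$, so Definition~\ref{def:epfepcf} gives $\epsilon_F(h,\Phi)+\epsilon_{CF}(h,\Phi)=\int_\cX \ell_{h,\Phi}(x,1)\,p(x)\,dx+\int_\cX \ell_{h,\Phi}(x,0)\,p(x)\,dx$. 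For the squared loss, the bias--variance decomposition (as in the proof of Lemma~\ref{lem:epsmloss}) reads $\ell_{h,\Phi}(x,t)=\big(f(x,t)-m_t(x)\big)^2+\mathrm{Var}(Y_t\mid x)$ pointwise; substituting this and rearranging yields
\[
\epehe(f)\;\le\;2\Big(\epsilon_F(h,\Phi)+\epsilon_{CF}(h,\Phi)-\int_\cX\big[\mathrm{Var}(Y_1\mid x)+\mathrm{Var}(Y_0\mid x)\big]p(x)\,dx\Big).
\]

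It then remains to bound the integrated conditional variances from below. Expanding $p(x)=p(x,0)+p(x,1)$ and comparing with Definition~\ref{def:sigma2}, $\int_\cX\mathrm{Var}(Y_t\mid x)\,p(x)\,dx=\sigma^2_{Y_t}(p(x,t))+\sigma^2_{Y_t}(p(x,1-t))\ge \sigma^2_{Y_t}\ge\sigma^2_Y$, where the last two inequalities use the definitions of $\sigma^2_{Y_t}$ as a minimum and of $\sigma^2_Y$. Applying this for $t=0$ and $t=1$ gives $\int_\cX[\mathrm{Var}(Y_1\mid x)+\mathrm{Var}(Y_0\mid x)]p(x)\,dx\ge 2\sigma^2_Y$, and substituting into the previous display gives $\epehe(h,\Phi)\le 2\big(\epsilon_F(h,\Phi)+\epsilon_{CF}(h,\Phi)-2\sigma^2_Y\big)$, which completes the proof. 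The one step that requires care --- and the main thing to get right --- is this variance accounting: one must check that the squared-error and conditional-variance integrals are taken against exactly the marginal $p(x)$, that the latter expand precisely into the pair of $\sigma^2_{Y_t}(\cdot)$ quantities appearing in the $\min$ of Definition~\ref{def:sigma2}, and that the factor $2$ coming from $(a-b)^2\le 2a^2+2b^2$ is the same factor that turns $\sigma^2_{Y_1}+\sigma^2_{Y_0}$ into the stated $2\sigma^2_Y$ slack; everything else is routine bookkeeping.
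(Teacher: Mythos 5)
Your proof is correct and follows essentially the same route as the paper's: the same $(a-b)^2\le 2a^2+2b^2$ decomposition of $\epehe$, the same splitting of $p(x)$ into $p(x,t=0)+p(x,t=1)$ combined with the squared-loss bias--variance identity (which the paper packages as Lemma~\ref{lem:epsmloss}), and the identical combination of Lemmas~\ref{lem:genA} and \ref{lem:epsdecopm} for the second inequality. Your explicit accounting of how the integrated conditional variances dominate $2\sigma^2_Y$ via the $\min$ in Definition~\ref{def:sigma2} is, if anything, slightly more careful than the paper's.
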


\begin{proof}
We will prove the first inequality, $\epehe(f) \leq   2\epsilon_{CF}(h,\Phi) + 2\epsilon_F(h,\Phi) - 2\sigma^2_Y$. The second inequality is then immediate by Lemma \ref{lem:genA}.
Recall that we denote $\epehe(f) = \epehe(h,\Phi)$ for $f(x,t) = h(\Phi(x),t)$.
\begin{align}
&\epehe(f) = \nonumber\\
& \int_\cX \bigl(\left(f(x,1) - f(x,0)\right) - \left(m_1(x) - m_0(x)\right) \bigr)^2 p(x) \, dx= \nonumber\\
& \int_\cX \bigl( \left(f(x,1) - m_1(x)\right) + \left(m_0(x) - f(x,0)\right) \bigr)^2 p(x) \, dx \leq  \label{eq:sqineq}\\
&  2\int_\cX \left(\left(  f(x,1) - m_1(x) \right)^2  + \left(m_0(x) - f(x,0) \right)^2\right) p(x) \, dx = \label{eq:use:margt2sq} \\
& 2\int_\cX  \left(f(x,1) - m_1(x)\right)^2  p(x,t=1)  \,dx+ \nonumber \\
& \quad 2\int_\cX \left(m_0(x) - f(x,0) \right)^2  p(x,t=0)  \, dx +  \nonumber\\
& \quad 2\int_\cX  \left(f(x,1) - m_1(x)\right)^2 p(x,t=0)  \, dx+ \nonumber\\
& \quad 2\int_\cX \left(m_0(x) - f(x,0) \right)^2  p(x,t=1)  \, dx \nonumber = \\
&2 \int_\cX \left(f(x,t) - m_t(x)\right)^2 p(x,t) \, dx dt + \nonumber \\
&\quad 2 \int_\cX \left(f(x,t) - m_t(x)\right)^2 p(x,1-t)\, dx dt \leq \label{eq:usesigmineq}\\
&2 (\epsilon_F - \sigma^2_Y) + 2 (\epsilon_{CF} - \sigma^2_Y). \nonumber
\end{align}
where \eqref{eq:sqineq} is because $(x+y)^2 \leq 2(x^2 + y^2)$, \eqref{eq:use:margt2sq} is because $p(x) = p(x,t=0) + p(x,t=1)$ and \eqref{eq:usesigmineq} is by Lemma \ref{lem:epsmloss} and Definition \ref{def:epfepcf} of the losses $\epsilon_F$, $\epsilon_{CF}$ and Definition \ref{def:sigma2} of $\sigma^2_Y$.
Having established the first inequality in the Theorem statement, we now show the second.
We have by Lemma \ref{lem:genA} that:
\begin{align*}
&\epsilon_{CF}(h,\Phi) \leq  \\ 
&(1-u)  \epsilon^{t=1}_F(h,\Phi) + u  \epsilon^{t=0}_F(h,\Phi) +   B_\Phi \cdot \text{IPM}_\cF\left( \pt_\Phi, \pc_\Phi \right).
\end{align*}
We further have by Lemma \ref{lem:epsdecopm} that:
\begin{align*} 
\epsilon_F(h,\Phi) = u \epsilon^{t=1}_F(h,\Phi) + (1-u) \epsilon^{t=0}_F(h,\Phi).
\end{align*}
Therefore 
\begin{align*}
&\epsilon_{CF}(h,\Phi)+\epsilon_F(h,\Phi) \leq  \\
&\epsilon^{t=1}_F(h,\Phi) + \epsilon^{t=0}_F(h,\Phi)+ B_\Phi  \text{IPM}_\cF\left( \pt_\Phi, \pc_\Phi \right).
\end{align*}
\end{proof}

The upper bound is in terms of the standard generalization error on the treated and control distributions separately. Note that in some cases we might have very different sample sizes for treated and control, and that will show up in the finite sample bounds of these generalization errors.

We also note that the upper bound can be easily adapted to the case of the absolute loss PEHE $\left| \hat{\tau}(x) - \tau(x)\right|$. In that case the upper bound in the Theorem will have a factor $1$ instead of the $2$ stated above, and the standard deviation $\sigma^2_Y$ replaced by mean absolute deviation. The proof is straightforward where one simply applies the triangle inequality in inequality \eqref{eq:sqineq}.

We will now give specific upper bounds for the constant $B_\Phi$ in Theorem \ref{thm:indtausqloss}, using two function families $\cF$ in the IPM: the family of $1$-Lipschitz functions, and the family of $1$-norm reproducing kernel Hilbert space functions. Each one will have different assumptions about the distribution $p(x,t,Y_0,Y_1)$ and about the representation $\Phi$ and hypothesis $h$.

\subsection{The family of $1$-Lipschitz functions}\label{subsec:wass}

For $\cS \subset \R^d$, a function $f : \cS \rightarrow \R$ has Lipschitz constant $K$ if for all $x,y \in \cS$, $|f(x) - f(y)| \leq K\|x-y\|$. If $f$ is differentiable, then a sufficient condition for $K$-Lipschitz constant is if $\| \frac{\partial f}{\partial s} \| \leq K$ for all $s\in \cS$.

For simplicity's sake we assume throughout this subsection that the true labeling functions the densities $p(Y_t|x)$ and the loss $L$ are differentiable. However, this assumption could be relaxed to a mere Lipschitzness assumption.
\begin{thmappasmp}\label{assmp_f_lip}
There exists a constant $K >0$ such that for all $x \in \cX$, $t \in \{0,1\}$, $\| \frac{p(Y_t|x)}{\partial x} \| \leq K$.
\end{thmappasmp}
Assumption \ref{assmp_f_lip} entails that each of the potential outcomes change smoothly as a function of the covariates (context) $x$.

\begin{thmappasmp}\label{asmp:loss}
The loss function $L$ is differentiable, and there exists a constant $K_L >0$ such that $\left|\frac{d L(y_1,y_2)}{d y_i}\right| \leq K_L$ for $i=1,2$. Additionally, there exists a constant $M$ such that for all $y_2\in \cY$, $M \geq \int_{\cY} L(y_1,y_2)\, dy_1$.
\end{thmappasmp}
Assuming $\cY$ is compact, loss functions which obey Assumption \ref{asmp:loss} include the log-loss, hinge-loss, absolute loss, and the squared loss.

When we let $\cF$ in Definition \ref{def:ipm} be the family of $1$-Lipschitz functions, we obtain the so-called \emph{$1$-Wasserstein} distance between distributions, which we denote $\text{Wass}(\cdot,\cdot)$.
It is well known that $\text{Wass}(\cdot,\cdot)$ is indeed a metric between distributions \cite{villani2008optimal}.

\begin{thmappdef}\label{def:rcondA}
Let $\Jphix$ be the Jacobian matrix of $\Phi$ at point $x$, i.e. the matrix of the partial derivatives of $\Phi$. Let $\sigma_{max}(A)$ and $\sigma_{min}(A)$ denote respectively the largest and smallest singular values of a matrix $A$.
Define $\rho(\Phi) = \sup_{x \in \cX} \, \, \sigma_{max}\left(\Jphix\right) / \sigma_{min}\left(\Jphix\right)$.
\end{thmappdef}
It is an immediate result that $\rho(\Phi) \geq 1$.

\begin{thmappdef}
We will call a representation function $\Phi : \cX \rightarrow \cR$ \emph{Jacobian-normalized} if $\sup_{x \in \cX} \sigma_{max} \left(\Jphix \right) = 1$.
\end{thmappdef}
Note that any non-constant representation function $\Phi$ can be Jacobian-normalized by a simple scalar multiplication.

\begin{thmapplem}\label{lem:lip}
Assume that $\Phi$ is a Jacobian-normalized representation, and let $\Psi$ be its inverse.
For $t=0,1$, the Lipschitz constant of $p(Y_t|\Psi(r))$ is bounded by $\rho(\Phi) K$, where  $K$ is from Assumption \ref{assmp_f_lip}, and $\rho(\Phi)$ as in Definition \ref{def:rcondA}.
\end{thmapplem}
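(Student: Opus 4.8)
The plan is to bound the Lipschitz constant of the map $r\mapsto p(Y_t\mid\Psi(r))$ --- for each fixed $t\in\{0,1\}$ and each fixed value of $Y_t$ --- by the supremum over $r$ of the Euclidean norm of its gradient, and then to control that gradient through the chain rule and the inverse function theorem. Write $q(x):=p(Y_t\mid x)$ for brevity. First I would note that, since $\Phi$ is twice differentiable and one-to-one with everywhere-invertible Jacobian (Assumption~\ref{asmp:invA}; invertibility of $\Jphix$ is already implicit in Definition~\ref{def:rcondA} and in the change-of-variables formula used earlier), $\Psi$ is differentiable and the inverse function theorem gives $\frac{\partial\Psi(r)}{\partial r}=\bigl[\Jphix\bigr]^{-1}$ evaluated at $x=\Psi(r)$. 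The chain rule then yields
\begin{align*}
\nabla_r\, q(\Psi(r)) \;=\; \Bigl(\tfrac{\partial\Psi(r)}{\partial r}\Bigr)^{\!\top}\,\nabla_x q(x)\Big|_{x=\Psi(r)}.
\end{align*}

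Next I would apply submultiplicativity of the operator norm together with Assumption~\ref{assmp_f_lip}. Using $\|A^\top\|_{op}=\|A\|_{op}$ and $\|\nabla_x q(x)\|\le K$,
\begin{align*}
\bigl\|\nabla_r\, q(\Psi(r))\bigr\| \;\le\; \Bigl\|\tfrac{\partial\Psi(r)}{\partial r}\Bigr\|_{op}K \;=\; \sigma_{max}\!\bigl(\bigl[\Jphix\bigr]^{-1}\bigr)\,K \;=\; \frac{K}{\sigma_{min}(\Jphix)},
\end{align*}
where the last equality uses that the singular values of $\bigl[\Jphix\bigr]^{-1}$ are the reciprocals of those of $\Jphix$. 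I would then invoke the Jacobian-normalization of $\Phi$, $\sigma_{max}(\Jphix)=1$, together with Definition~\ref{def:rcondA}, to get $\tfrac{1}{\sigma_{min}(\Jphix)}=\tfrac{\sigma_{max}(\Jphix)}{\sigma_{min}(\Jphix)}\le\rho(\Phi)$. Combining, $\sup_r\|\nabla_r\, q(\Psi(r))\|\le\rho(\Phi)K$; since $\cX$, hence $\cR$, may be taken connected, a uniform gradient bound is a Lipschitz bound, so $q\circ\Psi$ is $\rho(\Phi)K$-Lipschitz. As $Y_t$ and $t$ were arbitrary, the lemma follows.

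The step I expect to be the main obstacle is the passage from $\bigl\|\tfrac{\partial\Psi(r)}{\partial r}\bigr\|_{op}$ to $\rho(\Phi)$: one must transfer the Jacobian of $\Phi$ to that of $\Psi$ correctly via the inverse function theorem, recognize that the operator norm of the inverse equals $1/\sigma_{min}(\Jphix)$, and then use \emph{precisely} the normalization $\sigma_{max}(\Jphix)=1$ to rewrite this as the condition number --- without the normalization one is left with a stray factor $1/\sigma_{max}(\Jphix)$. A secondary technical point is justifying the implication ``uniform gradient bound $\Rightarrow$ Lipschitz bound'', which needs connectedness of the domain (or an argument along line segments when $\cX$ is convex); the differentiability hypotheses on $\Phi$ and on $p(Y_t\mid x)$ make both this step and the chain rule legitimate. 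If one does not assume $\dim\cR=\dim\cX$, the same argument goes through by reading $\Jphix$ as a full-column-rank matrix and $\tfrac{\partial\Psi(r)}{\partial r}$ as its left inverse on the tangent space of $\cR$.
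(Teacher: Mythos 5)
Your proposal is correct and follows essentially the same route as the paper's proof: chain rule, the identity $\sigma_{max}\bigl(\dPsi\bigr)=1/\sigma_{min}\bigl(\Jphix\bigr)$ from $\dPsi\cdot\Jphix=I$, and the Jacobian-normalization $\sigma_{max}\bigl(\Jphix\bigr)=1$ to convert $1/\sigma_{min}$ into $\rho(\Phi)$. Your added remarks on the inverse function theorem and on ``uniform gradient bound $\Rightarrow$ Lipschitz'' over a connected domain are fine points the paper leaves implicit, but they do not change the argument.
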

\begin{proof}
Let $\Psi: \cR \rightarrow \cX$ be the inverse of $\Phi$, which exists by the assumption that $\Phi$ is one-to-one. Let $\dPhi$ be the Jacobian matrix of $\Phi$ evaluated at $x$, and similarly let $\dPsi$ be the Jacobian matrix of $\Psi$ evaluated at $r$. Note that $\dPsi \cdot \dPhi = I$ for $r= \Phi(x)$, since $\Psi \circ \Phi$ is the identity function on $\cX$.	Therefore for any $r \in \cR$ and $x= \Psi(r)$:
\begin{equation}\label{eqn:sigman2}
\sigma_{max}\left(\dPsi\right) = \frac{1}{\sigma_{min}\left(\dPhi\right)},
\end{equation}
where $\sigma_{max}(A)$ and $\sigma_{min}(A)$ are respectively the largest and smallest singular values of the matrix $A$, i.e. $\sigma_{max}(A)$ is the spectral norm of $A$.

For $x = \Psi(r)$ and $t \in \{0,1\}$, we have by the chain rule:
\begin{align}
& \| \frac{\partial p(Y_t|\Psi(r))}{\partial r} \| =  \| \frac{\partial p(Y_t|\Psi(r))}{\partial \Psi(r)}  \dPsi\|  \leq \label{eqn:matnorm}\\
& \| \dPsi\|  \| \frac{\partial p(Y_t|\Psi(r))}{\partial \Psi(r)} \|  = \label{eqn:invjac} \\
& \frac{1}{\sigma_{min}\left(\dPhi\right)} \| \frac{\partial p(Y_t|x)}{\partial x} \| \leq \label{eqn:maxgrad}\\
&  \frac{K}{\sigma_{min}\left(\dPhi\right)} \leq  \rho(\Phi) K \label{eqn:ineq_vphi},
\end{align}
where inequality \eqref{eqn:matnorm} is by the matrix norm inequality, equality \eqref{eqn:invjac} is by \eqref{eqn:sigman2}, inequality \eqref{eqn:maxgrad} is by assumption \ref{assmp_f_lip} on the norms of the gradient of $p(Y_t|x)$ w.r.t $x$ , and inequality \eqref{eqn:ineq_vphi} is by Definition \ref{def:rcondA} of $\rho(\Phi)$, the assumption that $\Phi$ is Jacobian-normalized, and noting that singular values are necessarily non-negative.


\end{proof}

\begin{thmapplem}\label{lem:errlip}
Under the conditions of Lemma \ref{lem:genA}, further assume that for $t=0,1$, $p(Y_t|x)$ has gradients bounded by $K$ as in \ref{assmp_f_lip}, that $h$ has bounded gradient norm $b K$, that the loss $L$ has bounded gradient norm $K_L$, and that $\Phi$ is Jacobian-normalized. Then the Lipschitz constant of $ \lythr$ is upper bounded by $K_L \cdot K \left(M \rho(\Phi) +  b \right)$ for $t=0,1$.
\end{thmapplem}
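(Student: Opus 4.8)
The plan is to bound the gradient norm $\left\|\frac{\partial}{\partial r}\lythr\right\|$ uniformly over $r\in\cR$ and $t\in\{0,1\}$; by the sufficient condition for Lipschitzness noted at the start of this subsection, this immediately gives the stated bound on the Lipschitz constant of $\lythr$. The first step is to unfold the definition: since $\Phi$ is invertible with inverse $\Psi$, we have $\Phi(\Psi(r))=r$, so $\lythr=\int_\cY L(Y_t,h(r,t))\,p(Y_t|\Psi(r))\,dY_t$, and $r$ enters only through $h(r,t)$ in the loss and through $p(Y_t|\Psi(r))$ in the density.

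Next I would differentiate under the integral sign (licensed by the differentiability assumptions together with the uniform bounds in Assumptions \ref{assmp_f_lip} and \ref{asmp:loss}) and apply the product rule, obtaining
\begin{align*}
\frac{\partial}{\partial r}\lythr=&\int_\cY \frac{\partial L(Y_t,h(r,t))}{\partial h}\,\frac{\partial h(r,t)}{\partial r}\,p(Y_t|\Psi(r))\,dY_t\\
&+\int_\cY L(Y_t,h(r,t))\,\frac{\partial p(Y_t|\Psi(r))}{\partial r}\,dY_t,
\end{align*}
and then bound the two integrals separately. In the first, Assumption \ref{asmp:loss} gives $\left|\partial L/\partial h\right|\le K_L$, the hypothesis satisfies $\left\|\partial h(r,t)/\partial r\right\|\le bK$ by assumption, and $\int_\cY p(Y_t|\Psi(r))\,dY_t=1$, so its norm is at most $K_L b K$. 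In the second, $L\ge 0$ and Lemma \ref{lem:lip} — which is exactly where the Jacobian-normalization of $\Phi$ and Assumption \ref{assmp_f_lip} enter, through $\rho(\Phi)$ of Definition \ref{def:rcondA} — gives $\left\|\partial p(Y_t|\Psi(r))/\partial r\right\|\le\rho(\Phi)K$; pulling this scalar factor out and invoking the second part of Assumption \ref{asmp:loss}, $\int_\cY L(Y_t,h(r,t))\,dY_t\le M$, its norm is at most $\rho(\Phi)KM$. The triangle inequality then gives $\left\|\frac{\partial}{\partial r}\lythr\right\|\le K(K_L b+M\rho(\Phi))\le K_L K(M\rho(\Phi)+b)$, where the last step uses that we may take $K_L\ge 1$ without loss of generality (replacing $K_L$ by $\max(K_L,1)$ only weakens Assumption \ref{asmp:loss}).

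I expect the only subtleties to be bookkeeping rather than ideas. First, the interchange of $\partial/\partial r$ with the $\cY$-integral should be justified (dominated convergence / Leibniz), though one could sidestep this entirely by instead running an add-and-subtract argument directly on $\lythr-\ell_{h,\Phi}(\Psi(r'),t)$ and applying the triangle inequality, which needs only Lipschitzness, not differentiability. Second, one must thread the operator/matrix norm correctly through the chain rule in the density term, but this is precisely the content of Lemma \ref{lem:lip} and can simply be cited. Third, the bound $\int_\cY L\,dY_1\le M$ must be applied to the loss viewed as a function of its first argument, so that the resulting estimate is uniform in the value $h(r,t)$; none of these steps is individually hard.
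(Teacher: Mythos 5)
Your proposal is correct and follows essentially the same route as the paper's proof: differentiate $\lythr$ under the integral, apply the product rule, bound the loss-gradient term by $K_L\, b K$ and the density-gradient term by $M \rho(\Phi) K$ via Lemma \ref{lem:lip} and Assumption \ref{asmp:loss}, and combine by the triangle inequality. You are also right that this argument literally yields $K\left(K_L b + M\rho(\Phi)\right)$ rather than the stated $K_L K\left(M\rho(\Phi)+b\right)$ — a mismatch the paper's own proof passes over silently — and your fix of taking $K_L \ge 1$ without loss of generality (or simply restating the constant) resolves it.
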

\begin{proof}
Using the chain rule, we have that:
\begin{align}
& \|\frac{\partial \lythr}{\partial r}\| =\| \frac{\partial}{\partial r} \int_\cY L(Y_t,h(r,t))  p(Y_t|r) dY_t \|= \nonumber \\
& \| \int_\cY \frac{\partial}{\partial r} \left[ L(Y_t,h(r,t))  p(Y_t|r)\right] \, dY_t \| = \nonumber \\
& \| \int_\cY \! p(Y_t|r)\frac{\partial}{\partial r} \! L(Y_t,h(r,t)) \!   +\! L(Y_t,h(r,t)) \frac{\partial}{\partial r} \! p(Y_t|r)  dY_t \|  \leq  \nonumber \\
& \int_\cY p(Y_t|r) \| \frac{\partial}{\partial r} L(Y_t,h(r,t)) \| \, dY_t + \nonumber\\
& \int_\cY  L(Y_t,h(r,t)) \frac{\partial}{\partial r} p(Y_t|r) \, dY_t \leq \label{lip_const_ineq1}\\
& \int_\cY p(Y_t|r) \| \frac{\partial L(Y_t,h(r,t))}{\partial h(r,t) }  \frac{\partial h(r,t)}{\partial r} \| \, dY_t + \nonumber\\
& \int_\cY  L(Y_t,h(r,t)) \frac{\partial}{\partial r} p(Y_t|r) \, dY_t \leq \label{lip_const_ineq2}\\
& \int_\cY p(Y_t|r) K_L \cdot b \cdot K + M \cdot \rho(\Phi) \cdot  K,
\end{align}
where inequality \ref{lip_const_ineq1} is due to Assumption \ref{asmp:loss} and inequality \ref{lip_const_ineq2} is due to Lemma \ref{lem:lip}.
\end{proof}

\begin{thmapplem}\label{thm:wasA}
Let $u = p(t=1)$ be the marginal probability of treatment, and assume $0<u<1$. Let $\Phi : \cX \rightarrow \cR$ be a one-to-one, Jacobian-normalized representation function. Let $K$ be the Lipschitz constant of the functions $p(Y_t|x)$ on $\cX$. Let $K_L$ be the Lipschitz constant of the loss function $L$, and $M$ be as in Assumption \ref{asmp:loss}. Let $h : \cR \times \{0,1\} \rightarrow \R$ be an hypothesis with Lipschitz constant $b K$.
Then:
\begin{align}\label{eq:thm_wasA}
&\epsilon_{CF}(h,\Phi) \leq  \nonumber \\
&(1-u)  \epsilon^{t=1}_F(h,\Phi) + u  \epsilon^{t=0}_F(h,\Phi)  + \nonumber \\
& 2 \left(M \rho(\Phi) +b\right) \cdot K  \cdot K_L \cdot \text{Wass}(\pt_\Phi \, , \pc_\Phi).
\end{align}
\end{thmapplem}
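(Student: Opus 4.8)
The plan is to read this bound off the general IPM bound, Lemma~\ref{lem:genA}, by taking the function family $\cF$ to be the set of all $1$-Lipschitz functions $g:\cR\rightarrow\R$. For that choice the induced $\text{IPM}_\cF(\cdot,\cdot)$ is, by the Kantorovich-Rubinstein duality (as already set up above), exactly the $1$-Wasserstein distance $\text{Wass}(\cdot,\cdot)$, so the conclusion of Lemma~\ref{lem:genA} already has the precise shape of \eqref{eq:thm_wasA}. The only thing left to supply is an admissible value of the constant $B_\Phi$, i.e. a Lipschitz constant — valid simultaneously for $t=0$ and $t=1$ — of the map $r\mapsto\lythr$, so that $\tfrac{1}{B_\Phi}\lythr$ lies in $\cF$.

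That Lipschitz constant is exactly what Lemma~\ref{lem:errlip} delivers, and its hypotheses coincide with the ones assumed here: $p(Y_t|x)$ has gradient norm at most $K$ (Assumption~\ref{assmp_f_lip}), $h$ has Lipschitz constant $bK$, $L$ has gradient norm at most $K_L$ together with the integrability bound $M$ (Assumption~\ref{asmp:loss}), and $\Phi$ is one-to-one and Jacobian-normalized. It is worth recalling the mechanism, since it is where the representation enters: Lemma~\ref{lem:lip} first shows that pushing a conditional density through the inverse map $\Psi$ inflates its Lipschitz constant by at most $\rho(\Phi)$ (Definition~\ref{def:rcondA}), because $\sigma_{max}(\dPsi)=1/\sigma_{min}(\dPhi)\le\rho(\Phi)$ when $\Phi$ is Jacobian-normalized; one then differentiates $\lythr=\int_\cY L(Y_t,h(r,t))\,p(Y_t|r)\,dY_t$ under the integral sign and applies the product rule, splitting $\partial_r\lythr$ into a term controlled by the Lipschitz constant of $L\circ h$ (at most $K_L bK$) and a term controlled by $M$ times the Lipschitz constant of $p(Y_t|\cdot)$ (at most $M\rho(\Phi)K$), which after integrating out $Y_t$ gives $\mathrm{Lip}(\lythr)\le K_L K(M\rho(\Phi)+b)$.

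It therefore suffices to take $B_\Phi=2(M\rho(\Phi)+b)\,K\,K_L$: this exceeds the Lipschitz bound of Lemma~\ref{lem:errlip}, so $\tfrac{1}{B_\Phi}\lythr$ is still $1$-Lipschitz and belongs to $\cF$, and using a $B_\Phi$ larger than the tight value only weakens the inequality. Substituting this $B_\Phi$ into Lemma~\ref{lem:genA} yields $\epsilon_{CF}(h,\Phi)\le(1-u)\,\epsilon^{t=1}_F(h,\Phi)+u\,\epsilon^{t=0}_F(h,\Phi)+B_\Phi\,\text{Wass}(\pt_\Phi,\pc_\Phi)$, which is precisely \eqref{eq:thm_wasA}. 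With Lemmas~\ref{lem:genA}, \ref{lem:lip} and \ref{lem:errlip} already in hand, nothing in this lemma itself is hard; the only step that genuinely deserves care is the one handled inside Lemma~\ref{lem:lip}, namely that it is the \emph{inverse} Jacobian $\dPsi$ whose largest singular value appears, so that shrinking $\Phi$ makes $1/\sigma_{min}(\dPhi)$, and hence $B_\Phi$, larger --- which is exactly why this Wasserstein bound cannot be trivialized by rescaling the representation.
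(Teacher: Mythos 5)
Your proposal is correct and follows exactly the paper's own route: apply Lemma~\ref{lem:genA} with $\cF$ the $1$-Lipschitz functions (so the IPM is the Wasserstein distance) and obtain the admissible constant $B_\Phi$ from Lemma~\ref{lem:errlip}. You are in fact slightly more careful than the paper's one-line proof, which plugs in $B_\Phi=(M\rho(\Phi)+b)KK_L$ without the factor of $2$ appearing in the statement; your observation that any larger $B_\Phi$ remains admissible (only weakening the bound) cleanly reconciles that discrepancy.
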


\begin{proof}
We will apply Lemma \ref{lem:genA} with $\cF = \{g: \cR \rightarrow \R \text{ s.t. } f  \text{ is } 1 \text{-Lipschitz}\}$. By Lemma \ref{lem:errlip}, we have that for $B_\Phi = \left(M \rho(\Phi) +b\right) \cdot K  \cdot K_L$, the function $\frac{1}{B_\Phi} \lythr \in \cF$. Inequality \eqref{eq:thm_wasA} then holds as a special case of Lemma \ref{lem:genA}.
\end{proof}
\begin{thmappthm}
Under the assumptions of Lemma \ref{thm:wasA}, using the squared loss for $\epsilon_F$, we have:
\begin{align*}
&\epehe(h,\Phi) \leq \nonumber \\
&2 \epsilon^{t=0}_F(h,\Phi)  + 2\epsilon^{t=1}_F(h,\Phi)  - 4\sigma^2_Y +  \nonumber \\
&2 \left(M \rho(\Phi) +b\right) \cdot K  \cdot K_L \cdot \text{Wass}( \pt_\Phi \, ,  \pc_\Phi) .
\end{align*}
\end{thmappthm}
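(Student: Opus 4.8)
The plan is to obtain this statement as an immediate corollary of two pieces already in hand: the generic squared-loss bound of Theorem~\ref{thm:indtausqloss}, which relates $\epehe$ to $\epsilon_F$, $\epsilon_{CF}$ and $\sigma^2_Y$, and the Wasserstein specialization of Lemma~\ref{thm:wasA}, which bounds $\epsilon_{CF}$ by the factual treated/control losses plus a constant times $\text{Wass}(\pt_\Phi,\pc_\Phi)$. No new estimates are needed; the work is purely in chaining the inequalities and keeping track of constants.

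First I would invoke the first inequality of Theorem~\ref{thm:indtausqloss}, namely
\[
\epehe(h,\Phi) \le 2\,\epsilon_{CF}(h,\Phi) + 2\,\epsilon_F(h,\Phi) - 4\sigma^2_Y,
\]
which holds for the squared loss and any one-to-one $\Phi$, and which itself follows from $(a+b)^2 \le 2(a^2+b^2)$ applied to $\hat{\tau}_f(x)-\tau(x) = (f(x,1)-m_1(x)) + (m_0(x)-f(x,0))$, the split $p(x) = p(x,t=0) + p(x,t=1)$, and Lemma~\ref{lem:epsmloss}. Since this is already proved I would cite it directly rather than re-derive it.

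Next I would bound $\epsilon_{CF}$ using Lemma~\ref{thm:wasA}, whose hypotheses coincide exactly with the ones assumed here (Jacobian-normalized one-to-one $\Phi$, $K$-Lipschitz $p(Y_t\mid x)$, loss with gradient bound $K_L$ and mass bound $M$, hypothesis with Lipschitz constant $bK$). This gives
\[
\epsilon_{CF}(h,\Phi) \le (1-u)\,\epsilon^{t=1}_F(h,\Phi) + u\,\epsilon^{t=0}_F(h,\Phi) + B_\Phi\,\text{Wass}(\pt_\Phi,\pc_\Phi),
\]
where $B_\Phi = (M\rho(\Phi)+b)\,K\,K_L$ is exactly the rescaling constant that Lemma~\ref{lem:errlip} (via Lemma~\ref{lem:lip}) certifies makes $\frac{1}{B_\Phi}\,\ell_{h,\Phi}(\Psi(r),t)$ a $1$-Lipschitz function. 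Adding the trivial identity $\epsilon_F(h,\Phi) = u\,\epsilon^{t=1}_F(h,\Phi) + (1-u)\,\epsilon^{t=0}_F(h,\Phi)$ from Lemma~\ref{lem:epsdecopm}, the $u$ and $1-u$ weights cancel, leaving $\epsilon_{CF}+\epsilon_F \le \epsilon^{t=0}_F + \epsilon^{t=1}_F + B_\Phi\,\text{Wass}(\pt_\Phi,\pc_\Phi)$. Substituting into the displayed bound above and multiplying by $2$ yields the claim.

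There is essentially no conceptual obstacle here --- the theorem is a corollary --- so the only thing requiring genuine care is constant bookkeeping: keeping the factor $2$ that comes out of $(a+b)^2 \le 2(a^2+b^2)$ separate from the Wasserstein coefficient, and making sure the $B_\Phi$ actually plugged in is the one furnished by Lemma~\ref{lem:errlip}, so that the final coefficient on $\text{Wass}(\pt_\Phi,\pc_\Phi)$ is $2(M\rho(\Phi)+b)KK_L$. Equivalently, and perhaps more cleanly, one can start from the \emph{second} inequality in Theorem~\ref{thm:indtausqloss} written with a generic $\text{IPM}_\cF$ and simply make the two substitutions $\cF \to \{1\text{-Lipschitz functions}\}$ (so $\text{IPM}_\cF \to \text{Wass}$) and $B_\Phi \to (M\rho(\Phi)+b)KK_L$, which produces the statement in a single step and makes transparent that the $\sigma^2_Y$ term carries through unchanged.
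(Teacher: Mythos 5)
Your proposal is correct and takes essentially the same route as the paper, whose entire proof of this theorem is to plug the Wasserstein bound of Lemma~\ref{thm:wasA} into the generic squared-loss bound of Theorem~\ref{thm:indtausqloss}. Your constant bookkeeping, using $B_\Phi = (M\rho(\Phi)+b)\,K\,K_L$ as certified by Lemma~\ref{lem:errlip}, is exactly what produces the stated coefficient $2(M\rho(\Phi)+b)KK_L$ on the Wasserstein term after the outer factor of $2$.
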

\begin{proof}
Plug in the upper bound of Lemma \ref{thm:wasA} into the upper bound of Theorem \ref{thm:indtausqloss}.
\end{proof}

We examine the constant $\left(M \rho(\Phi) +b\right) \cdot K \cdot K_L$ in Theorem \ref{thm:wasA}. $K$, the Lipschitz constant of $m_0$ and $m_1$, is not under our control and measures an aspect of the complexity of the true underlying functions we wish to approximate. The terms $K_L$ and $M$ depend on our choice of loss function and the size of the space $\cY$. The term $b$ comes from our assumption that the hypothesis $h$ has norm $ b K$. Note that smaller $b$, while reducing the bound, might force the factual loss term $\epsilon_F(h,\Phi)$ to be larger since a small $b$ implies a less flexible $h$. Finally, consider the term $\rho(\Phi)$.
The assumption that $\Phi$ is normalized is rather natural, as we do not expect a certain scale from a representation. Furthermore, below we show that in fact the Wasserstein distance is positively homogeneous with respect to the representation $\Phi$. Therefore, in Lemma \ref{thm:wasA}, we can indeed assume that $\Phi$ is normalized. The specific choice of \emph{Jacobian-normalized} scaling yields what is in our opinion a more interpretable result in terms of the inverse condition number $\rho(\Phi)$. For twice-differentiable $\Phi$, $\rho(\Phi)$ is minimized if and only if $\Phi$ is a linear orthogonal transformation \cite{mathover}. 

\begin{thmapplem}\label{lem:wassinvar}
The Wasserstein distance is positive homogeneous for scalar transformations of the underlying space. Let $p$, $q$ be probability density functions defined over $\cX$. For $\alpha>0$ and
 the mapping $\Phi(x) = \alpha x$, let $p_\alpha$ and $q_\alpha$ be the distributions on $\alpha\cX$ induced by $\Phi$. Then:
$$\text{Wass}\left(p_{\alpha  },q_{\alpha  }\right) = \alpha\text{Wass}\left(p,q\right).$$
\end{thmapplem}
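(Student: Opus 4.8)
The plan is to work entirely with the Kantorovich--Rubinstein dual form that the paper uses as the definition of $\text{Wass}$, namely $\text{Wass}(p,q) = \text{IPM}_\cF(p,q)$ with $\cF$ the class of $1$-Lipschitz functions, and to push the scalar dilation $\Phi(x) = \alpha x$ through both the integral and the function class.

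First I would rewrite $\text{Wass}(p_{\alpha},q_{\alpha}) = \sup_{g} \left| \int_{\alpha\cX} g(s)\bigl(p_{\alpha}(s) - q_{\alpha}(s)\bigr)\, ds \right|$, the supremum being over functions $g$ that are $1$-Lipschitz on $\alpha\cX$. Since $p_{\alpha}$ and $q_{\alpha}$ are the pushforwards of $p$ and $q$ under $x \mapsto \alpha x$, the change-of-variables formula (equivalently, the defining property of a pushforward measure) gives $\int_{\alpha\cX} g(s)\, p_{\alpha}(s)\, ds = \int_\cX g(\alpha x)\, p(x)\, dx$, and likewise for $q$; the Jacobian factor $\alpha^{-d}$ coming from the induced density cancels against the $\alpha^{d}$ coming from $ds$, so no determinant bookkeeping survives. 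Hence $\text{Wass}(p_{\alpha},q_{\alpha}) = \sup_g \left| \int_\cX g(\alpha x)\bigl(p(x) - q(x)\bigr)\, dx \right|$.

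Next I would identify the relevant function class. The map $g \mapsto \tilde g$ with $\tilde g(x) := g(\alpha x)$ is a bijection between the $1$-Lipschitz functions on $\alpha\cX$ and the $\alpha$-Lipschitz functions on $\cX$: if $g$ is $1$-Lipschitz then $|\tilde g(x) - \tilde g(x')| \le \|\alpha x - \alpha x'\| = \alpha\|x - x'\|$ (using $\alpha > 0$), and conversely any $\alpha$-Lipschitz $\tilde g$ equals $g(\alpha\,\cdot)$ for $g(s) := \tilde g(s/\alpha)$, which is then $1$-Lipschitz. Therefore the supremum above ranges over exactly the $\alpha$-Lipschitz functions $\tilde g$. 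Finally, $\tilde g$ is $\alpha$-Lipschitz iff $\tilde g/\alpha$ is $1$-Lipschitz, so substituting $\tilde g = \alpha h$ and pulling the constant out of the integral and the absolute value yields $\text{Wass}(p_{\alpha},q_{\alpha}) = \alpha \sup_{h}\left| \int_\cX h(x)\bigl(p(x) - q(x)\bigr)\, dx \right| = \alpha\, \text{Wass}(p,q)$, the last supremum being over $1$-Lipschitz $h$.

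There is no genuinely hard step here; the only points requiring care are (i) keeping the change of variables at the level of the pushforward so the Jacobian cancels cleanly rather than manipulating densities by hand, and (ii) checking that the correspondence between Lipschitz classes is \emph{onto}, not merely into, which is what makes the supremum scale by exactly $\alpha$ rather than merely be bounded by $\alpha\,\text{Wass}(p,q)$. Both are routine. An alternative, equally short route is the primal coupling formulation: $(x,y)\mapsto(\alpha x,\alpha y)$ is a bijection between couplings of $(p,q)$ and couplings of $(p_{\alpha},q_{\alpha})$ under which the transport cost $\int\|x-y\|\,d\gamma$ scales by exactly $\alpha$, and taking infima gives the claim.
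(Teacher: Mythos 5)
Your argument is correct, and it takes a genuinely different route from the paper's. You work entirely in the dual (Kantorovich--Rubinstein / IPM) formulation, which is in fact the formulation the paper uses to \emph{define} $\text{Wass}$, and you establish the scaling by exhibiting an exact bijection $g \mapsto g(\alpha\,\cdot)$ between $1$-Lipschitz functions on $\alpha\cX$ and $\alpha$-Lipschitz functions on $\cX$, then factoring out $\alpha$ from the function class. The paper instead switches to the primal Monge characterization $\text{Wass}(p,q) = \inf_{M \in \mathcal{M}_{p,q}} \int_\cX \|M(x)-x\|\,p(x)\,dx$, takes an optimal map $M^*$ (invoking attainment of the infimum via \citet{villani2008optimal}), and rescales it to $M^*_\alpha(x') = \alpha M^*(x'/\alpha)$, whose cost scales by exactly $\alpha$ --- essentially the ``alternative route'' you sketch in your last sentence, in map rather than coupling form. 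Your dual argument buys two things: it needs no attainment or existence result for optimal maps (a supremum rescales under a bijection of the index set with no further hypotheses), and it stays consistent with the paper's own definition of $\text{Wass}$ as an IPM, so no appeal to Kantorovich duality is needed to connect the two characterizations. The paper's primal argument is more geometric and makes the homogeneity visually obvious (dilating space dilates transport cost), at the price of citing the equivalence of formulations and the existence of a minimizer. Both your care points --- the cancellation of the Jacobian against $ds$, and the surjectivity of the Lipschitz-class correspondence needed to get equality rather than one-sided inequality --- are exactly the right ones.
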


\begin{proof}
Following \cite{villani2008optimal,tabak2016preconditioning}, we use another characterization of the Wasserstein  distance. Let $\mathcal{M}_{p,q}$ be the set of mass preserving maps from $\cX$ to itself which map the distribution $p$ to the distribution $q$. That is,  $\mathcal{M}_{p,q} = \{M : \cX \rightarrow \cX \text{ s.t. }q(M(S)) = p(S) \text{ for all measurable bounded } S \subset \cX\}$. We then have that:
\begin{equation}\label{eq:wass_def_M}
\text{Wass}(p,q) = \inf_{M \in \mathcal{M}_{p,q}} \int_\cX \|M(x) - x \| p(x) \, dx .
\end{equation}
It is known that the infimum in \eqref{eq:wass_def_M} is actually achievable \citep[Theorem 5.2]{villani2008optimal}. Denote by $M^*: \cX \rightarrow \cX$ the map achieving the infimum for $\text{Wass}(p,q)$ . Define $M^*_\alpha :\alpha \cX \rightarrow \alpha \cX$, by $M^*_\alpha(x') = \alpha M^*(\frac{x'}{\alpha})$, where $x' = \alpha x$.  $M^*_\alpha$ maps $p_\alpha$ to $q_\alpha$, and we have that $\|M^*_\alpha(x') - x'\| = \alpha \|M^*(x)-x\|$. Therefore $M^*_\alpha$ achieves the infimum for the pair $(p_\alpha,q_\alpha)$, and we have that $\text{Wass}\left(p_{\alpha  },q_{\alpha  }\right) = \alpha\text{Wass}\left(p,q\right)$.
\end{proof}

\subsection{Functions in the unit ball of a RKHS}\label{subsec:mmd}

Let $\cH_x, \cH_r$ be a reproducing kernel Hilbert space, with corresponding kernels $k_x(\cdot,\cdot)$, $k_r(\cdot,\cdot)$. We have for all $x \in \cX$ that  $k_x(\cdot,x)$ is its Hilbert space mapping, and similarly $k_r(\cdot,r)$ for all $r \in \cR$.

Recall that the major condition in Lemma \ref{lem:genA} is that $\frac{1}{B_\Phi} \lythr \in \cF$. The function space $\cF$ we use here is $\cF = \{ g\in \cH_r \text{ s.t. } \|g\|_{\cH_r} \leq 1\}$.

We will focus on the case where $L$ is the squared loss, and we will make the following two assumptions:

\begin{thmappasmp}\label{asmp:yinrkhs}
There exist $f^Y_0,f^Y_1 \in \cH_x$ such that $m_t(x) = \left<f^Y_t,k_x(x,\cdot)\right>_{\cH_x}$, i.e.
the mean potential outcome functions $m_0,m_1$ are in $\cH_x$.   Further assume that $\|f^Y_t\|_{\cH_x} \leq K$.
\end{thmappasmp}

\begin{thmappdef}\label{def:eta}
Define $\eta_{Y_t}(x) :=\sqrt{\int_\cY \left(Y_t - m_t(x)\right)^2 p(Y_t|x)}$. $\eta_{Y_t}(x)$ is the standard deviation of $Y_t|x$.
\end{thmappdef}

\begin{thmappasmp}\label{asmp:etainrkhs}
There exists $f^\eta_0, f^\eta_1 \in \cH_x$ such that $\eta_{Y_t}(x) = \left<f^\eta_t,k_x(x,\cdot)\right>_{\cH_x}$, i.e. the conditional standard deviation functions of $Y_t|x$ are in $\cH_x$. Further assume that $\|f^\eta_t\|_{\cH_x} \leq M$.
\end{thmappasmp}
\begin{thmappasmp}\label{asmp:phiinrkhs}
Let $\Phi: \cX \rightarrow \cY$ be an invertible representation function, and let $\Psi$ be its inverse.
We assume there exists a bounded linear operator $\GP: \cH_r \rightarrow \cH_x$ such that $\left<f^Y_t,k_x(\Psi(r),\cdot)\right>_{\cH_x} = \left<f^Y_t,\GP k_r(r,\cdot)\right>_{\cH_x}$. We further assume that the Hilbert-Schmidt norm (operator norm) $\|\GP\|_{HS}$ of $\GP$ is bounded by $K_\Phi$.
\end{thmappasmp}

The two assumptions above amount to assuming that $\Phi$ can be represented as one-to-one linear map between the two Hilbert spaces $\cH_x$ and $\cH_r$.

Under Assumptions \ref{asmp:yinrkhs} and \ref{asmp:phiinrkhs} about $m_0,m_1$, and $\Phi$, we have that $m_t(\Psi(r)) = \left<\GP^* f^Y_t, k_r(r,\cdot)\right>_{\cH_r}$, where $\GP^*$ is the adjoint operator of $\GP$ \cite{grunewalder2013smooth}.

\begin{thmapplem}\label{lem:mmdbound}
Let $h : \cR \times \{0,1\} \rightarrow \R$ be an hypothesis, and assume that there exist $f^h_t \in \cH_r$ such that $h(r,t) = \left<f^h_t,k_r(r,\cdot)\right>_{\cH_r}$, and such that $\|f^h_t\|_{\cH_r} \leq b$.
Under Assumption \ref{asmp:yinrkhs} about $m_0,m_1$, we have that $\lythr = \int_\cY \left(Y_t - h(r,t)\right)^2 p(Y_t|r) dY_t$ is in the tensor Hilbert space $\cH_r \otimes \cH_r$. Moreover, the norm of $\lythr$ in $\cH_r \otimes \cH_r$ is upper bounded by $4\left(K_\Phi^2 K^2 + b^2\right)$.
\end{thmapplem}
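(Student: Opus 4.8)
The plan is to expand the squared loss $\ell_{h,\Phi}(r,t) = \int_\cY (Y_t - h(r,t))^2 p(Y_t|r)\, dY_t$ into a sum of terms that are each either a product of functions in $\cH_r$ or a single function in $\cH_r$, and then bound the $\cH_r \otimes \cH_r$ norm of each piece. Expanding the square and using $m_t(\Psi(r)) = \langle \GP^* f^Y_t, k_r(r,\cdot)\rangle_{\cH_r}$ (which holds by Assumptions \ref{asmp:yinrkhs} and \ref{asmp:phiinrkhs} and the remark preceding the lemma, together with Lemma \ref{lemma:smpl_alg3} giving $p(Y_t|r) = p(Y_t|\Psi(r))$), I get
\begin{align*}
\ell_{h,\Phi}(r,t) &= \int_\cY Y_t^2 p(Y_t|r)\, dY_t - 2 h(r,t)\, m_t(\Psi(r)) + h(r,t)^2 \\
&= \bigl(\eta_{Y_t}(\Psi(r))^2 + m_t(\Psi(r))^2\bigr) - 2 h(r,t)\, m_t(\Psi(r)) + h(r,t)^2,
\end{align*}
where I used $\int Y_t^2 p(Y_t|r)\,dY_t = \mathrm{Var}(Y_t|\Psi(r)) + m_t(\Psi(r))^2 = \eta_{Y_t}(\Psi(r))^2 + m_t(\Psi(r))^2$ via Definition \ref{def:eta}. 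So $\ell_{h,\Phi}(r,t)$ is a sum of four terms, each of which is a product of two functions that lie in $\cH_r$: $\eta_{Y_t}(\Psi(r))$, $m_t(\Psi(r))$, and $h(r,t)$.

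The next step is to embed each product $g_1(r) g_2(r)$ with $g_1,g_2 \in \cH_r$ into the tensor space $\cH_r \otimes \cH_r$ as $g_1 \otimes g_2$, which represents the pointwise product under the standard identification (the reproducing kernel of $\cH_r \otimes \cH_r$ is $k_r(r,r')k_r(r,r')$, so evaluation of $g_1 \otimes g_2$ at the "diagonal" point gives $g_1(r)g_2(r)$), and use $\|g_1 \otimes g_2\|_{\cH_r \otimes \cH_r} = \|g_1\|_{\cH_r} \|g_2\|_{\cH_r}$. Now I need the individual norms: $\|h(\cdot,t)\|_{\cH_r} = \|f^h_t\|_{\cH_r} \le b$ by hypothesis; $\|m_t(\Psi(\cdot))\|_{\cH_r} = \|\GP^* f^Y_t\|_{\cH_r} \le \|\GP\|_{HS}\, \|f^Y_t\|_{\cH_x} \le K_\Phi K$ using Assumptions \ref{asmp:yinrkhs} and \ref{asmp:phiinrkhs}; and similarly $\|\eta_{Y_t}(\Psi(\cdot))\|_{\cH_r} \le K_\Phi M$ using Assumptions \ref{asmp:etainrkhs} and \ref{asmp:phiinrkhs}. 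Then by the triangle inequality in $\cH_r \otimes \cH_r$,
\[
\|\ell_{h,\Phi}(\cdot,t)\|_{\cH_r \otimes \cH_r} \le K_\Phi^2 M^2 + K_\Phi^2 K^2 + 2 b K_\Phi K + b^2 .
\]
To reach the clean stated bound $4(K_\Phi^2 K^2 + b^2)$, I would invoke a mild normalization — presumably $M \le K$ (the standard-deviation-function complexity is no larger than the mean-function complexity, or $K$ is redefined as their max) and $K_\Phi \ge 1$ — so that $K_\Phi^2 M^2 \le K_\Phi^2 K^2$ and $2bK_\Phi K \le K_\Phi^2 K^2 + b^2$, giving $\le 3K_\Phi^2 K^2 + 2b^2 \le 4(K_\Phi^2 K^2 + b^2)$; I'd state whichever of these conventions the paper actually uses.

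The main obstacle I anticipate is making the tensor-space embedding rigorous: one has to be careful that $\ell_{h,\Phi}(r,t)$, a priori just a real-valued function of $r$, genuinely corresponds to a well-defined element of $\cH_r \otimes \cH_r$ rather than merely being pointwise-equal to the diagonal restriction of one. This requires checking that the representation is independent of how one writes the products (the map $g_1 \otimes g_2 \mapsto (r \mapsto g_1(r)g_2(r))$ is the relevant bounded linear "diagonal" map, and $\|\cdot\|_{\cH_r \otimes \cH_r}$ is taken to be the norm of a chosen preimage — so strictly the lemma asserts existence of a preimage with the claimed norm bound, which is exactly what the explicit construction above provides). The remaining steps — expanding the square, bounding operator norms via $\|\GP\|_{HS} \le K_\Phi$, and assembling via the triangle inequality — are routine once that embedding is set up.
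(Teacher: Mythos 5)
Your argument is correct and uses the same essential machinery as the paper's proof: embed $\lythr$ into $\cH_r \otimes \cH_r$ via products of RKHS functions, then combine the triangle inequality with the operator-norm bound $\|\GP\|_{HS}\le K_\Phi$. The only structural difference is the decomposition. You expand the square into four rank-one pieces ($\eta_{Y_t}^2$, $m_t^2$, $-2hm_t$, $h^2$), whereas the paper first performs the bias--variance split $\lythr = \eta^2_{Y_t}(\Psi(r)) + \left(m_t(\Psi(r)) - h(r,t)\right)^2$, embeds the second term as the single tensor square $(\GP^* f^Y_t - f^h_t)\otimes(\GP^* f^Y_t - f^h_t)$ with norm $\|\GP^* f^Y_t - f^h_t\|^2_{\cH_r}$, and then applies $\|a-b\|^2_{\cH}\le 2\|a\|^2_{\cH}+2\|b\|^2_{\cH}$; your four-term bound $K_\Phi^2M^2+K_\Phi^2K^2+2bK_\Phi K+b^2$ is in fact slightly tighter than the paper's $2K_\Phi^2(K^2+M^2)+2b^2$. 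Your difficulty in reaching the stated constant is not a defect of your proof: the paper's own proof also does not arrive at $4(K_\Phi^2K^2+b^2)$ but at $2\left(K_\Phi^2(K^2+M^2)+b^2\right)$, and it is this latter quantity that is actually used as $B_\Phi$ in Lemma \ref{thm:mmdA}; the constant in the lemma statement (and its omission of Assumption \ref{asmp:etainrkhs}, which both proofs need to handle the $\eta_{Y_t}$ term) is an internal inconsistency of the paper, so you need not invent a normalization such as $M\le K$ to patch it. Your closing caveat --- that the claim should be read as the existence of a preimage in $\cH_r\otimes\cH_r$ under the diagonal restriction map with the stated norm --- is also the correct reading, and is what the paper's appeal to the tensor-product representation theorem implicitly provides.
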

\begin{proof}
We first decompose $\int_\cY \left(Y_t - h(r,t)\right)^2 p(Y_t|x) dY_t$ into a noise and mean fitting term, using $r=\Phi(x)$:
\begin{align}
&\lythr = \nonumber \\
&\int_\cY \left(Y_t - h(r,t)\right)^2 p(Y_t|r) \, dY_t = \nonumber \\
&\int_\cY \left(Y_t - m_t(x) + m_t(x) - h(\Phi(x),t)\right)^2 p(Y_t|x) \, dY_t =\nonumber \\
&\int_\cY \left(Y_t - m_t(x) \right)^2 p(Y_t|x) \, dY_t + \nonumber \\
& \quad  \left(m_t(x) - h(\Phi(x),t)\right)^2  + \nonumber \\
& \quad 2 \int_\cY \left(Y_t - m_t(x) \right)\left(m_t(x) - h(\Phi(x),t)\right) p(Y_t|x) dY_t   =\label{eq:expctzero2} \\
& \eta^2_{Y_t}(x) +  \left(m_t(x) - h(\Phi(x),t)\right)^2  + 0, \label{eq:mmddecomp}
\end{align}
where equality \eqref{eq:expctzero2} is by Definition \ref{def:eta} of $\eta$, and because $\int_\cY \left(Y_t - m_t(x)\right) p(Y_t|x) \, dY_t = 0$ by definition of $m_t(x)$.

Moving to $\cR$, recall that $r=\Phi(x)$, $x = \Psi(r)$.
By linearity of the Hilbert space, we have that $m_t(\Psi(r)) - h(r,t) = \left<\GP^* f^Y_t, k_r(r,\cdot)\right>_{\cH_r} - \left<f^h_t,k_r(r,\cdot)\right>_{\cH_r} = \left<\GP^* f^Y_t - f^h_t,k_r(r,\cdot)\right>_{\cH_r}$.
By a well known result \citep[Theorem 7.25]{steinwart2008support}, the product $(Y_t(\Psi(r)) - h(r,t) ) \cdot (Y_t(\Psi(r)) - h(r,t) )$ lies in the tensor product space  $\cH_r \otimes \cH_r$, and is equal to $\left< (\GP^* f^Y_t - f^h_t) \otimes (\GP^* f^Y_t - f^h_t) , k_r(r,\cdot) \otimes k_r(r,\cdot) \right>_{\cH_r \otimes \cH_r}$. The norm of this function in $\cH_r \otimes \cH_r$ is $\| \GP^* f^Y_t - f^h_t \|^2_{\cH_r}$. This is the general Hilbert space version of the fact that for a vector $w \in \R^d$ one has that $\|w w^\top\|_F = \|w\|_2^2$, where $\|\cdot\|_F$ is the matrix Frobenius norm, and $\| \cdot\|^2_2$ is the square of the standard Euclidean norm.
We therefore have a similar result for $\eta^2_{Y_t}$, using Assumption \ref{asmp:etainrkhs}: $\eta^2_{Y_t}(x) = \eta^2_{Y_t}(\Psi(r)) = \left<\GP^* f^\eta_t \otimes \GP^* f^\eta_t, k_r(r,\cdot) \otimes k_r(r,\cdot) \right>_{\cH_r \otimes \cH_r}$.  
The norm of this function in $\cH_r \otimes \cH_r$ is $\| \GP^* f^\eta_t \|^2_{\cH_r}$. 
Overall this leads us to conclude, using Equation \eqref{eq:mmddecomp} that $\lythr \in  \cH_r \otimes \cH_r$.
Now we have, using \eqref{eq:mmddecomp}:
\begin{align}
&\|\lythr\|_{\cH_r \otimes \cH_r} = \nonumber \\
&\|(\GP^* f^Y_t - f^h_t) \otimes (\GP^* f^Y_t - f^h_t)  + \GP^* f^\eta_t \otimes \GP^* f^\eta_t\|_{\cH_r \otimes \cH_r} \leq \label{eq:triineqmmd} \\
&\| \GP^* f^Y_t - f^h_t \|^2_{\cH_r} + \| \GP^* f^\eta_t \|^2_{\cH_r}  \label{eq:mmdineq1} \leq \\
&2 \|\GP^* f^Y_t\|^2_{\cH_r} + 2\|f^h_t \|^2_{\cH_r}  + \| \GP^* f^\eta_t \|^2_{\cH_r} \label{eq:mmdineq2} \leq \\
&  \|\GP^*\|^2_{HS} \left(2\|f^Y_t\|^2_{\cH_x} + \|f^\eta_t\|^2 _{\cH_x} \right)+ 2\|f^h_t \|^2_{\cH_r} =  \label{eq:mmdeq1}\\
&  \|\GP\|^2_{HS} \left(2\|f^Y_t\|^2_{\cH_x} + \|f^\eta_t\|^2 _{\cH_x} \right)+ 2\|f^h_t \|^2_{\cH_r} \leq \label{eq:mmdineq3}\\
&2 K_\Phi^2 (K^2 + M^2) + 2 b^2. \nonumber
\end{align}
Inequality \eqref{eq:triineqmmd} is by the norms given above and the triangle inequality.
Inequality \eqref{eq:mmdineq1} is because for any Hilbert space $\cH$, $\|a-b\|^2_\cH \leq 2 \|a\|_\cH^2 + 2 \|b\|_\cH^2$. Inequality \eqref{eq:mmdineq2} is by the definition of the operator norm. Equality \eqref{eq:mmdeq1} is because the norm of the adjoint operator is equal to the norm of the original operator, where we abused the notation $\|\cdot \|_{HS}$ to mean both the norm of operators from $\cH_x$ to $\cH_r$ and vice-versa. Finally, inequality \eqref{eq:mmdineq3} is by Assumptions \ref{asmp:yinrkhs}, \ref{asmp:etainrkhs} and \ref{asmp:phiinrkhs}, and by the Lemma's premise on the norm of $f^h_T$.
\end{proof}

\begin{thmapplem}\label{thm:mmdA}
Let $u = p(t=1)$ be the marginal probability of treatment, and assume $0<u<1$. Assume the distribution of $Y_t$ conditioned on $x$ follows Assumptions \ref{asmp:etainrkhs} with constant $M$.
Let $\Phi : \cX \rightarrow \cR$ be a one-to-one representation function which obeys Assumption \ref{asmp:phiinrkhs} with corresponding operator $\GP$ with operator norm $K_\Phi$. Let the functions $Y_0$, $Y_1$ obey Assumption \ref{asmp:yinrkhs}, with bounded Hilbert space norm $K$ . Let $h : \cR \times \{0,1\} \rightarrow \R$ be an hypothesis, and assume that there exist $f^h_t \in \cH_r$ such that $h(r,t) = \left<f^h_t,k_r(r,\cdot)\right>_{\cH_r}$, such that $\|f^h_t\|_{\cH_r} \leq b$. Assume that $\epsilon_F$ and $\epsilon_{CF}$ are defined with respect to $L$ being the squared loss. Then:
\begin{align}\label{eq:thm_mmdA}
&\epsilon_{CF}(h,\Phi) \leq \nonumber\\
&\quad (1-u)  \epsilon^{t=1}_F(h,\Phi) + u  \epsilon^{t=0}_F(h,\Phi)  + \nonumber\\
& \quad 2 \left(K_\Phi^2 (K^2 +M^2) + b^2\right)  \cdot \text{MMD}(\pt_\Phi\, ,  \pc_\Phi),\nonumber\\
\end{align}
where $\epsilon_{CF}$ and $\epsilon_F$ use the squared loss.
\end{thmapplem}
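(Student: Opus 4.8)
The plan is to derive this as an immediate specialization of the general IPM bound of Lemma~\ref{lem:genA}, choosing the function family $\cF$ to be the unit ball of the tensor-product RKHS $\cH_r \otimes \cH_r$ --- equivalently, the unit ball of the RKHS whose reproducing kernel is $\tilde k(r,r') := k_r(r,r')^2$, which is a valid kernel by the Schur product theorem. The integral probability metric induced by this $\cF$ is precisely the MMD distance appearing in the statement, so once the premise of Lemma~\ref{lem:genA} is verified, the conclusion is essentially automatic.

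First I would apply Lemma~\ref{lem:mmdbound}: under Assumptions~\ref{asmp:yinrkhs}, \ref{asmp:etainrkhs} and \ref{asmp:phiinrkhs}, together with the hypothesis $\|f^h_t\|_{\cH_r} \le b$, the per-unit squared loss $\lythr$, viewed as a function of $r$ for fixed $t$, lies in $\cH_r \otimes \cH_r$ with norm at most $2\left(K_\Phi^2(K^2+M^2) + b^2\right)$ --- the bound actually established in the \emph{proof} of that lemma, which is sharper than the constant quoted in its displayed statement. Setting $B_\Phi := 2\left(K_\Phi^2(K^2+M^2) + b^2\right)$, we then have $\frac{1}{B_\Phi}\lythr \in \cF$ for $t=0,1$, which is exactly the premise required by Lemma~\ref{lem:genA}. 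Invoking that lemma with this $\cF$ and this $B_\Phi$, and identifying $\text{IPM}_\cF(\pt_\Phi,\pc_\Phi) = \text{MMD}(\pt_\Phi,\pc_\Phi)$, yields inequality~\eqref{eq:thm_mmdA}. (The corresponding PEHE bound then follows by plugging this into Theorem~\ref{thm:indtausqloss}, exactly as in the Wasserstein case following Lemma~\ref{thm:wasA}.)

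I do not expect a serious obstacle here --- the two component lemmas do all the heavy lifting --- so the only points that need care are bookkeeping ones. One must be explicit that the relevant function class is the unit ball of the \emph{tensor-product} space $\cH_r \otimes \cH_r$ (equivalently, the RKHS of the squared kernel $k_r^2$) rather than of $\cH_r$ itself, so the ``MMD'' in the statement is understood with respect to $k_r^2$; one must observe that $g \in \cF \implies -g \in \cF$, so the absolute value in the definition of the IPM can be dropped just as in Lemma~\ref{thm:wasA}; and one must use the constant $B_\Phi$ extracted from the proof of Lemma~\ref{lem:mmdbound} rather than the weaker one in its statement. With these caveats in place, Lemma~\ref{thm:mmdA} is a one-line corollary of Lemmas~\ref{lem:genA} and~\ref{lem:mmdbound}.
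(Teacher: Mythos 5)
Your proposal is correct and is essentially identical to the paper's own proof: it invokes Lemma~\ref{lem:genA} with $\cF$ the unit ball of $\cH_r \otimes \cH_r$ and uses the constant $B_\Phi = 2\left(K_\Phi^2(K^2+M^2)+b^2\right)$ established in the proof of Lemma~\ref{lem:mmdbound}. Your bookkeeping remarks (that the MMD is with respect to the tensor-product/squared kernel, and that the constant comes from the proof of Lemma~\ref{lem:mmdbound} rather than its displayed statement) are accurate clarifications of points the paper passes over silently.
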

\begin{proof}
We will apply Lemma \ref{lem:genA} with $\cF = {f \in \cH_r \otimes \cH_r \text{ s.t. } \|f\|_{\cH_r \otimes \cH_r} \leq 1}$. By Lemma \ref{lem:mmdbound}, we have that for $B_\Phi = 2\left(K_\Phi^2 (K^2+M^2) + b^2\right)$ and $L$ being the squared loss, $\frac{1}{B_\Phi} \lythr \in \cF$. Inequality \eqref{eq:thm_mmdA} then holds as a special case of Lemma \ref{lem:genA}.
\end{proof}

\begin{thmappthm}
Under the assumptions of Lemma \ref{thm:mmdA}, using the squared loss for $\epsilon_F$, we have:
\begin{align*}
&\epehe(h,\Phi) \leq \nonumber \\
&2 \epsilon^{t=0}_F(h,\Phi)  + 2\epsilon^{t=1}_F(h,\Phi)  - 4\sigma^2_Y +  \nonumber \\
&4 \left(K_\Phi^2 (K^2 +M^2) + b^2\right)  \cdot \text{MMD}( \pt_\Phi \, , \pc_\Phi) .
\end{align*}
\end{thmappthm}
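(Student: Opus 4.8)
The plan is to obtain this bound as an immediate specialization of the general squared-loss PEHE bound in Theorem \ref{thm:indtausqloss} to the MMD, in exact parallel with the Wasserstein case treated just above. The first inequality of Theorem \ref{thm:indtausqloss}, namely $\epehe(h,\Phi) \le 2\bigl(\epsilon_{CF}(h,\Phi) + \epsilon_F(h,\Phi) - 2\sigma^2_Y\bigr)$, holds for any one-to-one $\Phi$ and any hypothesis $h$ under the squared loss and makes no reference to any particular IPM family, so it applies verbatim here.

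Next I would control $\epsilon_{CF}(h,\Phi)$ using Lemma \ref{thm:mmdA}: under the present assumptions (RKHS-representability of $m_0,m_1$ with norm bound $K$; of the conditional standard deviations $\eta_{Y_t}$ with norm bound $M$; of $h$ with norm bound $b$; and the bounded-linear-operator form of $\Phi$ with operator norm $K_\Phi$), Lemma \ref{lem:mmdbound} shows that $\tfrac{1}{B_\Phi}\lythr$ lies in the unit ball of the tensor-product RKHS $\cH_r \otimes \cH_r$ for $B_\Phi = 2\bigl(K_\Phi^2(K^2+M^2)+b^2\bigr)$, and hence Lemma \ref{lem:genA} gives $\epsilon_{CF}(h,\Phi) \le (1-u)\epsilon_F^{t=1}(h,\Phi) + u\,\epsilon_F^{t=0}(h,\Phi) + B_\Phi\,\text{MMD}(\pt_\Phi,\pc_\Phi)$, where $\text{MMD}$ is the IPM associated with that tensor-product unit ball. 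Adding the decomposition $\epsilon_F(h,\Phi) = u\,\epsilon_F^{t=1}(h,\Phi) + (1-u)\epsilon_F^{t=0}(h,\Phi)$ from Lemma \ref{lem:epsdecopm}, the $u$-weighted cross terms cancel and I obtain $\epsilon_{CF}(h,\Phi) + \epsilon_F(h,\Phi) \le \epsilon_F^{t=0}(h,\Phi) + \epsilon_F^{t=1}(h,\Phi) + B_\Phi\,\text{MMD}(\pt_\Phi,\pc_\Phi)$.

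Finally I would substitute this into the first inequality of Theorem \ref{thm:indtausqloss} and multiply through by two, giving $\epehe(h,\Phi) \le 2\epsilon_F^{t=0}(h,\Phi) + 2\epsilon_F^{t=1}(h,\Phi) + 2B_\Phi\,\text{MMD}(\pt_\Phi,\pc_\Phi) - 4\sigma^2_Y$; plugging in $2B_\Phi = 4\bigl(K_\Phi^2(K^2+M^2)+b^2\bigr)$ yields the stated bound. Since every ingredient is already in place, there is no genuinely hard step; the only point requiring care is bookkeeping of which Hilbert space the IPM is taken over — here the MMD is with respect to the tensor-product kernel $k_r\otimes k_r$ on $\cH_r\otimes\cH_r$, which is precisely the space in which Lemma \ref{lem:mmdbound} places the per-unit squared-loss function — together with checking that the hypotheses invoked to justify the constant $B_\Phi$ are exactly the assumptions of Lemma \ref{thm:mmdA}, so that the substitution is legitimate.
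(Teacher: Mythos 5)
Your proposal is correct and follows exactly the paper's own (one-line) proof: plug the counterfactual-loss bound of Lemma \ref{thm:mmdA} into the general squared-loss PEHE bound of Theorem \ref{thm:indtausqloss}, with the doubling of $B_\Phi = 2\left(K_\Phi^2(K^2+M^2)+b^2\right)$ accounting for the factor $4$ in front of the MMD term. The extra bookkeeping you supply (the cancellation of the $u$-weighted terms via Lemma \ref{lem:epsdecopm}, and the identification of the relevant unit ball in $\cH_r \otimes \cH_r$) is precisely what the paper's cited lemmas already establish, so nothing is missing.
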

\begin{proof}
Plug in the upper bound of Lemma \ref{thm:mmdA} into the upper bound of Theorem \ref{thm:indtausqloss}.
\end{proof}

\section{Algorithmic details}\label{sec:appmodel}

We give details about the algorithms used in our framework. 

%

\subsection{Minimizing the Wasserstein distance}
In general, computing (and minimizing) the Wasserstein distance involves solving a linear program, which may be prohibitively expensive for many practical applications. \citet{cuturi2013sinkhorn} showed that an approximation based on entropic regularization can be obtained through the Sinkhorn-Knopp matrix scaling algorithm, at orders of magnitude faster speed. Dubbed Sinkhorn distances, the approximation is computed using a fixed-point iteration involving repeated multiplication with a kernel matrix $K$. We can use the algorithm of \citet{cuturi2013sinkhorn} in our framework. See Algorithm~\ref{alg:wassgrad} for an overview of how to compute the gradient $g_1$ in Algorithm~\ref{alg:model}. When computing $g_1$, disregarding the gradient $\nabla_{\bf W} T^*$ amounts to minimizing an upper bound on the Sinkhorn transport. More advanced ideas for stochastic optimization of this distance have recently proposed by \citet{aude2016stochastic}, and might be used in future work.

\begin{algorithm}[tbp]
\caption{Computing the stochastic gradient of the Wasserstein distance}
\label{alg:wassgrad}
\begin{algorithmic}[1]
  \STATE \textbf{Input:} Factual  $(x_1,t_1,y_1), \ldots , (x_n,t_n,y_n)$, representation network $\Phi_{\bf{W}}$ with current weights by $\bf{W}$
  \STATE Randomly sample a mini-batch with $m$ treated and $m'$ control units $(x_{i_1},0,y_{i_1}), \ldots , $\\
  $(x_{i_m},0,y_{i_m}),  (x_{i_{m+1}},1,y_{i_{m+1}}), \ldots , (x_{i_{2m}},1,y_{i_{2m}}) $
   \STATE Calculate the $m \times m$ pairwise distance matrix between all treatment and control pairs $M(\Phi_{\bf{W}})$: \\
   $M_{kl}(\Phi) = \|\Phi_{\bf{W}}(x_{i_k}) - \Phi_{\bf{W}}(x_{i_{m+l}})\|$
    \STATE Calculate the approximate optimal transport matrix $T^*$ using Algorithm 3 of \citet{cuturi2014fast}, with input $M(\Phi_{\bf{W}})$
  \STATE Calculate the gradient:\\ $g_1 = \nabla_{\bf{W}} \left< T^*,M(\Phi_{\bf{W}})\right>$
  \vspace{0.2em}
\end{algorithmic}
\end{algorithm}


While our framework is agnostic to the parameterization of $\Phi$, our experiments focus on the case where $\Phi$ is a neural network. For convenience of implementation, we may represent the fixed-point iterations of the Sinkhorn algorithm as a recurrent neural network, where the states $u_t$ evolve according to
$$
u_{t+1} = n_t ./ (n_c K (1./(u_t^\top K)^\top))~.
$$
Here, $K$ is a kernel matrix corresponding to a metric such as the euclidean distance, $K_{ij} = e^{-\lambda\|\Phi(x_i) - \Phi(x_j)\|_2}$, and $n_c, n_t$ are the sizes of the control and treatment groups. In this way, we can minimize our entire objective with most of the frameworks commonly used for training neural networks, out of the box.

\subsection{Minimizing the maximum mean discrepancy}

The MMD of treatment populations in the representation $\Phi$, for a kernel $k(\cdot,\cdot)$ can be written as,
\begin{align}
\text{MMD}_k(\{\Phi_{\bf{W}}(x_{i_j})\}_{j=1}^m, \{\Phi_{\bf{W}}(x_{i_{k}})\}_{k=m+1}^{m'} ) = \\
\frac{1}{m(m-1)}\sum_{j=1}^m \sum_{k=1,k\neq j}^{m} k(\Phi_{\bf{W}}(x_{i_j}), \Phi_{\bf{W}}(x_{i_k})) \\
+ \frac{2}{mm'}\sum_{j=1}^m \sum_{k=m}^{m+m'} k(\Phi_{\bf{W}}(x_{i_j}), \Phi_{\bf{W}}(x_{i_k})) \\
+ \frac{1}{m'(1-m')}\sum_{j=1}^m \sum_{k=m,k\neq j}^{m'} k(\Phi_{\bf{W}}(x_{i_j}), \Phi_{\bf{W}}(x_{i_k}))
\end{align}


The linear maximum-mean discrepancy can be written as a distance between means. In the notation of Algorithm~\ref{alg:model},
$$
\text{MMD} = 2\left\| \frac{1}{m}\sum_{j=1}^m \Phi_{\bf{W}}(x_{i_j}) -\frac{1}{m'}\sum_{k=m+1}^{m'} \Phi_{\bf{W}}(x_{i_k})\right\|_2
$$
Let
$$
{\bf f}({\bf W}) = \frac{1}{m}\sum_{j=1}^m \Phi_{\bf{W}}(x_{i_j}) - \frac{1}{m'}\sum_{k=m+1}^{m+m'} \Phi_{\bf{W}}(x_{i_k})
$$
Then the gradient of the MMD with respect to $\bf{W}$ is,
$$
g_1 = 2 \frac{d \bf{f}(\bf{W})}{d\bf{W}} \frac{\bf{f}(\bf{W})}{\|\bf{f}(\bf{W})\|_2} ~.
$$

\section{Experimental details}\label{sec:appexp}

\subsection{Hyperparameter selection}
Standard methods for hyperparameter selection, such as cross-validation, are not generally applicable for estimating the PEHE loss since only one potential outcome is observed (unless the outcome is simulated). For real-world data, we may use the observed outcome $y_{j(i)}$ of the nearest neighbor $j(i)$ to $i$ in the opposite treatment group, $t_{j(i)} = 1 - t_i$ as surrogate for the counterfactual outcome. We use this to define a nearest-neighbor approximation of the PEHE loss, $\epehenn(f) = \frac{1}{n}\sum_{i=1}^n \left((1-2t_i)(y_{j(i)} - y_i) - (f(x_i,1) - f(x_i,0))\right)^2~$. On IHDP, we use the objective value on the validation set for early stopping in CFR, and $\epehenn(f)$ for hyperparameter selection. On the Jobs dataset, we use the policy risk on the validation set.

See Table~\ref{tbl:hypparams} for a description of hyperparameters and search ranges.

\begin{table}[t!]
  \caption{\label{tbl:hypparams}Hyperparameters and ranges.}
  \begin{center}
      \begin{tabular}{ll}
        Parameter & Range \\ \hline
        Imbalance parameter, $\alpha$ & $\{10^{k/2}\}_{k=-10}^6$ \\
        Num. of representation layers & $\{1,2,3\}$ \\
        Num. of hypothesis layers & $\{1,2,3\}$ \\
        Dim. of representation layers & $\{20, 50, 100, 200\}$ \\
        Dim. of hypothesis layers & $\{20, 50, 100, 200\}$ \\
        Batch size & $\{100, 200, 500, 700\}$ \\
        \hline
      \end{tabular}
    \end{center}
\end{table}

\subsection{Learned representations}
Figure~\ref{fig:reps} show the representations learned by our CFR algorithm.

\begin{figure*}[]
  \centering
  \begin{subfigure}[b]{0.3\textwidth}
    \centering
    \includegraphics[width=0.90\columnwidth]{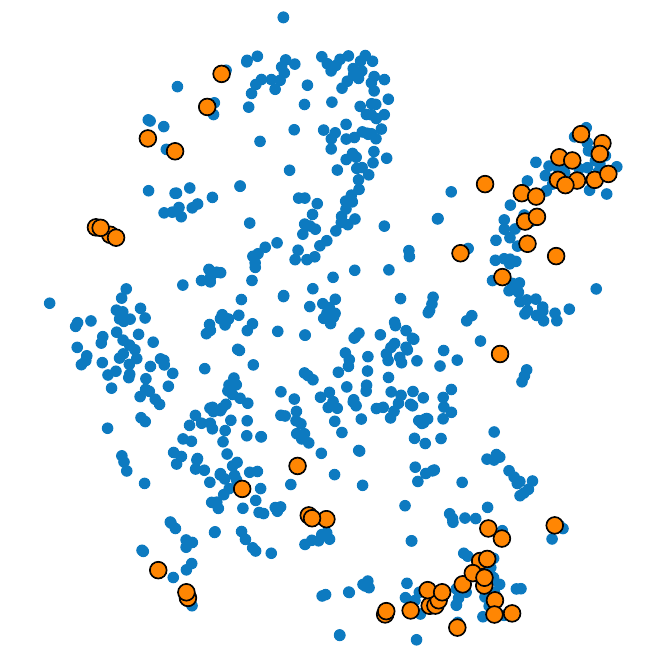}
    \caption{Original data}
  \end{subfigure}
  \begin{subfigure}[b]{0.3\textwidth}
    \centering
    \includegraphics[width=0.90\columnwidth]{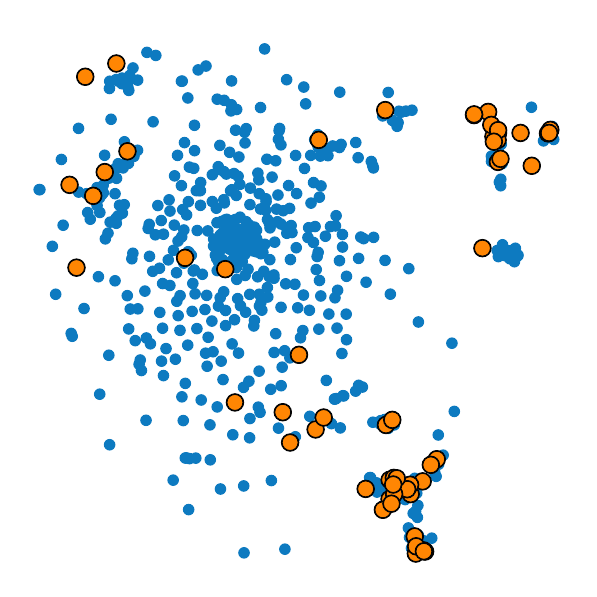}
    \caption{Linear MMD}
  \end{subfigure}
  \begin{subfigure}[b]{0.3\textwidth}
    \centering
    \includegraphics[width=0.90\columnwidth]{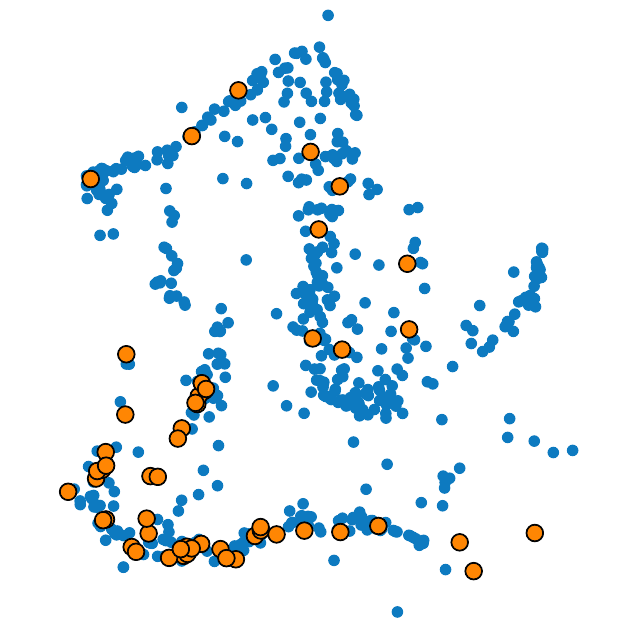}
    \caption{Wasserstein}
  \end{subfigure}
  \caption{\label{fig:reps}t-SNE visualizations of the balanced representations of IHDP learned by our algorithms CFR, CFR MMD and CFR Wass. We note that the nearest-neighbor like quality of the Wasserstein distance results in a strip-like representation, whereas the linear MMD results in a ball-like shape in regions where overlap is small.}
\end{figure*}

\subsection{Absolute error for increasingly imbalanced data}
Figure~\ref{fig:ihdp_abs_imb} shows the results of the same experiment as Figure 2 of the main paper, but in absolute terms.
\begin{figure}[t]
  \centering
  \includegraphics[width=0.90\columnwidth]{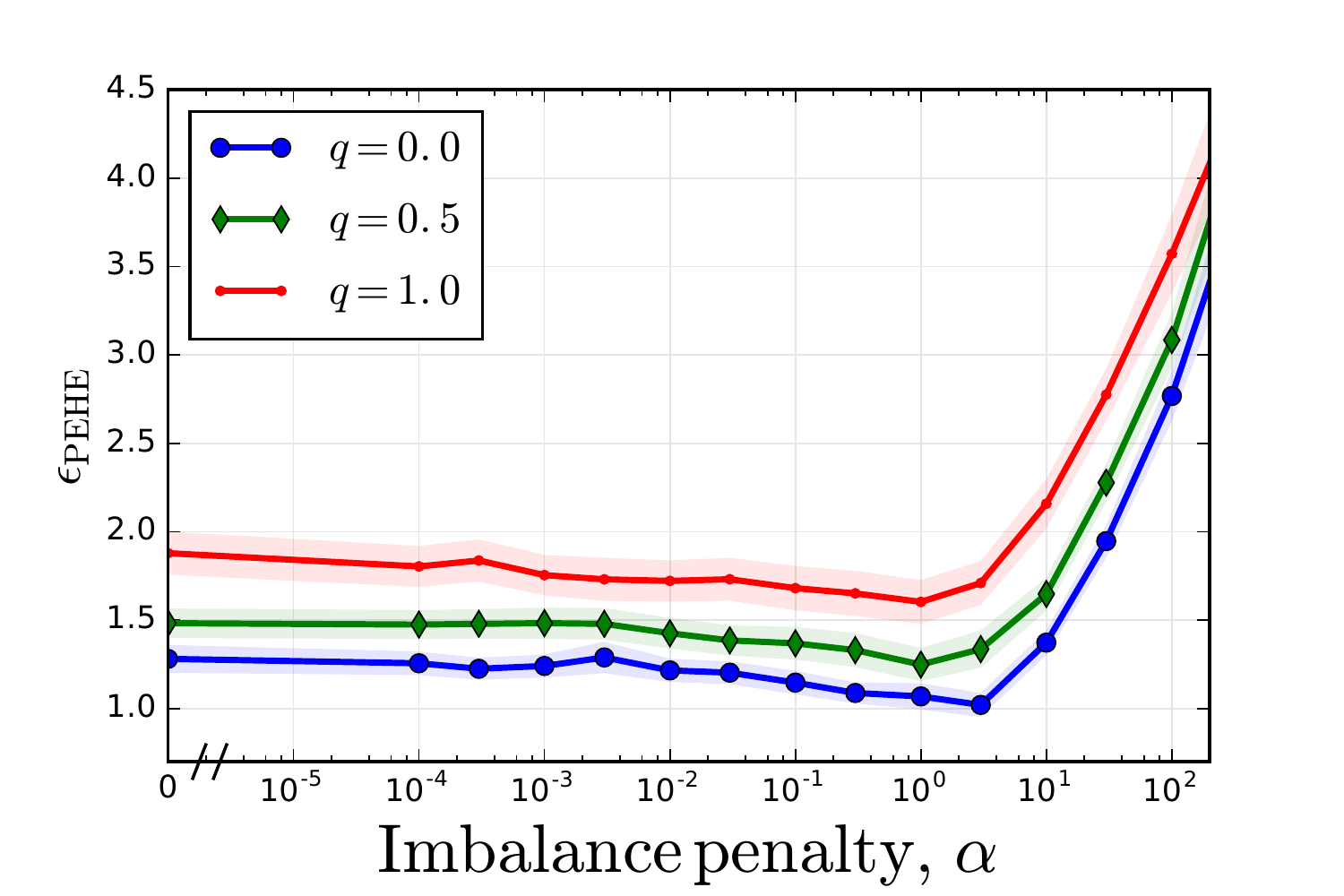}\vspace{-1em}
  \caption{\label{fig:ihdp_abs_imb}Out-of-sample error in estimated ITE, as a function of IPM regularization parameter for CFR Wass, on 500 realizations of IHDP, with high ($q=1$), medium and low (artificial) imbalance between control and treated. }
  \vspace{-1em}
\end{figure}

\end{document}